\documentclass[12pt]{article}
\pdfoutput=1
\usepackage{graphicx,psfrag,epsf}
\usepackage{enumerate}
\usepackage{natbib}
\usepackage{hyperref}
\usepackage{url} 

\newcommand{\blind}{1}

\usepackage[margin={1in,1in},includefoot]{geometry}

\usepackage{longtable}
\usepackage{graphicx}
\usepackage{dcolumn}
\usepackage{amsmath,amsfonts}
\usepackage{amsthm,amssymb}
\usepackage{dsfont,bbold}
\usepackage{bm}
\usepackage{courier}
\usepackage{color}
\usepackage{array}
\usepackage{comment}
\usepackage{enumerate}
\usepackage{mathrsfs}
\usepackage{multirow}
\usepackage{calc}
\usepackage{amscd}
\usepackage[all]{xy}
\usepackage{verbatim}
\usepackage{chngpage}
\usepackage[ruled]{algorithm2e}
\usepackage{lscape}
\usepackage{siunitx}
\usepackage{float}
\usepackage{booktabs}
\usepackage{multirow}
\usepackage{subcaption}
\usepackage{numprint}
\usepackage{siunitx}


\newtheorem{theorem}{Theorem}
\newtheorem{assumption}{Assumption}
\newtheorem{lemma}{Lemma}
\newtheorem{corollary}{Corollary}
\newtheorem{example}{Example}

\DeclareMathOperator*{\argmax}{argmax}
\DeclareMathOperator*{\argmin}{argmin}

\definecolor{brown}{rgb}{0.59,0.29,0}

\begin{document}

\def\spacingset#1{\renewcommand{\baselinestretch}%
{#1}\small\normalsize} \spacingset{1}


\if1\blind
{
  \title{\bf An Actor-Critic Contextual Bandit Algorithm for Personalized Mobile Health Interventions}
  \author{Huitian Lei\\
    Amazon \\
    \\
    Yangyi Lu\\
    Department of Statistics, \\ University of Michigan \\
    \\
    Ambuj Tewari \\
    Department of Statistics, \\
    Department of Electrical Engineering and Computer Science,\\
    University of Michigan\\
    \\
    Susan A. Murphy \thanks{
    The authors gratefully acknowledge funding from the National Institutes of Health grants,  R01HL125440, R01AA023187, P50DA039838, U54EB020404, NSF CAREER grant IIS-1452099 and Sloan Research Fellowship. Part of this work was done when HL was a graduate student at the University of Michigan.}\hspace{.2cm}\\
    Department of Statistics, \\
    Department of Computer Science, \\
    Radcliffe Institute, Harvard University}
  \maketitle
} \fi

\if0\blind
{
  \bigskip
  \bigskip
  \bigskip
  \begin{center}
    {\Large\bf An Actor-Critic Contextual Bandit Algorithm \\for Personalized Mobile Health Interventions}
\end{center}
  \medskip
} \fi

\newpage

\begin{abstract}
Increasing technological sophistication and widespread use of smartphones and wearable devices provide opportunities for innovative and highly personalized health interventions. A Just-In-Time Adaptive Intervention (JITAI)  uses real-time data collection and communication capabilities of modern mobile devices to deliver interventions in real-time that are adapted to the in-the-moment needs of the user. The lack of methodological guidance in constructing data-based JITAIs remains a hurdle in advancing JITAI research despite the increasing popularity of JITAIs among clinical scientists. In this article, we attempt to bridge this methodological gap by formulating the task of tailoring interventions in real-time as a contextual bandit problem. Interpretability requirements in the domain of mobile health lead us to formulate the problem differently from existing formulations intended for web applications such as ad or news article placement. Under the assumption of linear reward function, we choose the reward function (the ``critic") parameterization separately from a lower dimensional parameterization of stochastic policies (the ``actor"). We provide an online actor-critic algorithm that guides the construction and refinement of a JITAI. Asymptotic properties of the actor-critic algorithm are developed and backed up by numerical experiments. Additional numerical experiments are conducted to test the robustness of the algorithm when idealized assumptions used in the analysis of contextual bandit algorithm are breached. 
\end{abstract}

\noindent%
{\it Keywords:}  mobile health, just-in-time adaptive interventions, contextual bandit problems, bandit problems with covariates, actor-critic learning algorithms
\vfill

\newpage
\spacingset{1.45} 

\section{Introduction}
\label{intro}


Equipped with sophisticated sensing, communication and computation capabilities, smartphones and mobile devices are being increasingly used to deliver Just-In-Time Adaptive Interventions (JITAIs). JITAIs are mobile health interventions where treatment is delivered in real time to individuals as they go about their daily lives.  A key ingredient of a JITAI is a \emph{policy}, that is, a decision rule that inputs sensor and self-report information at any given decision point and outputs a decision. The decision can be whether or not to provide treatment or the type of treatment to be provided.  The use of decision rules to adapt the type and timing of treatment delivery to the individual makes JITAIs particularly promising in facilitating {\em long-term} health behavior change, a pressing but notoriously hard problem (\cite{nahum2018just}). Indeed JITAIs have received increasing popularity and have been used to support health behavior change in a variety of domains including physical activity (\cite{consolvo2008activity, king2013harnessing,muller2017conceptualization}), eating disorders (\cite{bauer2010enhancement}), drug abuse (\cite{scott2009results,carpenter2020developments}), alcohol use (\cite{gustafson2011explicating, suffoletto2012text,witkiewitz2014development}), smoking cessation (\cite{riley2011health}), obesity and weight management (\cite{patrick2009text,thomas2015behavioral}), and other chronic disorders~\citep{richardson2020mhealth}. 

Despite the growing popularity of JITAIs, there is a lack of guidance concerning how to best learn a high-quality evidence-based JITAIs in an \lq\lq online\rq\rq\  setting. That is, learning occurs in a sequential manner as a given user experiences the treatments and sensor/self-report data, including health outcomes of interest, are collected. Ideally, the policy we learn for a given user should take into account the specific way he or she responds to the delivered treatments and is thus \emph{personalized} to the user. However, most of the JITAIs used in existing clinical trials are specified a priori and are based primarily on domain expertise.  The main contribution of this article is to take a step towards bridging the gap between the enthusiasm for JITAIs in the mobile health field and the current lack of statistical methodology to guide the online construction of a personalized policy for a user. We model the learning of a user-specific optimal policy as a contextual bandit problem (\cite{woodroofe1979one, langford2008epoch, li2010contextual,tewari2017ads}). A contextual bandit problem, also called a bandit problem with  side-information, is a sequential decision making problem where a learning algorithm, (i) chooses an action (e.g., treatment) at each time point based on the \emph{context} or side information, and (ii) receives an reward that reflects the quality of the action under the current context. In mobile health settings, the context can include summaries of the sensor and self-report data available at each time point. The goal of the algorithm is to learn the optimal policy, that is, the  policy that maximizes a regularized average reward for a user. We propose an online ``actor-critic'' algorithm for learning the optimal policy. Compared to offline learning, in online learning the contexts and rewards arrive in a sequential fashion and the estimate of the optimal policy is updated as data accumulates. The updated policy is used to choose the treatment action at the subsequent time point. In our actor-critic algorithm, the critic estimates parameters in a model for the conditional mean of the reward given context and action. The actor then updates the estimated optimal policy based on the estimated reward model. Under idealized assumptions, we derive consistency and asymptotic normality of the estimates produced by our algorithm.

Our work is motivated by our collaboration on HeartSteps (\cite{klasnja_microrandomized_2015, dempsey2015randomised}). In the HeartSteps project, the second and third of three studies will involve the use of a online learning algorithm for constructing personalized policies; the algorithm presented here represents our first step in developing the learning algorithm.  The goal of the HeartSteps project is to reduce sedentary behavior and increase physical activity in individuals who have experienced a cardiac event and been in cardiac rehab. The current version of HeartSteps involves data collection both via a smartphone as well as wristband sensor. A variety of sensor and self-report data is available at each time point, including step count, GPS location, weather, time of the day, day of the week and user calendar busyness. The current version of HeartSteps can deliver a treatment (an activity suggestion) at any of 5 time points per day via an audible ping and a notification on the smartphone lock screen. 

This article is organized as follows. In Section \ref{form}, we formulate online learning of a policy for a given user as a contextual bandit problem and define what we mean by an optimal policy. Due to the concern that deterministic policies may habituate users to treatments, thereby causing them to ignore treatment, our definition of optimality is different from the ones found in most existing contextual bandit papers. In Section \ref{alg}, we present an actor-critic contextual bandit algorithm for learning the optimal policy. In Section \ref{theory}, we derive consistency and asymptotic normality of the estimates produced by our algorithm. We also use these results to derive regret bounds for our algorithm. In Section \ref{numerical}, we present a comprehensive simulation study to investigate the performance of the actor-critic algorithm under various simulation settings including settings which violate the usual assumptions underpinning contextual bandit algorithms.

\section{Learning JITAIs as a Contextual Bandit Problem}\label{form}

We formulate the online learning of optimal policy for a given user as a \emph{stochastic} contextual bandit problem. A contextual bandit problem is specified by a quadruple  $(\mathcal{S}, d, \mathcal{A}, r)$, where $\mathcal{S}$ is the context space, $d$ is a probability distribution on the context space, $\mathcal{A}$ is the action space and $r$ is the  reward space. 
At a decision point $t$, the online learning algorithm collects the context $S_t \in \mathcal{S}$, take an action $A_t \in \mathcal{A}$ after which a reward $R_t\in r$ is revealed before the next decision point. The algorithm only gets to observe the reward corresponding to the action taken; it does not have access to the rewards that would have been generated given all other actions. The sequence of tuples $\{(S_{\tau}, A_{\tau}, R_{\tau})\}_{\tau=1}^{t}$ summarizes all information available to the algorithm prior to decision point $t+1$.

For most JITAI applications, interventions are expected to have an impact on the reward but little or no impact on the context distribution at the next decision point. In our HeartSteps example, an encouraging message shown on an user's lock screen will likely increase his/her steps in the following hours. We do not expect, however, the message to drastically change the context, such as weather and time of the day, at the next time an intervention is generated. In fact, interventions in a JITAI are sometimes referred to as ``Ecological Momentary Interventions" (EMIs) or ``micro-interventions". The naming emphasizes that the effects of many interventions in this domain are short-lived in nature. Based on the momentary nature of JITAI intervention effects, we make the following assumption.
\begin{assumption}
[i.i.d.\ contexts] Action $A_t$ has a in-the-moment effect on the reward $R_t$ with expected reward function:
$$
\mathbb{E}\left( R_t | S_t = s, A_t = a \right) =r(s,a) .
$$
However $A_t$ does not affect the distribution of $S_{\tau}$ for $\tau \geq t+1$. We further assume that contexts $S_{t}$ are i.i.d. with probability density function $d(s)$.
\label{assumption_iid}
\end{assumption}


A \emph{(stochastic) policy} is a mapping from the context space to (a probability distribution over) the action space. Policies in JITAI are used to specify (the probability of) an action given a context. In this article, we focus on a binary action space $\mathcal{A}=\{0,1\}$ and a class of parametrized stochastic policies for which the probability of taking action 1 given context $S=s$ is parameterized as $\pi_{\theta}(A=1|S=s) =\frac{e^{g(s)^T\theta}}{1+ e^{g(s)^T\theta}}$. 
Here $g(s)$ is a $p$-dimensional policy feature vector that contains candidate variables (and their transformations) useful for decision making. A big advantage of using a class of parametrized policies is the transparency on how each variable in $g(s)$ influences the choice of action: the influences are reflected by the sign and magnitude of the corresponding components in $\theta$. Confidence intervals for and hypothesis testing on the optimal $\theta$ help answer scientific questions on the usefulness of a particular contextual variable for decision making. 
For example, suppose the scientist includes a GPS location based variable as a candidate variable in the policy, yet the confidence interval for the $\theta$ coefficient of this variable turns out to contain $0$. Then we might omit the sensing of this variable in future because continuously sensing GPS location on smartphones drains the battery.   Similarly,  self-reported measures on user's emotional states induce user burden. Therefore, if the confidence interval for the $\theta$ coefficients of these variables contains $0$ we may reduce user burden by omitting their collection.

\subsection{The Regularized Average Reward}\label{sec_reg_reward}

It is well-known that exploration (see, e.g., \cite{audibert2009exploration}) is essential to learning optimal treatment policy: by assigning non-zero probability to each action in action space, exploration prevents the algorithm from being trapped to a suboptimal policy. However it turns out that standard definitions of optimality often lead to deterministic policies. For example, a natural and intuitive definition of an optimal policy is a policy that maximizes the average reward:
$$V^*(\theta)=\int_{s\in \mathcal{S}}d(s)\sum_{a\in\mathcal{A}}r(s,a) \pi_{\theta}(s,a)ds,$$
where $d(s)$ is the probability density function of context. The following lemma shows that, in a simple setting where the context space is one-dimensional and finite, there always exists a deterministic optimal policy. The proof of this lemma is provided in the supplementary material section \ref{sup_lemma1}.

\begin{lemma}
Suppose that the context space is discrete and finite, $\mathcal{S}= \{s_1, s_2, ..., s_K\}$. Among the policies parameterized as $\pi_{\theta}(s,1) =  \frac{e^{\theta_0+\theta_1s}}{1+e^{\theta_0+\theta_1s}}$, there exists a policy that maximizes $V^*(\theta)$ for which $P(\pi_{\theta}(S,1)=0\mbox{ or }1)=1$. 
\label{lemma_dtm}
\end{lemma}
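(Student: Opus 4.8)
The plan is to turn the maximization of $V^*$ over the logistic family into a linear program over a polytope of monotone sequences, whose vertices are deterministic policies. First I would rewrite the objective using the \emph{advantage} $\Delta_k = r(s_k,1)-r(s_k,0)$. Since $\pi_\theta(s_k,0)=1-\pi_\theta(s_k,1)$, a short calculation gives
\[
V^*(\theta)=\sum_{k=1}^K d(s_k)\,r(s_k,0)\;+\;\sum_{k=1}^K d(s_k)\,\Delta_k\,\pi_\theta(s_k,1)\;=\;C+\sum_{k=1}^K c_k\,x_k,
\]
where $C$ is a constant free of $\theta$, $c_k=d(s_k)\Delta_k$, and $x_k=\pi_\theta(s_k,1)=\sigma(\theta_0+\theta_1 s_k)$ with $\sigma(u)=e^u/(1+e^u)$. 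Thus maximizing $V^*$ is the same as maximizing the \emph{linear} functional $L(x)=\sum_k c_k x_k$ over the attainable vectors $x(\theta)$. (If all $c_k=0$ the reward ignores the action, every policy is optimal, and one may simply exhibit a deterministic one; so assume some $c_k\neq 0$.)

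The key structural step exploits that the context is one-dimensional. Ordering $s_1<\cdots<s_K$ and using that $\sigma$ is strictly increasing, the coordinates of $x(\theta)$ are \emph{monotone} in $k$: non-decreasing when $\theta_1\ge 0$ and non-increasing when $\theta_1\le 0$, and always strictly inside $(0,1)$. Hence every attainable $x(\theta)$ lies in one of the two order polytopes $P_+=\{0\le x_1\le\cdots\le x_K\le 1\}$ or $P_-=\{1\ge x_1\ge\cdots\ge x_K\ge 0\}$, giving $\sup_\theta V^*(\theta)\le C+\max\{\max_{P_+}L,\ \max_{P_-}L\}$. A non-constant linear functional over a polytope attains its maximum at a vertex, and the vertices of $P_+$ (resp.\ $P_-$) are exactly the $0/1$ monotone sequences, i.e.\ the threshold rules $x_k=\mathbf{1}\{s_k\ge\tau\}$ (resp.\ $\le$) together with the constant $0$ and $1$ policies. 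All of these are \emph{deterministic}.

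It remains to see that each such vertex is achievable as a policy. Every threshold vector is the pointwise limit of logistic policies obtained by sending $|\theta_1|\to\infty$ with the threshold $-\theta_0/\theta_1=\tau$ held fixed (and the sign of $\theta_1$ choosing the direction of monotonicity); the constant $0$ and $1$ policies arise from $\theta_1=0,\ \theta_0\to\mp\infty$. Consequently these deterministic policies belong to the closure of the logistic family, and the sandwich above becomes an equality: the supremum of $V^*$ over the family equals its value at the best threshold policy, which satisfies $P\big(\pi_\theta(S,1)\in\{0,1\}\big)=1$.

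I expect the one genuinely delicate point to be the word \emph{maximizes}: for every finite $\theta$ the logistic policy is strictly stochastic ($x_k\in(0,1)$), so the maximum is never attained inside the open family and must be read off from its closure. The cleanest way I see to make this rigorous is the polytope argument above, which also certifies that no interior, genuinely stochastic policy can beat the best threshold: since $x(\theta)$ sits in the interior of the full-dimensional polytope $P_\pm$, a non-constant $L$ cannot be maximized there. The write-up will therefore need to state the optimum as a supremum over the closure (equivalently, allow $\theta$ to take limiting values) and to dispose of the degenerate cases $c\equiv 0$ and $\theta_1=0$ separately.
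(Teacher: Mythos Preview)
Your argument is correct and follows essentially the same path as the paper's proof: both rewrite $V^*$ as a linear functional in the probability vector $(\pi_\theta(s_1,1),\dots,\pi_\theta(s_K,1))$, observe that the attainable vectors are monotone in $k$ and hence lie in the convex hull of the monotone $0/1$ sequences (the paper makes this explicit by writing each $p(\theta)$ as a convex combination of those vertices, you invoke the order-polytope structure directly), and conclude via linear programming that the optimum is attained at one of these deterministic threshold policies, each of which is a limit of logistic policies. Your explicit attention to the closure issue---that the deterministic optimum is only a limit point of the logistic family for finite $\theta$---is a subtlety the paper handles more implicitly by passing straight to the closed convex hull.
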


One way to ensure treatment variety is to introduce a {\em chance constraint} (also called a ``probabilistic constraint"; see, e.g., \cite{prekopa2013stochastic})  that ensures, with high probability over the  context distribution, that the probability of taking each treatment action under any policy we consider is bounded sufficiently away from $0$.  For binary actions the constraint has the form:
\begin{align}
P(p_0 \leq \pi_{\theta}(S,1) \leq 1-p_0) \geq 1-\alpha
\label{const_s}
\end{align}
where $0<p_0<0.5$, $0<\alpha<1$ are constants controlling the amount of stochasticity. 
The stochasticity constraint requires that, for at least $(1-\alpha)100\%$ of the contexts, there is at least $p_0$ probability to take either of the two available actions. 

Maximizing the average reward $V^*(\theta)$ subject to the stochasticity constraint (\ref{const_s}) is a chance constrained optimization problem, an active research area in recent years (\cite{nemirovski2006convex, campi2011sampling}). Solving this chance constraint problem, however, involves a major difficulty: constraint (\ref{const_s}) is, in general, a non-convex constraint on $\theta$. Moreover, the left hand side of the chance constraint is an expectation of a non-smooth indicator function. Both the non-convexity and the non-smoothness make the optimization problem computationally intractable. We circumvent this difficulty by relaxing constraint (\ref{const_s}) to a convex alternative: 
\begin{align}
\theta^T \mathbb{E}[g(S)g(S)^T] \theta=\theta^T [\int_{s\in\mathcal{S}}g(s)g(s)^Td(s)ds]\theta \leq \left(\log(\frac{p_0}{1-p_0})\right)^2 \alpha ,
\label{const_q}
\end{align}
which is obtained by bounding the probability in~\eqref{const_s} using Markov's inequality and some algebra. Since the quadratic constraint is derived using an upper bound on the original probability, it is more stringent than the chance constraint and always guarantees \emph{at least} the desired amount of treatment variety. 
Instead of solving the quadratic optimization problem that maximizes the average reward $V^*(\theta)$ subject to the quadratic constraint (\ref{const_q}), we choose to maximize the corresponding Lagrangian function. Incorporating inequality constraints by using Lagrangian multipliers has been widely used in reinforcement learning literature to solve constrained Markov decision problem (\cite{borkar2005actor, bhatnagar2012online}). Given a Lagrangian multiplier $\lambda$, the following Lagrangian function: 
\begin{align}
J^*_{\lambda}(\theta)=\int_{s\in \mathcal{S}}d(s)\sum_{a\in\mathcal{A}}r(s,a) \pi_{\theta}(s,a)ds-\lambda \, \theta^T \mathbb{E}[g(S)g(S)^T] \theta
\label{reg_reward}
\end{align}
is referred to as the \emph{regularized average reward} in this article. For a fixed value of $\lambda$, we define the {\emph{optimal policy} to be the policy that maximizes the regularized average reward, namely $\theta_{\lambda}^*=\argmax J^*_{\lambda}(\theta)$. Under mild regularity conditions, we show that there is a one-to-one correspondence between quadratic constrained optimization of the average reward (using the constraint~\eqref{const_q}) and the unconstrained optimization of the regularized average reward~\eqref{reg_reward}. Details can be found in the supplementary material section \ref{sup_1to1}.
There are two computational advantages of maximizing the regularized average reward as opposed to solving a constrained optimization. First, optimizing the regularized average reward function results in a unique solution even when there is no treatment effect. When the expected reward does not depend on the treatment action, i.e., $\mathbb{E}(R|S=s,A=a)=\mathbb{E}(R|S=s)$,  all policies in the feasible set given by the constraint have the same average reward. The regularized average reward function, in contrast, has a unique maximizer at $\theta=\mathbf{0}_{p \times 1}$, a purely random policy that assigns $50\%$ probability to both actions. Therefore, maximizing the regularized average reward gives rise to a $0$ estimand when there is no treatment effect. Second, even when the uniqueness of optimal policy is not an issue, maximization of $J^*_{\lambda}(\theta)$ has computational advantages over maximization of $V^*(\theta)$ under the constraint (\ref{const_q}) because the subtraction of the quadratic term $\lambda \theta^T \mathbb{E}[g(S)g(S)^T] \theta$ introduces a degree of concavity to the surface of $J^*_{\lambda}(\theta)$, thus stabilizing the optimization.

\section{Learning the Optimal Policy: the Online Actor-Critic Algorithm}\label{alg}

In this section, we propose an online actor-critic algorithm for learning the optimal policy parameter $\theta_{\lambda}^*$. The main algorithm presented in this section uses a fixed penalty coefficient $\lambda$. For notational simplicity we will drop the dependency of optimal policy parameter on $\lambda$ and simply replace $\theta_{\lambda}^*$ with $\theta^*$. A more general algorithm that simultaneously learns the penalty coefficient and the optimal policy parameter will be presented in the numerical experiment section. The following table \ref{notations} summarizes the notations that will be used in section \ref{alg} and section \ref{theory}.

\begin{table}[H]
	\centering
	\begin{tabular}{c c}
		\hline
		Notation & Description\\
		\hline
		$S_t$ & context at time $t$ \\
		$A_t$ & action at time $t$ \\
		$R_t$ & momentary reward given $(S_t, A_t)$ \\
		$d(s)$ & probability density function for the contexts \\
		$f(s,a)$ & a k-dimensional reward feature \\
		$\mu^*$ & true reward parameter \\
		$r(s,a)$ & expected reward, i.e. $r(s,a)=f(s,a)^T \mu^*$ \\
		$\sigma^2$ & SubGaussian parameter for the error term in the linear reward model\\
		$\hat{\mu}_t$ & estimated reward parameter at time $t$ \\
		$g(s)$ & a p-dimensional policy feature \\	
		$\theta^*$ & optimal policy parameter \\
		$\hat{\theta}_t$ & estimated optimal policy parameter at time $t$ \\		
		$\zeta$ & $\mathcal{L}_2$ penalty coefficient in estimating $\hat{\mu}_t$\\
		$r_i$, $i=0,1$ & rank of matrix $\mathbb{E}\left[f(S,a_i)f(S,a_i)^T\right]$ \\
		$\lambda_{r_i}$, $i=0,1$ & smallest positive eigenvalue of $\mathbb{E}\left[f(S,a_i)f(S,a_i)^T\right]$\\
		$\lambda_p$ & smallest eigenvalue of $\mathbb{E}\left[g(S)g(S)^T\right]$\\
		$\lambda_{1,a_i}$, $i=0,1$ & largest eigenvalue of $\mathbb{E}\left[f(S,a_i)f(S,a_i)^T\right]$\\
		$J\left(\theta,\mu\right) $ & $\int_{s\in \mathcal{S}}d(s)\sum_{a\in \mathcal{A}} f(s,a)^T\mu \pi_\theta(s,a)ds - \lambda \theta^T \mathbb{E}\left[g(S)g(S)^T\right]\theta$\\
		$\tilde{J}_t(\theta,\mu)$ & $\frac{1}{t}\sum_{\tau=1}^t\sum_{a\in \mathcal{A}} f(S_\tau,a)^T \mu\pi_\theta(S_\tau,a) - \lambda \theta^T \left(\frac{1}{t}\sum_{\tau=1}^tg(S_\tau)g(S_\tau)^T\right)\theta$\\
		$\hat{J}_t(\theta,\mu)$ & $\frac{1}{t}\sum_{\tau=1}^t\sum_{a\in \mathcal{A}} r_\mu(S_\tau,a)\pi_\theta(S_\tau,a) - \lambda \theta^T \left(\frac{1}{t}\sum_{\tau=1}^tg(S_\tau)g(S_\tau)^T\right)\theta$\\
		$\theta^*$ & $\argmax_{\theta} J\left(\theta,\mu^*\right)$\\
		$\tilde{\theta}_t$ & $\argmax_{\theta}\tilde{J}_t\left(\theta,\hat{\mu}_t\right)$\\
		$\hat{\theta}_t$& $\argmax_{\theta}\hat{J}_t\left(\theta,\hat{\mu}_t\right)$\\
		\hline
	\end{tabular}
\caption{Notations}
\label{notations}
\end{table}

Actor critic algorithms~\citep{konda1999actor, bhatnagar2009natural, vamvoudakis2010online} have received a lot of attention in the reinforcement learning literature as an online approximation to policy iteration. These algorithms keep two separate parametrization: one for the expected reward function given a state-action pair (known as the Q-function in the reinforcement learning literature) and the other for the policy. Since the gradient of policy parameter depends on reward function, actor critic algorithms regularize the reward estimation and therefore reduce the variance of the policy gradient estimate. In our problem setting, we model the reward function $r(s,a)$ as a linear function (the critic step). To estimate the best policy, we maximize the estimated regularized average reward by plugging in the reward function estimate from the critic step. Note that in our simpler contextual bandit setting, we can perform a full maximization over policy parameters instead of just taking a step along the gradient.
\begin{assumption}
[Linear expected reward assumption] Given state-action pair $(s, a)$, the expected reward is a linear function $r(s,a) = f(s,a)^T\mu^*$. Here $f(s,a)$ is a k-dimensional feature vector and $\mu^*$ is an unknown reward parameter. The actual reward is generated by adding an error term on top of expected reward: $R_t = r(s,a) + \epsilon_t$. The error terms $\epsilon_t$ are i.i.d.\ with mean $0$ and finite variance $\sigma^2$. 
\label{assumption_linear}
\end{assumption}
\noindent \textbf{The critic}: the algorithm observes a stream of triples $\{(S_{\tau}, A_{\tau}, R_{\tau} )\}_{\tau=1}^t$ after decision point $t$.  We use penalized least squares to learn the reward parameter:
\begin{align}
\hat{\mu}_t=\left(\zeta I+\sum_{\tau=1}^t f(S_{\tau}, A_{\tau})f(S_{\tau}, A_{\tau})^T\right)^{-1} \sum_{\tau=1}^t f(S_{\tau}, A_{\tau}) R_{\tau}
\end{align}
where the $\zeta$ is the importance of the $L_2$ penalty. This penalty ensures that the matrix inverse in the above formula is well defined since $\sum_{\tau=1}^t f(S_{\tau}, A_{\tau})f(S_{\tau}, A_{\tau})^T$ does not have full rank when $t$ is small. 

\noindent \textbf{The actor}: the algorithm optimizes a plug-in estimation of objection function, where the state space probability density function $d(s)$ is estimated by the empirical distribution of $\{(S_{\tau}\}_{\tau=1}^t$ and expected reward given state-action pair is estimated by
\begin{eqnarray}
r_{\hat{\mu}_t}(s,a) = \left\{
\begin{array}{ll}
-2  & \mbox{ if } f(s,a)^T\hat{\mu}_t < -2 \\
f(s,a)^T\hat{\mu}_t & \mbox{ if } |f(s,a)^T\hat{\mu}_t| \leq 2 \\
2  & \mbox{ if } f(s,a)^T\hat{\mu}_t >2.
\label{reward_projection}
\end{array}
\right.  
\label{equ:est_reward}
\end{eqnarray}
Note that the clipping of the reward estimates to make them stay in the interval $[-2,2]$ is needed in our theoretical arguments and is entirely compatible with our theoretical assumptions made below. However, as we note in our numerical experiments, it does not seem to be required. The algorithm empirically appears to work fine even without the clipping.

An estimate to the aforementioned regularized average reward at time point $t$ is 
\begin{align}
\hat{J}_t(\theta,\hat{\mu}_t)=\frac{1}{t} \sum_{\tau=1}^t \sum_a r_{\hat{\mu}_t}(s,a)  \pi_{\theta}(S_{\tau},a) - \lambda \theta^T \left( \frac{1}{t}\sum_{\tau=1}^t g(S_{\tau}) g(S_{\tau})^T \right)\theta.
\label{emp_reward}
\end{align}

\noindent The actor critic algorithm, which alternates between a critic step and an actor step is depicted in Algorithm \ref{ac}. At each time point, an action is drawn using the estimated optimal policy at the previous time point. Upon receiving the reward, the critic iteratively updates matrices $A(t)$ and $B(t)$ to produce an updated estimate for the reward parameter $\mu^*$. The actor then updates the estimated regularized reward, based on which an update for the estimated optimal policy parameter is produced. 

\begin{algorithm}[H]
{\bf Inputs:} $T$, the total number of decision points; a $k$ dimensional reward feature $f(s,a)$; a $p$ dimensional policy feature $g(s)$. \\
{\bf Critic initialization:} $B(0)= \zeta I_{k\times k}$; $A(0)=\mathbf{0}_{k \times 1}$. \\
{\bf Actor initialization:} $\theta_0$ is initial policy parameter based on domain theory or historical data. \\
Start from $t=0$. \\
\While{$t \leq T$}{
At decision point $t$, observe context $S_t$. \\
Draw an action $A_t$ according to probability distribution $\pi_{\hat{\theta}_{t-1}}(S_t,A)$. \\
Observe an immediate reward $R_t$. \\
{\bf Critic update:}  \\
$B(t) = B(t-1)+ f(S_t,A_t) f(S_t, A_t)^T$, $A(t) = A(t-1) + f(S_t,A_t) R_t$, $\hat{\mu}_t= B(t)^{-1} A(t)$. The estimated reward function is $\hat{r}_{\hat{\mu}_t}(s,a)$ from (\ref{reward_projection}).\\
{\bf Actor update:} 
\begin{align*}
\hat{\theta}_t = \argmax_{\theta} \frac{1}{t}\sum_{\tau=1}^t \sum_{a} r_{\hat{\mu}_t}(S_{\tau},a) \pi_{\theta}(S_{\tau},a) - \lambda \, \theta^T  \left( \frac{1}{t}\sum_{\tau=1}^t g(S_{\tau}) g(S_{\tau})^T \right) \theta .
\end{align*}
Go to decision point $t+1$.
}
\caption{An online actor-critic algorithm with linear expected reward and stochastic policies}
\label{ac}
\end{algorithm}

\section{Convergence and Regret Analysis for the Actor-Critic Algorithm}\label{theory}

In this section, we present consistency and the asymptotic normality results for the proposed actor-critic algorithm. Proofs are provided in the supplementary material. We begin by making some mild assumptions on parameters and features and show that convergence in the regularized average reward can be achieved even when the optimal policy is not uniquely identifiable. However, under an additional uniqueness assumption, we are able to prove convergence in the parameter space and asymptotic normality of our parameter estimates.

\subsection{Convergence in Regularized Average Reward}

Assumption~\ref{assumption_boundedness} is a standard boundedness assumption that can be found in many contextual bandit literature (e.g. \cite{agrawal2012thompson}). We also make mild requirement for the policy features and reward features in Assumption~\ref{assumption_policy_feature} and Assumption~\ref{assumption_reward_feature}.


\begin{assumption} [Bounded rewards and features] The reward feature, reward coefficient and policy feature have bounded norm 1, i.e. $||f(S,A)||_2, ||\mu^*||_2 \leq 1, ||g(S)||_2\leq 1$.
\label{assumption_boundedness}
\end{assumption}

\begin{assumption}[Positive definiteness of policy features] The $p \times p$ matrix $\mathbb{E}[g(S)g(S)^T]$ is positive definite, whose smallest eigenvalue is lower bounded by $\lambda_p$. 
\label{assumption_policy_feature}
\end{assumption}

\begin{assumption}[Eigenvalues for reward features]
The ranks of $k\times k$ matrices:
$$\mathbb{E}\left[f(S,a_0)f(S,a_0)^T\right]\text{ and }\mathbb{E}\left[f(S,a_1)f(S,a_1)^T\right]$$ are both greater than zero, denoted by $r_0$ and $r_1$, the corresponding smallest positive eigenvalues are $\lambda_{r_0}$ and $\lambda_{r_1}$, largest eigenvalues are $\lambda_{1,a_0}$ and $\lambda_{1,a_1}$.
\label{assumption_reward_feature}
\end{assumption}

Theorem~\ref{thm:J_convergence} below shows that, under Assumption~\ref{assumption_iid}$-$\ref{assumption_reward_feature}, the regularized average reward using the policy in Algorithm~\ref{ac} converges to that using the optimal policy.
\begin{theorem}[Convergence of $J\left(\hat{\theta}_t,\mu^*\right)$ to $J\left(\theta^*,\mu^*\right)$] 
	\label{thm:J_convergence}
Under Assumptions~\ref{assumption_iid}$-$\ref{assumption_reward_feature}, for sufficiently small $\epsilon$, if
\begin{align*}
t = \tilde{O}\left(\max\left\{\frac{p}{\epsilon^2\lambda_p^2},\frac{\sigma^2}{\epsilon^2\tilde{p}_0^2\gamma^2},\frac{1}{\gamma^2\tilde{p}_0^2},\frac{\zeta}{\epsilon\tilde{p}_0\gamma}\right\} \log^2\left(\frac{1}{\delta}\right)\right),
\end{align*}
where $\gamma^2:=\min\{\lambda_{r_0}^2,\lambda_{r_1}^2\}$, $\tilde{p}_0 = \frac{1}{1+\exp\left(\sqrt{\frac{2}{\lambda\lambda_p}}\right)}$ and $\delta > 0$, then with probability $1-\delta$ we have,
\begin{align*}
	P\left(\left|J\left(\theta^*,\mu^*\right)-J\left(\hat{\theta}_t,\mu^*\right)\right|\leq 5\epsilon\right)\geq 1-\delta.
\end{align*}
In this theorem, $\widetilde{O}(\cdot)$ ignores poly-log terms non-regarding to $\delta$.

\end{theorem}

However, using this convergence result in Theorem~\ref{thm:J_convergence}, we prove that the regularized cumulative regret of the actor-critic algorithm is $\tilde{O}(T^{2/3})$ in Corollary~\ref{cor:J_convergence_regret}. 
The regularized cumulative regret $\text{Reg}_J$ up to time $T$ is the difference between the regularized cumulative reward under the optimal policy $\theta^*$ and that under the algorithm.
\begin{align*}
	\text{Reg}_J(T) := TJ(\theta^*,\mu^*) - \sum_{t=1}^{T}J(\hat{\theta}_t,\mu^*).
\end{align*}

\begin{corollary}\label{cor:J_convergence_regret}
	Under Assumption~\ref{assumption_iid}$-$\ref{assumption_reward_feature}, for $\delta > 0$, the regularized cumulative regret can be bounded by:
	$\mathrm{Reg}_J(T)  = \tilde{O}\left(T^{2/3}\right)$ with probability at least $1-\delta$.
\end{corollary}
As we will show in the next section, usual parametric rate of convergence $\tilde{O}(\sqrt{T})$ can be achieved with stronger assumptions.

\subsection{Asymptotic Convergence for Parameters in Actor Critic}
In addition to the aforementioned assumptions, we make the following assumption that ensures the identifiability of each component of the policy parameter. We will show the consistency of both the reward parameter and policy parameter, as well as their asymptotic normality. As a byproduct, the asymptotic properties of the policy parameter guarantees that the cumulative regret of Algorithm~\ref{ac} up to time $T$ is $\tilde{O}\left(\sqrt{T}\right)$.
\begin{assumption}\label{assumption:uniq}
(Uniqueness of global maximum and invertibility) The regularized average reward function $J(\theta, \mu^*)$, as a function of $\theta$, achieves the unique global maximum at $\theta=\theta^*$. In addition, for sufficiently small $\delta>0$, there exists $\epsilon>0$ and neighborhood of $\theta^*$, denoted by $B(\theta^*, \delta)$, such that 
\begin{align}
J(\theta^*,\mu^*)-\max_{\theta \notin B(\theta^*,\delta)}J(\theta,\mu^*) \geq \epsilon
\end{align}
We also assume that there exists a neighborhood of $\theta^*$ where $J(\theta, \mu^*)$ is invertible. 
\label{assumption_uniqueness}
\end{assumption}

\begin{assumption}\label{assumption:feature_pd}
(Positive definiteness of reward feature) The matrix $$\mathbb{E}_{\theta^*}(f(S,A)f(S,A)^T)=\int_s d(s) \sum_a f(s,a)f(s,a)^T \pi_{\theta^*}(s,a)ds,$$ which is the expected value of $f(S,A)f(S,A)^T$ under the optimal policy parameter $\theta^*$, is positive definite. 
\end{assumption}

Assumption~\ref{assumption_uniqueness} is a standard assumption in proving consistency and asymptotic normality of M-estimators (\cite{van2000asymptotic}). Under Assumption~\ref{assumption_iid}$-$\ref{assumption:feature_pd}, the following theorems establish the consistency and asymptotic normality of the critic and the actor.
\begin{theorem}
[Asymptotic properties of the critic]  Under Assumption~\ref{assumption_iid}$-$\ref{assumption:feature_pd}, the k-dimensional estimated reward parameter $\hat{\mu}_t$ converges to the true reward parameter $\mu^*$ in probability. In addition, $\sqrt{t}(\hat{\mu}_t-\mu^*)$ converges in distribution to a multivariate normal with mean $\mathbf{0}_{k \times 1}$ and covariance matrix 
$[\mathbb{E}_{\theta^*}(f(S,A)f(S,A)^T)]^{-1}\sigma^2$ and $\sigma$ is the standard deviation of the error term in Assumption~\ref{assumption_linear}. The plug-in estimator of the asymptotic covariance is consistent.
\label{theory_critic}
\end{theorem}
\begin{theorem}
[Asymptotic properties of the actor] Under Assumption~\ref{assumption_iid}$-$\ref{assumption:feature_pd}, the $p$-dimensional estimated optimal policy parameter $\hat{\theta}_t$ converges to $\theta^*$ in probability. In addition, $\sqrt{t}(\hat{\theta}_t-\theta^*)$ converges in distribution to multivariate normal with mean $\mathbf{0}_{p \times 1}$ and covariance matrix $[J_{\theta\theta}(\mu^*,\theta^*)]^{-1} V^* [J_{\theta\theta}(\mu^*,\theta^*)]^{-1}$, where 
\begin{align*}
V^*= \sigma^2 J_{\theta\mu}(\mu^*,\theta^*) \mathbb{E}_{\theta^*}[f(S,A)f(S,A)^T] J_{\mu\theta}(\mu^*,\theta^*)+\mathbb{E} [j_{\theta}(\mu^*,\theta^*,S)j_{\theta}(\mu^*,\theta^*,S)^T].
\end{align*}
In the expression of asymptotic covariance matrix,
\begin{align*}
 j_\theta(\mu,\theta,S)= \frac{\partial}{\partial\theta}\left(\sum_a f(S,a)^T\mu\ \pi_{\theta}(S,a)-\lambda \theta^T [g(S)g(S)^T] \theta\right),
\end{align*}
and   both $J_{\theta\theta}$ and $J_{\theta\mu}$ are the second order partial derivatives with respect to $\theta$ twice and with respect $\theta$ and $\mu$, respectively of $J$:
\begin{align}
J(\mu,\theta)=\int_{s\in \mathcal{S}}d(s)\sum_{a\in\mathcal{A}}f(s,a)^T \mu\ \pi_{\theta}(s,a)ds-\lambda \theta^T \mathbb{E}[g(S)g(S)^T]  \theta.
\label{230}
\end{align}
\noindent Positive definiteness of $J_{\theta\theta}(\mu^*,\theta^*)$ is guaranteed by Assumption~\ref{assumption:uniq}.
\label{theory_actor}
\end{theorem}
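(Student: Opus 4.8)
The plan is to treat $\hat\theta_t$ as a two-step M-estimator: the actor maximizes $\hat J_t(\cdot,\hat\mu_t)$ with the critic output $\hat\mu_t$ plugged in, so the analysis combines a standard argmax/Z-estimation argument with the critic's asymptotic linear expansion from Theorem \ref{theory_critic}. First I would dispose of the projection in \eqref{proj}: since $|f(s,a)^T\mu^*|\le 1$ a.s.\ by Assumption \ref{a_bound} and $\hat\mu_t\to\mu^*$ (Theorem \ref{theory_critic}), with probability tending to $1$ one has $|f(S_\tau,a)^T\hat\mu_t|<2$ simultaneously for all $\tau\le t$, so on this event $\hat r_t(S_\tau,a)=f(S_\tau,a)^T\hat\mu_t$ and the actor objective reduces to the smooth average $\hat J_t(\theta,\mu)=\frac1t\sum_{\tau=1}^t\big(\sum_a f(S_\tau,a)^T\mu\,\pi_\theta(S_\tau,a)-\lambda\theta^T g(S_\tau)g(S_\tau)^T\theta\big)$, an empirical mean of i.i.d.\ functions of the contexts. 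Because the contexts are i.i.d.\ and the features are bounded (Assumption \ref{a_bound}), a uniform law of large numbers over the compact set $C_{\theta^*}$ of Lemma \ref{lemma_bound} (times a neighborhood of $\mu^*$) gives $\sup_\theta|\hat J_t(\theta,\hat\mu_t)-J(\theta,\mu^*)|\to 0$ in probability, with $J$ as in \eqref{230}. Consistency $\hat\theta_t\to\theta^*$ then follows from the argmax theorem, with the well-separated-maximum condition supplied by Assumption \ref{a_sep}.

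For asymptotic normality I would use a Z-estimation / Taylor argument. By consistency and the negative definiteness of the regularization Hessian $-2\lambda\,\mathbb{E}[g(S)g(S)^T]$ (Assumption \ref{a_pd}), for large $t$ the maximizer is interior and satisfies the first-order condition $\Psi_t(\hat\theta_t,\hat\mu_t)=0$, where $\Psi_t(\theta,\mu)=\frac1t\sum_{\tau=1}^t j_\theta(\mu,\theta,S_\tau)$. Expanding around $(\theta^*,\mu^*)$ and inverting the empirical Hessian in $\theta$ (which converges uniformly to $J_{\theta\theta}(\mu^*,\theta^*)$ by a ULLN for second derivatives, Assumption \ref{a_pd2} keeping the relevant matrices nonsingular) yields the linear representation
\[
\sqrt t(\hat\theta_t-\theta^*)=-J_{\theta\theta}(\mu^*,\theta^*)^{-1}\Big(\sqrt t\,\Psi_t(\theta^*,\mu^*)+J_{\theta\mu}(\mu^*,\theta^*)\,\sqrt t(\hat\mu_t-\mu^*)\Big)+o_P(1).
\]
Here $\sqrt t\,\Psi_t(\theta^*,\mu^*)=\frac1{\sqrt t}\sum_{\tau}j_\theta(\mu^*,\theta^*,S_\tau)$ is a normalized sum of i.i.d.\ mean-zero terms, the mean vanishing because $\theta^*$ maximizes $J(\cdot,\mu^*)$.

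The crux is the joint limiting law of the two terms in the bracket, which are correlated through the shared data. I would substitute the critic's asymptotic linear expansion from the proof of Theorem \ref{theory_critic}, namely $\sqrt t(\hat\mu_t-\mu^*)=[\mathbb{E}_{\theta^*}(ff^T)]^{-1}\frac1{\sqrt t}\sum_{\tau}f(S_\tau,A_\tau)\epsilon_\tau+o_P(1)$ (writing $ff^T$ for $f(S,A)f(S,A)^T$), so the bracket becomes a single normalized sum of the array $j_\theta(\mu^*,\theta^*,S_\tau)+J_{\theta\mu}[\mathbb{E}_{\theta^*}(ff^T)]^{-1}f(S_\tau,A_\tau)\epsilon_\tau$. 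Relative to the natural filtration $\mathcal{F}_\tau=\sigma(S_1,A_1,R_1,\dots,S_\tau,A_\tau,R_\tau)$ both summands are martingale differences: $S_\tau$ is independent of $\mathcal{F}_{\tau-1}$ and $j_\theta(\mu^*,\theta^*,S_\tau)$ is centered, while $\mathbb{E}[\epsilon_\tau\mid S_\tau,A_\tau,\mathcal{F}_{\tau-1}]=0$ makes $f(S_\tau,A_\tau)\epsilon_\tau$ centered. The same conditioning shows the cross predictable covariation vanishes, $\mathbb{E}[j_\theta(\mu^*,\theta^*,S_\tau)\,\epsilon_\tau f(S_\tau,A_\tau)^T\mid\mathcal{F}_{\tau-1}]=0$, so the two contributions are asymptotically orthogonal. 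A multivariate martingale CLT (whose conditional-variance and Lindeberg conditions follow from the boundedness in Assumption \ref{a_bound}) then gives asymptotic normality of the bracket with covariance $\mathbb{E}[j_\theta j_\theta^T]+J_{\theta\mu}\,\Sigma_{\mu^*}\,J_{\mu\theta}$, where $\Sigma_{\mu^*}=\sigma^2[\mathbb{E}_{\theta^*}(ff^T)]^{-1}$ is the critic covariance of Theorem \ref{theory_critic}; since $J_{\theta\theta}$ is symmetric, pre- and post-multiplying by $-J_{\theta\theta}^{-1}$ produces the stated sandwich $J_{\theta\theta}^{-1}V^*J_{\theta\theta}^{-1}$. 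Consistency of the plug-in covariance estimator follows by replacing population quantities with their sample analogues and invoking the uniform laws already established.

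I expect the main obstacle to be the dependence induced by the adaptive actions: because $A_\tau\sim\pi_{\hat\theta_{\tau-1}}$ depends on the entire history, $f(S_\tau,A_\tau)$ is not i.i.d., so every law of large numbers and CLT above must be carried out for martingale/adaptive arrays rather than i.i.d.\ sums. The delicate points are (a) showing $\frac1t\sum_\tau f(S_\tau,A_\tau)f(S_\tau,A_\tau)^T\to\mathbb{E}_{\theta^*}(ff^T)$ even though the sampling policy $\pi_{\hat\theta_{\tau-1}}$ drifts toward $\pi_{\theta^*}$, which needs the actor consistency fed back into the critic together with a LLN for triangular arrays, and (b) verifying the conditional Lindeberg and predictable-variance conditions for the combined martingale CLT. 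These steps are intertwined with the critic analysis, and care is needed to ensure the feedback between actor and critic does not create a circular dependence, precisely the difficulty that Lemma \ref{lemma_bound} and the boundedness of $\hat r_t$ via \eqref{proj} were introduced to control.
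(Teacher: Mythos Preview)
Your proposal is correct and follows essentially the same route as the paper: reduce to the unprojected estimator $\tilde\theta_t$ via consistency of $\hat\mu_t$, prove consistency by a uniform law of large numbers plus the argmax theorem (the paper phrases this as a Glivenko--Cantelli argument over $\{j(\theta,\mu,s)\}$ together with continuity of $\theta^\mu$ in $\mu$), then linearize the first-order condition, insert the critic's asymptotic linear expansion, observe that the cross term $j_\theta(\mu^*,\theta^*,S_\tau)\epsilon_\tau f(S_\tau,A_\tau)^T$ has zero conditional mean, and apply a martingale CLT to the combined array.

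The only substantive technical difference is in how the linearization step is justified. You propose the classical Z-estimation route: Taylor-expand $\Psi_t(\hat\theta_t,\hat\mu_t)=0$ and control the empirical Hessian by a ULLN for second derivatives. The paper instead uses empirical-process machinery: it shows the class $\{j_\theta(\mu,\theta,s):\|\theta-\theta^*\|\le\delta,\|\mu-\mu^*\|\le\delta\}$ is $P$-Donsker and invokes stochastic equicontinuity (Lemma~19.24 and Theorem~18.10 in van der Vaart) to get $\mathbb{G}_t j_\theta(\hat\mu_t,\tilde\theta_t,S)-\mathbb{G}_t j_\theta(\mu^*,\theta^*,S)=o_p(1)$ directly, then Taylor-expands only the deterministic map $J_\theta$. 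Both approaches yield the same linear representation; yours is more elementary but requires you to handle the intermediate-point Hessian carefully, while the paper's Donsker route avoids differentiating the empirical process at random points but assumes familiarity with weak-convergence theory. One small correction: the invertibility of $J_{\theta\theta}(\mu^*,\theta^*)$ does not come from Assumption~\ref{a_pd2} (that assumption concerns the reward-feature Gram matrix used by the critic); it is implicit in the well-separated-maximum Assumption~\ref{a_sep}, and the paper, like you, simply uses $(J_{\theta\theta}^*)^{-1}$ without further comment.
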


A bound on the cumulative regret can be derived as a by-product of the square-root convergence rate of $\hat{\theta}_t$, presented in below corollary. 
The cumulative regret $\text{Reg}_V$ up to time $T$ is the difference between the cumulative reward under the optimal policy $\theta^*$ and that under the algorithm.
\begin{align*}
	\text{Reg}_V(T) = TV(\theta^*) - \sum_{t=1}^{T}V(\hat{\theta}_t).
\end{align*}
\begin{corollary}\label{cor:V_convergence_regret}
	Under Assumption~\ref{assumption_iid}$-$\ref{assumption:feature_pd}, for $\delta > 0$, the cumulative regret $\mathrm{Reg}_V(T)$ of Algorithm \ref{ac} can be bounded by: $\mathrm{Reg}_V(T) = \tilde{O}(\sqrt{T})$ with probability at least $1-\delta$.
\end{corollary}

Readers familiar with contextual bandit literature may wish to  compare the above regret bound with the regret bounds for LinUCB (\cite{chu2011contextual}) and for Thompson sampling (\cite{agrawal2013thompson}). There are at least three differences between our framework and those considered in LinUCB and Thompson sampling papers. First,  we parameterize explicitly both the policy class as well as the expected reward.   These two papers parameterize the expected reward which then implicitly implies a parameterized deterministic policy class. As a result, our optimal policy is the policy that maximizes the regularized average reward over our explicitly defined policy class; however their optimal policy is the policy that maximizes the unregularized average reward.  Second, we restrict our policy class to be stochastic whereas their implicitly defined policy class is composed of deterministic policies. Lastly, the setting considered here is more restrictive in that we assume contexts are i.i.d. whereas these papers allows for arbitrary contexts as long as the conditional mean of the reward in any context is linear in the context features.


\section{Numerical Experiments}\label{numerical}

To assess the performance of the proposed actor-critic algorithm we conducted extensive simulations across  a variety of realistic scenarios.
Firstly, we conduct simulations to evaluate  the relevance of our asymptotic theory in finite $T$ settings in which the contexts are indeed i.i.d.  As will be seen, the bias and mean squared error (MSE) in estimating optimal policy decreases to 0 as sample size increases, and the bootstrap confidence interval for the optimal policy parameter achieves nominal confidence level. Secondly, we note that the i.i.d.\ assumption on the contexts is likely violated in real world applications in at least two ways: the current context may be influenced by the context at previous decision points and the current context may be influenced by past actions.  Simulations in which the  contexts follow an auto-regressive process show that the actor-critic algorithm is quite robust to auto-correlation among contexts. We also create simulation settings where context is influenced by previous actions through a burden effect of the treatments on the users.  We observe reasonable robustness of the bandit actor-critic algorithm when the burden effects are small or moderate. Last but not least, we investigate how performance of the algorithm may deteriorate when part of Assumption~\ref{assumption_iid} is violated, that is, the conditional mean of the reward is non-linear.

Throughout we base the simulations on a generative model that is motivated by the Heartsteps application for improving daily physical activity (\cite{klasnja_microrandomized_2015, dempsey2015randomised}). A simplified description of HeartSteps follows.  HeartSteps is a mobile health smartphone application seeking to reduce users' sedentary behavior and increase physical activity such as walking. A commercial wristband sensor is usted to collect  minute level steps counts. 
Each evening self-report on the usefulness of the application as well as problems in daily life are collected.  
At each of 3 decision points per day, sensor data is collected including the user's current location (home/work/other) and weather. At each decision point, the algorithm on the smartphone application must decide  whether to ``push" a tailored physical activity suggestion, i.e., $A_t=1$, or remain silent, i.e., $A_t=0$.
Our generative model uses a three dimensional context at decision point $t$: $S_{t}=[S_{t,1}, S_{t,2}, S_{t,3}]$. $S_{t,1}$ represents weather, with $S_{t,1}=-\infty$ being extremely severe and unfriendly weather for any outdoor activities and $S_{t,1}=\infty$ being the opposite.  $S_{t,2}$ reflects the user's recent habits in engaging in physical activity. $S_{t,2}=\infty$ represents that the user has been maintaining  positive daily physical habits while $S_{t,2}=-\infty$ represents the opposite. $S_{t,3}$ is a composite measure of disengagement with HeartSteps. $S_{t,3}=-\infty$ reflects an extreme state that the user is fully engaged, is adherent and is reporting that  the application is useful. On the other hand, $S_{t,3}=\infty$ denotes the opposite state of disengagement. Note that although we assumed bounded features in proving theoretical properties of the algorithm, these feature vectors have unbounded range. Results shown in later sections demonstrate  robustness to the boundness assumption.

The goal of HeartSteps is to reduce users' sedentary behavior. Here we reverse code the reward and define the {\emph{cost}} to be the sedentary time per hour between two decision points. So the goal of the actor-critic algorithm is to minimize an average penalized cost as opposed to maximizing an average penalized reward.    The generative model for the cost is a linear model: $C_{t} =10 - .4 S_{t,1}-.4 S_{t,2} -A_{t} \times (0.2 + 0.2 S_{t,1} + 0.2 S_{t,2})   + 0.4 S_{t,3} + \xi_{t,0}$, where $\xi_{t,0}$ are i.i.d. N(0,1) errors. In this linear model, higher values of $S_1$ and $S_2$, good weather and positive physical activity habits, are associated with less sedentary time while a higher value of $S_3$, disengagement, leads to increased sedentary time. The negative main effect of $A_t$ indicates that physical activity suggestion ($A_t=1$) reduces sedentary behavior compared to no suggestion $A_t=0$. The negative interaction between $A_t$ and $S_{t,1}$ and between $A_t$ and $S_{t,2}$ reflects that physical activity suggestions are more effective when the weather condition is activity friendly or when the user has acquired good physical activity habits.

The class of parametrized policies is $\pi_{\theta}(S,1)= \frac{ e^{\theta0+\sum_{i=1}^3\theta_iS_i}}{1+e^{\theta0+\sum_{i=1}^3\theta_iS_i}}$. 
The average cost under policy $\pi_{\theta}$ is:
\begin{align*}
C(\theta)= \int_{s\in\mathcal{S}} d_{\theta}(s) \sum_a \mathbb{E}(C|S=a, A=a) \pi_{\theta}(s,a)ds
\end{align*}
where $d_{\theta}(s)$ is the stationary distribution of context under policy $\pi_{\theta}$. When actions have no impact on context distributions, the stationary distribution $d(s)$ does not depend on the policy parameter $\theta$. In this case, the average cost reduces to: $C(\theta)=  \int_{s\in\mathcal{S}} d(s) \sum_a \mathbb{E}(C|S=a, A=a) \pi_{\theta}(s,a)ds$. This is true for the  generative models we  investigate in Section \ref{iid} and Section \ref{ar}. The  generative model we investigate in Section \ref{burden} allows actions to impact the context distribution at future decision points. In such a case, the stationary distribution of context depends on the policy parameter $\theta$. 
A quadratic constraint is enforced so that the optimal policy is stochastic. In the quadratic inequality ~\eqref{const_q}, we use $\alpha=0.1$ and $p_0=0.1$ throughout the numerical experiment unless otherwise specified. We then minimize the corresponding Lagrangian function. 


\textbf{The optimal policy $\theta^*$ and the oracle $\lambda^*$.} According to results  in Section \ref{sec_reg_reward}, we have that for every pair of $(p_0,\alpha)$ there exists a Lagrangian multiplier $\lambda^*$ such that the optimal solution to the regularized average cost function:
\begin{align}
\theta^*=\argmin_{\theta} C(\theta) + \lambda \theta^T \sum_s d_{\theta} ([1,s_1,s_2,s_3] [1,s_1,s_2,s_3]^T)\theta
\label{regularized_cost}
\end{align}
satisfies the quadratic constraint with equality. Furthermore, as $\lambda$ increases the stringency of the quadratic constraint increases: an increased value of $\lambda$ penalizes  the quadratic term $\theta^{*T} \sum_s d_{\theta^*} ([1,s_1,s_2,s_3] [1,s_1,s_2,s_3]^T)\theta^*$ more heavily. For a fixed pair of $(p_0,\alpha)$, we perform a line search to find the smallest $\lambda$, denoted as $\lambda^*$, such that the minimizer to the regularized average cost, denoted as $\theta^*$ satisfies the quadratic constraint. We recognize the difficulty in solving the optimization problem due to the non-convexity of the regularized average cost function. In our search for a global minimizer, we therefore use grid search, for a given $\lambda$, to find a crude solution to the optimization problem. We then improve the accuracy of the optimal solution using a more refined grid search provided by the pattern search function in Matlab.
The regularized average cost function is approximated by Monte Carlo samples. We used 5000 Monte Carlo samples to approximate the regularized average cost for simulation in Section \ref{iid} and Section \ref{ar} where the stationary distribution of contexts does not depend on the policy. For the simulations in Section \ref{burden}, where context distribution does depend on the policy, we generate a trajectory of 100000 Monte Carlo samples and ignore the first $10\%$ of the samples to approximate the stationary distribution. 

\textbf{Estimating $\lambda$ online}. In practice, the decision maker has no access to the oracle Lagrangian multiplier $\lambda^*$. A natural remedy is to integrate the estimation of $\lambda^*$ with the online actor-critic algorithm that estimates the policy parameters. An actor-critic algorithm with a fixed Lagrangian multiplier solves the ``primal" problem while the ``dual" problem is solved by searching for $\lambda^*$. Our integrated algorithm performs a line search to find the smallest $\lambda$ such that the estimated optimal policy satisfies the quadratic constraint. The stationary distribution of the contexts is approximated by the empirical distribution. Estimating $\lambda$ can be very time consuming, therefore in our simulations, the algorithm performs the line search over $\lambda$ only every 10 decision points. Similar ideas with gradient based updates on $\lambda$ have appeared in reinforcement literature to find the optimal policies in constrained MDP problems, see \cite{borkar2005actor, bhatnagar2012online} for examples. 

\textbf{Bootstrap confidence intervals.} In a number of trial simulations, we found that the plug-in variance estimator derived from Theorem \ref{theory_actor} tends to underestimate in small to moderate sample size, a direct consequence of which is the anti-conservatism of the Wald confidence interval. Details of the anti-conservatism are discussed in the supplementary material section \ref{small}. Our solution to the anti-conservative Wald confidence interval is the percentile-t bootstrap confidence interval. Algorithm \ref{boot} shows how to generate a bootstrap sample. Algorithm \ref{boot} is repeated for a total of $B$ times to obtain a bootstrap sample of the estimated optimal policy parameters, $\{\hat{\theta}^b_T\}_{b=1}^B$ and plug-in variance estimates, $\{\hat{V}^b_T\}_{b=1}^B$. We create bootstrap percentile-t confidence intervals for $\theta^*_i$, the i-th component of the optimal policy parameter. For each $\theta_i^*$, we use the empirical percentile of $\left\{ \frac{\sqrt{t}(\hat{\theta}^b_{T,i}-\hat{\theta}_{T,i})}{\sqrt{\hat{V}^b_T}} \right\}_{b=1}^B$, denoted by $p_{\alpha}$ to replace the normal distribution percentile in Wald confidence intervals. A $(1-2\alpha)\%$ confidence interval is 
\begin{align}
\left[ \hat{\theta}_{T,i} - p_{\alpha}\frac{\hat{V}_i}{\sqrt{T}}, \hat{\theta}_{T,i} + p_{\alpha}\frac{\hat{V}_i}{\sqrt{T}} \right]
\end{align}
where $\hat{\theta}_{T,i}$ is the i-th component of $\hat{\theta}_{T}$ and $\hat{V_i}$ is the plug-in variance estimate based on the original sample.

\begin{algorithm}
Inputs: The observed context history $\{S_t\}_{t=1}^T$. A bootstrap sample of residuals $\{\epsilon^b_t\}_{t=1}^T$. The estimated reward parameter $\hat{\mu}_T$ \\
Critic initialization: $B(0)= \zeta I_{k\times k}$, a $k \times k$ identity matrix. $A(0)=0_k$ is a $k \times 1$ column vector. \\
Actor initialization: $\hat{\theta}^b_0=\hat{\theta}_0$ is the best treatment policy based on domain theory or historical data. \\
\While{$t < T $}{
Context is $S_t$ \;
Draw an action $A_t^b$ according to policy $\pi_{\hat{\theta}^b_{t-1}}$ \;
Generate a bootstrap reward $R_t^b = f(S_t, A_t^b)^T \hat{\mu}_T + \epsilon_t^b\ $ ; \\
Critic update:  \\
$B(t)=B(t-1)+ f(S_t,A_t)f(S_t,A_t)^T$, $A(t)=A(t-1)+f(S_t,A_t)R_t^b$ \;
$\hat{\mu}_t^b = A(t)^{-1}B(t) $. The bounded estimate to reward function is $\hat{r}_t^b(s,a)$. \;
Actor update: 
\begin{align*}
\hat{\theta}_t^b = \argmax_{\theta} \frac{1}{t}\sum_{\tau=1}^t \sum_{a} \hat{r}_t^b(S_{\tau},a ) \pi_{\theta}(a|S_t)-  
 \lambda \theta^T  [\frac{1}{t}\sum_{\tau=1}^t g(S_{\tau},1)^Tg(S_{\tau},1)] \theta
\end{align*}
Go to decision point $t+1$ \;}
Plugin $\hat{\mu}^b_T$ and $\hat{\theta}^b_T$ to the asymptotic variance formula to get a bootstrapped variance estimate $\hat{V}^b_T$. 
\caption{Generating a bootstrap sample estimate $\hat{\theta}^b_T, \hat{V}^b_T$ }
\label{boot}
\end{algorithm}

\textbf{Simulation details.} The simulation results presented in the following sections are based on 1000 independent simulated users. For each simulated user, we allow a burn-in period of 20 decision points. During the burn-in period, actions are chosen by fair coin flips. After the burn-in period, the online actor-critic algorithm is implemented to learn the optimal policy and obtain an end-of-study estimated optimal policy at the last decision point.  In these simulations we did not force  $\hat{r}_t(s,a)$ to be in the interval $[-2,2]$ as in Algorithm~\ref{ac} or  Algorithm~\ref{boot}. We do not encounter any issues in convergence of the algorithm. 

Both bias and MSE shown in all of the following tables are averaged over 1000 end-of-study estimated optimal policies. For each simulated user the $95\%$ bootstrapped confidence intervals for $\theta^*$ is based on 500 bootstrapped samples generated by Algorithm \ref{boot}. With $95\%$ confidence, we expect that the empirical coverage rate of a confidence interval should be within $0.936$ and $0.964$, if the true confidence level is $0.95$.


\subsection{I.I.D. Contexts}\label{iid}

In this generative model, we choose the simplest setting where contexts at different decision points are i.i.d. We generate contexts $ \{ [S_{t,1}, S_{t,2}, S_{t,3}] \}_{t=1}^T$  from a multivariate normal distribution with mean 0 and identity covariance matrix. The population optimal policy is $\theta^*=[0.417778, 0.394811, 0.389474, 0.001068]$ at $\lambda^*=0.046875$. Table \ref{iid_be} lists the bias and mean squared error (MSE) of the estimated optimal policy parameters. Both measures shrink towards 0 as $T$, sample size per simulated user, increases from 200 to 500, which is consistent with the convergence in estimated optimal policy parameter as established in Theorem \ref{theory_actor}. Table \ref{iid_c} shows the empirical coverage rates of percentile-t bootstrap confidence interval at sample sizes 200 and 500. At sample size 200, the empirical coverage rates are between $0.936$ and $0.964$ for all $\theta_i$'s. At sample size 500, however, the bootstrap confidence interval for $\theta_2$ is a little conservative with an empirical coverage rate of $0.968$. 

\begin{table}
  \begin{tabular}{l | *{4}{S[table-auto-round,
                 table-format=-1.3]}| *{4}{S[table-auto-round,
                 table-format=-1.3]}}
    \toprule
    \multirow{2}{*}{T (sample size)} &
      \multicolumn{4}{c}{Bias} &
      \multicolumn{4}{c}{MSE} \\ \cline{2-9}
      & {$\theta_0$} & {$\theta_1$} & {$\theta_2$} & {$\theta_3$} & {$\theta_0$} & {$\theta_1$} & {$\theta_2$} & {$\theta_3$} \\
      \midrule
    200 &-0.081295&-0.090014&-0.089029&0.010305 &0.053756&0.052246&0.052209&0.055244\\\hline
    500 &-0.05266&-0.037185&-0.03383&-0.001537 &0.026866&0.023746&0.02144&0.029489\\\hline
    \bottomrule
  \end{tabular}
  \caption{I.I.D. contexts: bias and MSE in estimating the optimal policy parameter. Bias=$\mathbb{E}(\hat{\theta}_T)-\theta^*$}. 
  \label{iid_be}
\end{table}

\begin{table}
\centering
\begin{tabular} {c| cccc}
 \toprule
T(sample size) & $\theta_0$ & $\theta_1$ & $\theta_2$ & $\theta_3$ \\ \hline
200&0.962&0.942&0.938&0.945\\\hline
500&0.96&0.948&0.968&0.941\\\hline
\bottomrule
\end{tabular}
\caption{I.I.D. contexts: coverage rates of percentile-t bootstrap confidence intervals for the optimal policy parameter. }
\label{iid_c}
\end{table}


\subsection{AR(1) Context}\label{ar}

In this section, we study the performance of the actor-critic algorithm when the dynamics of the context is an auto-regressive stochastic process. We envision that in many health applications, contexts at adjacent decision points are likely to be correlated. Using HeartSteps as an example, weather ($S_1$) at two adjacent decisions points are likely to be similar. So are users' learning ability ($S_2$) and disengagement level $S_3$. One way to incorporate the correlation among contexts at near-by decision points is through a first order auto-regression process. We simulate the context according to
\begin{eqnarray*}
S_{t,1} &=& 0.4 S_{t-1,1} + \xi_{t,1 },\\
S_{t,2} &=& 0.4 S_{t-1,2} + \xi_{t,2 }, \\
S_{t,3} &=& \xi_{t,3}
\end{eqnarray*}
Here we choose $\xi_{t,1 } \sim N(0, 1-0.4^2)$, $\xi_{t,2 }\sim N(0,1-0.4^2)$ and $\xi_{t,3 }\sim N(0,1)$ so that the stationary distribution of $S_t$ is multivariate normal with zero mean and identity covariance matrix, same as the distribution of $S_t$ in the previous section. The initial distribution of $S_{t}, t=1$ is a multivariate standard normal.  

The oracle Lagrangian multiplier is $\lambda^*=0.05$ and the population optimal policy is $\theta^*=[0.417,  0.395,  0.394,  0]$, same as in the i.i.d.\ simulation. Bias and MSE of the estimated policy parameters are shown in Table \ref{ar_be}. Empirical coverage rate of the percentile t bootstrap confidence interval is reported in Table \ref{ar_c}. Both the bias and MSE diminish towards 0 as the sample size increases from 200 to 500, a clear indication that convergence of the algorithm is not affected by the auto-correlation in context. The bootstrap confidence interval for $\theta_3$ is anti-conservative at sample size 200, but recovers decent coverage at sample size 500. 

\begin{table}
  \begin{tabular}{l | *{4}{S[table-auto-round,
                 table-format=-1.3]} |*{4}{S[table-auto-round,
                 table-format=-1.3]}}
    \toprule
    \multirow{2}{*}{T (sample size)} &
      \multicolumn{4}{c}{Bias} &
      \multicolumn{4}{c}{MSE} \\ \cline{2-9}
      & {$\theta_0$} & {$\theta_1$} & {$\theta_2$} & {$\theta_3$} & {$\theta_0$} & {$\theta_1$} & {$\theta_2$} & {$\theta_3$} \\
      \midrule
    200&-0.092765&-0.088589&-0.075788&0.005926 &0.057522&0.053404&0.047061&0.056805 \\\hline
    500&-0.046294&-0.032283&-0.039641&-0.004681 &0.024747&0.021956&0.023911&0.028052\\\hline
    \bottomrule
  \end{tabular}
  \caption{AR(1) contexts: bias and MSE in estimating the optimal policy parameter. Bias=$\mathbb{E}(\hat{\theta}_T)-\theta^*$.}
  \label{ar_be}
\end{table}

%

\begin{table}
\centering
\begin{tabular} {c| cccc}
\toprule
T(sample size) & $\theta_0$ & $\theta_1$ & $\theta_2$ & $\theta_3$ \\\hline
200&0.963&0.952&0.957&0.927*\\\hline
500&0.969&0.962&0.96&0.949\\\hline
\bottomrule
\end{tabular}
\caption{AR(1) contexts: coverage rates of percentile-t bootstrap confidence intervals. Coverage rates significantly lower than $0.95$ are marked with asterisks (*).}
\label{ar_c}
\end{table}

\subsection{Actions Cause Increased Burden}\label{burden}

In this section, we study behavior of the actor-critic algorithm in the presence of an intervention burden effect. Our generative model with a burden effect represents a scenario where users disengage with the Heartsteps application, and hence the recommended intervention, if the application provides physical activity suggestions at too high a frequency. When users experience intervention burden effects, they become frustrated and have a tendency of falling back to their sedentary behavior. In our burden effect generative model, $S_{t,3}$ represents the disengagement level whose value increases if there is a physical activity suggestion at the previous decision point $A_{t-1}=1$. The positive main effect of $S_{t,3}$ in the cost model \eqref{burden_cost} below reflects that higher disengagement level is associated with higher cost (higher sedentary time). The initial distribution of $S_t$ is the standard multivariate normal distribution. After the first decision point, contexts are generated according to the following stochastic process:
\begin{eqnarray*}
S_{t,1} &=& 0.4 S_{t-1,1} + \xi_{t,1 },\\
	S_{t,2} &=& 0.4 S_{t-1,2}+ \xi_{t,2}, \\
	S_{t,3} &=& 0.4 S_{t-1,3} + 0.2 S_{t-1,3}A_{t-1} + 0.4 A_{t-1} + \xi_{t,3}
\end{eqnarray*}
We simulate the cost, sedentary time per hour between two decision points, according to the following linear model:
\begin{align}
C_{t} &=& 10 - .4 S_{t,1}-.4 S_{t,2} -A_{t} \times (0.2 + 0.2 S_{t,1} + 0.2 S_{t,2})   + \tau S_{t,3} + \xi_{t,0}. 
\label{burden_cost}
\end{align}
where parameter $\tau$ controls the ``size" of the burden effect: the larger $\tau$ is, the more severe the burden effect is. We study the performance of our algorithm in five different cases corresponding to $\tau=0,0.2,0.4,0.6,0.8$. Different values of $\tau$ represent users who experience different levels of burden effect. $\tau=0$ represents the type of users who experience no burden effect while $\tau=0.8$ represents the type of users who experience a large burden effect.   

Table \ref{burden_opt} in the supplementary material section \ref{sup_burden} lists the oracle $\lambda^*$ and the corresponding optimal policy $\theta^*$ at different levels of burden effect. Higher level of burden effects calls for increased value of oracle $\lambda^*$ to keep the desired intervention variety. The negative sign of $\theta_3^*$ at $\tau \geq 0.2$ indicates that the application should lower the probability of pushing an activity suggestion when the disengagement level is high. The magnitude of $\theta_3^*$ rises with the size of the burden effect, implying that as burden effect increases the application should further lower the probability of pushing activity suggestions at high disengagement level. $\theta_0^*$ decreases to be negative when $\tau$ increases, which indicates that as the size of burden effect grows, the application should lower the frequency of activity suggestions in general. 


Table \ref{burden_bm200} and \ref{burden_c200} list the bias, MSE and the empirical coverage rate of the percentile-t bootstrap confidence interval at sample size 200. Table \ref{burden_bm500} and \ref{burden_c500} list these three measures at sample size 500. When there is no burden effect ($\tau=0$), $S_{t,3}$ has no influence on the cost and is therefore considered as a ``noise" variable. The optimal policy parameters are estimated with low bias and MSE under the generative model with $\tau=0$ and the bootstrap confidence intervals have decent coverage, both of which are clear indications that the algorithm is robust to presence of noise variables that are affected by previous actions. As burden effects levels go up, we observe an increased bias and MSE in the estimated optimal policy parameters, $\theta_0$ and $\theta_3$ in particular. The empirical coverage rates of bootstrap confidence intervals for $\theta_0$ and $\theta_3$ are below the nominal $95\%$ level. There are two reasons to explain the increased bias and MSE. The most important one is the near-sightedness of bandit actor-critic algorithm. The bandit algorithm chooses the policy that maximizes the (immediate) average cost while ignoring the negative consequence of a physical activity suggestion $A_t=1$ on the disengagement level at the next decision point. The bandit algorithm therefore tends to ``over-treat" in general and in particular at high disengagement level, which is reflected in an over-estimated $\theta_0$ and $\theta_3$. The second reason comes from the bias in estimating $\lambda$, the Lagrangian multiplier. The oracle Lagrangian multiplier $\lambda^*$ is chosen so that the optimal policy parameter satisfies the quadratic constraint while the online bandit actor-critic algorithm estimates the Lagrangian multiplier so that the bandit-estimated optimal policy satisfies the quadratic constraint. To separate the consequence of underestimated $\lambda$ from the consequence of the myopia of the bandit algorithm, we implement the bandit algorithm with oracle $\lambda^*$. Results of these experiments are shown in the supplementary material section \ref{sup_burden}. We observe that, even with the use of oracle $\lambda^*$, the overestimation of $\theta_0$ and $\theta_3$ as well as the anti-conservatism of the confidence intervals are still present. 

Overall, the estimation of $\theta_1$ and $\theta_2$ shows robustness to the presence of burden effects. $\theta_1$ and $\theta_2$ are estimated with low bias and MSE under the presence of small to moderate burden effects ($\tau=0.2,0.4$). While we observe biases in estimating $\theta_1$ and $\theta_2$ under moderate to large burden effects ($\tau=0.6,0.8$), the magnitude of such bias increases slowly with the size of the burden effect. Empirical coverage rates of the bootstrap confidence intervals for $\theta_1$ and $\theta_2$ are decent for $\tau=0.2,0.4$ and only degrade slowly under $95\%$ when $\tau=0.6,0.8$. 


\begin{table}[H]
  \begin{tabular}{l | *{4}{S[table-auto-round,
                 table-format=-1.3]}| *{4}{S[table-auto-round,
                 table-format=-1.3]}}
    \toprule
    \multirow{2}{*}{{$\tau$}} &
      \multicolumn{4}{c}{Bias} &
      \multicolumn{4}{c}{MSE} \\ \cline{2-9}
      & {$\theta_0$} & {$\theta_1$} & {$\theta_2$} & {$\theta_3$} & {$\theta_0$} & {$\theta_1$} & {$\theta_2$} & {$\theta_3$} \\
      \midrule
    0&-0.027352&-0.035565&-0.030344&0.003449 &0.057811&0.03716&0.036343&0.035898\\\hline
    0.2&0.22947&-0.092877&-0.10406&0.16421 &0.10961&0.044463&0.046192&0.062836\\\hline
    0.4&0.50586&-0.063199&-0.035223&0.23473 &0.31295&0.039819&0.03665&0.090984\\\hline
    0.6&0.64507&0.042695&0.072542&0.27198 &0.47309&0.037714&0.040625&0.10984\\\hline
    0.8&0.70229&0.083867&0.09608&0.2718 &0.55024&0.042799&0.04454&0.1097\\\hline
    \bottomrule
  \end{tabular}
  \caption{Burden effect: bias and MSE in estimating the optimal policy parameter at sample size 200. Bias=$\mathbb{E}(\hat{\theta}_T)-\theta^*$.}
  \label{burden_bm200}
\end{table}

%

\begin{table}[H]
\centering
\begin{tabular} {c| cccc}
\toprule
$\tau$ & $\theta_0$ & $\theta_1$ & $\theta_2$ & $\theta_3$ \\ \hline
0&0.963&0.963&0.955&0.942\\\hline
0.2&0.853*&0.946&0.937&0.862*\\\hline
0.4&0.565*&0.96&0.954&0.776*\\\hline
0.6&0.39*&0.937&0.916*&0.739*\\\hline
0.8&0.329*&0.908*&0.899*&0.739*\\\hline
\bottomrule
\end{tabular}
\caption{Burden effect: coverage rates of percentile-t bootstrap confidence intervals for the optimal policy parameter at sample size 200. $\lambda$ is estimated online. Coverage rates significantly lower than $0.95$ are marked with asterisks (*).}
\label{burden_c200}
\end{table}

\begin{table}[H]
  \begin{tabular}{l | *{4}{S[table-auto-round,
                 table-format=-1.3]}|*{4}{S[table-auto-round,
                 table-format=-1.3]}}
    \toprule
    \multirow{2}{*}{{$\tau$}} &
      \multicolumn{4}{c}{Bias} &
      \multicolumn{4}{c}{MSE} \\ \cline{2-9}
      & {$\theta_0$} & {$\theta_1$} & {$\theta_2$} & {$\theta_3$} & {$\theta_0$} & {$\theta_1$} & {$\theta_2$} & {$\theta_3$} \\
      \midrule
    0&0.005989&0.009646&0.01695&-0.007669 &0.02729&0.018121&0.016362&0.018669\\\hline
    0.2&0.26259&-0.047761&-0.056703&0.15319 &0.096205&0.020303&0.019279&0.042104\\\hline
    0.4&0.5391&-0.017831&0.011807&0.2237 &0.3179&0.018404&0.016187&0.068678\\\hline
    0.6&0.67808&0.088146&0.1196&0.26111 &0.48706&0.025846&0.030356&0.086812\\\hline
    0.8&0.73514&0.12898&0.14313&0.26107 &0.56775&0.034717&0.036534&0.086779\\\hline
    \bottomrule
  \end{tabular}
  \caption{Burden effect: bias and MSE in estimating the optimal policy parameter at sample size 500. Bias=$\mathbb{E}(\hat{\theta}_t)-\theta^*$.}
  \label{burden_bm500}
\end{table}

%

\begin{table}[H]
\centering
\begin{tabular} {c| cccc}
\toprule
$\tau$ & $\theta_0$ & $\theta_1$ & $\theta_2$ & $\theta_3$ \\ \hline
0&0.973&0.949&0.955&0.942\\\hline
0.2&0.714*&0.95&0.962&0.788*\\\hline
0.4&0.217*&0.951&0.961&0.635*\\\hline
0.6&0.101*&0.886*&0.835*&0.545*\\\hline
0.8&0.07*&0.806*&0.788*&0.546*\\\hline
\bottomrule
\end{tabular}
\caption{Burden effect: coverage rates of percentile-t bootstrap confidence intervals for the optimal policy parameter at sample size 200. $\lambda$ is estimated online. Coverage rates significantly lower than $0.95$ are marked with asterisks (*).}
\label{burden_c500}
\end{table}

Figure \ref{burden_pv200} and \ref{burden_pv500} assess the quality of the estimated optimal policies by comparing the regularized average cost with the optimal regularized average cost. Figure \ref{burden_pv200} does the comparison at five levels of burden effect: $\tau=0,0.2,0.4,0.6,0.8$, at sample size 200. As the burden effects level up, the overall long-run average cost goes up, which is simply an artifact of the increasing main effect size of the disengagement level. Having a higher long-term average cost, the estimated optimal policy by the contextual bandit algorithm is always inferior then the optimal policy. The inferiority gap, as measure by the difference between the median long-run average cost and the long-run average cost of the optimal policy increases as $\tau$ increases. When sample size increases from 200 to 500, we observe less variation in the long-run average cost of the estimated optimal policies. Nevertheless, the gap remains stable. We also observe that the variance in the regularized average cost increases as the burden effect level goes up. 

\begin{figure}[H] 
  \begin{minipage}[b]{0.5\linewidth}
    \centering
    \includegraphics[scale=.4]{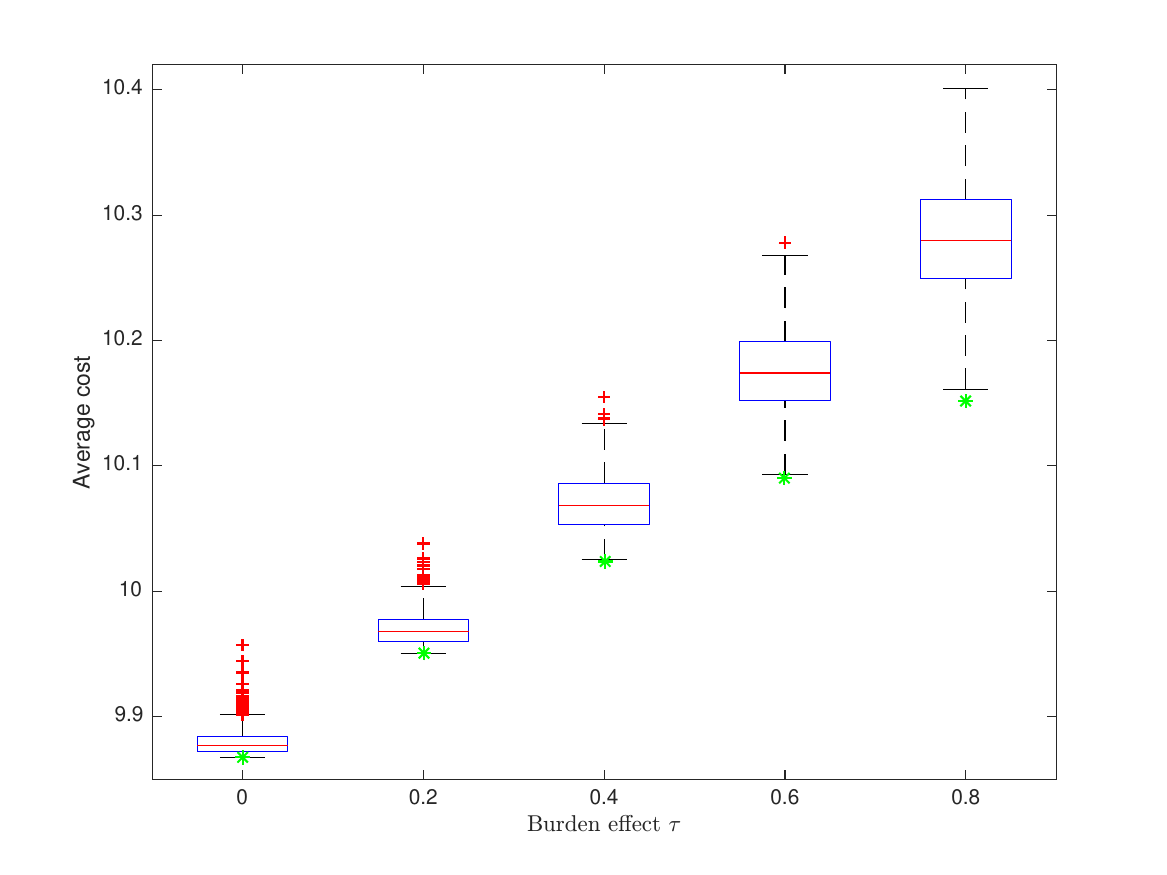} 
    \caption{Burden effect: box plots of \\regularized average cost at different levels \\of the burden effect at sample size 200.} 
    \label{burden_pv200}
    \vspace{4ex}
  \end{minipage}
  \begin{minipage}[b]{0.5\linewidth}
    \centering
    \includegraphics[scale=.4]{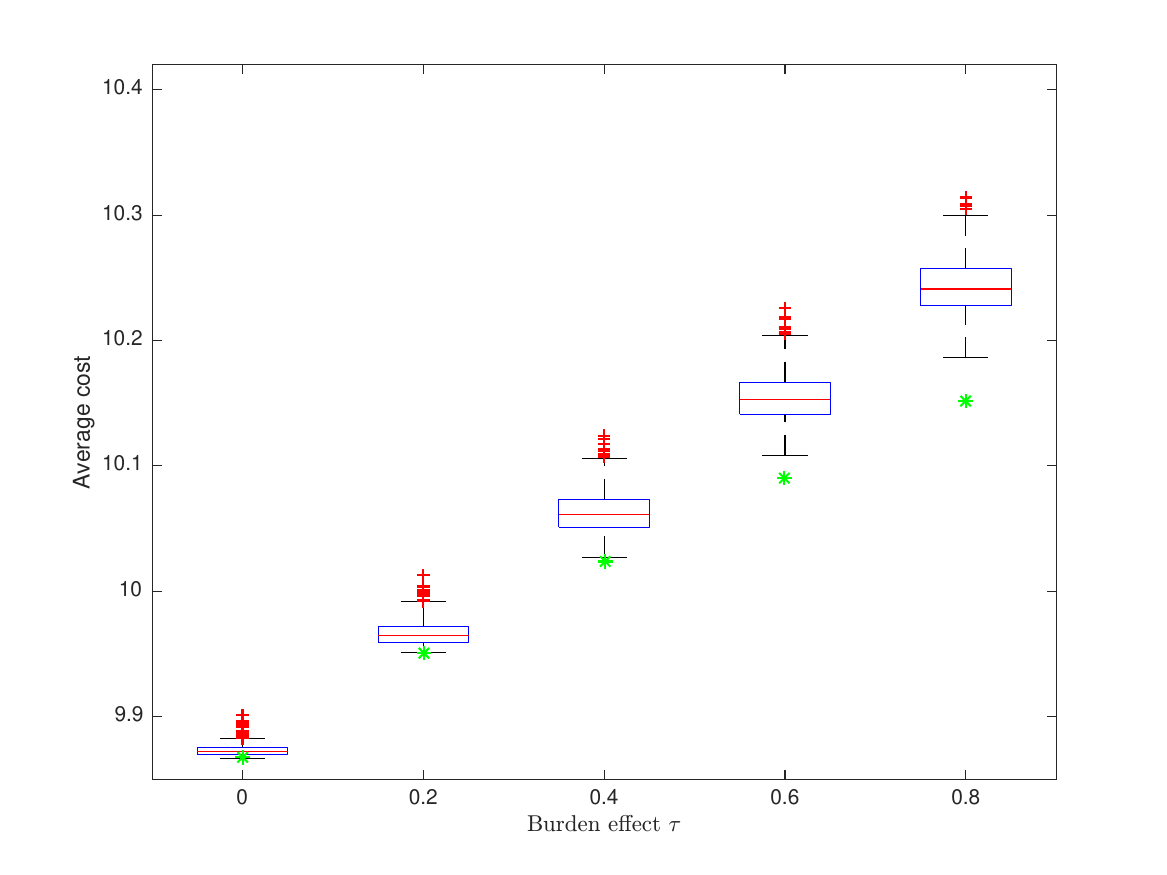} 
    \caption{Burden effect: box plots of regularized average cost at different levels of the burden effect at sample size 500.} 
    \label{burden_pv500}
    \vspace{4ex}
  \end{minipage} 
\end{figure}

%

Because our theoretical results assume i.i.d.\ contexts, we have no proof that the optimal policy estimated by the bandit actor-critic algorithm will converge to the optimal policy. Nevertheless, we observe convergence in the estimated policy as sample size $T$ grows. We conjecture that, when actions affect contexts distributions, the bandit algorithm converges to the policy $\pi_{\theta^{**}}$ that satisfies the following equilibrium equation:

\begin{align}
\theta^{**} = \argmin_{\theta} \sum_s d_{\theta^{**}}(s) \sum_a \pi_{\theta}(a|s) \mathbb{E}(C|A=a,S=s) - \lambda^{**} \theta^T \mathbb{E}_{\theta^{**}}[g(S)g(S)^T] \theta \\ 
\mbox{where } \lambda^{**} \mbox{ is the smallest } \lambda \mbox{ such that }\theta^{**} \sum_s d_{\theta^{**}}(s) g(s)^Tg(s) \theta^{**} \leq (\log(\frac{p_0}{1-p_0}))^2 \alpha
\label{myopic_eq}
\end{align}
When actions do not influence contexts distributions, the equilibrium equation is the same system of equations satisfied by the optimal policy. When previous actions have an impact on context distribution at later decision points, the stationary distribution of context is a function of policy. We call solution to equation \ref{myopic_eq}, the \emph{myopic equilibrium policy}. The myopic equilibrium policy minimizes the regularized average cost under the stationary distribution generated by itself. Such policy achieves an ``equilibrium state'' and there is no reason for the actor-critic to change the current policy if a myopic equilibrium has been reached. 
The conjecture is supported by our numerical results. Since myopic equilibrium policy only depends on the context dynamics and the treatment effect $\mathbb{E}(C|A=1,S=s)-\mathbb{E}(C|A=0,S=s)$, it remains the same at different levels of the burden effect. The myopic equilibrium policy is $\theta^{**} =[0.392, 0.372, 0.371, 0]$.  The bias and MSE in estimating the myopic equilibrium policy for $\tau=0.4$ is shown in table \ref{burden_bandit_bm}. The bias and MSE at other levels of the burden effect are the same. These results support with our conjecture that the estimated optimal policy by the bandit algorithm converges to the myopic equilibrium policy. 


\begin{table}[H]
  \begin{tabular}{l | *{4}{S[table-auto-round,
                 table-format=-1.3]}|*{4}{S[table-auto-round,
                 table-format=-1.3]}}
    \toprule
    \multirow{2}{*}{{Sample size (T)}} &
      \multicolumn{4}{c}{Bias} &
      \multicolumn{4}{c}{MSE} \\\cline{2-9}
      & {$\theta_0$} & {$\theta_1$} & {$\theta_2$} & {$\theta_3$} & {$\theta_0$} & {$\theta_1$} & {$\theta_2$} & {$\theta_3$} \\
      \midrule
    200 &-0.078345&-0.080799&-0.075323&0.004031&0.063196&0.042354&0.041083&0.035901\\\hline
    500&-0.0451&-0.035431&-0.028293&-0.0070048&0.029301&0.019342&0.016848&0.018687\\\hline
    \bottomrule
  \end{tabular}
  \caption{Burden effect: bias and MSE in estimating the myopic equilibrium policy for $\tau=0.4$. Bias=$\mathbb{E}(\hat{\theta}_t)-\theta^{**}$.}
  \label{burden_bandit_bm}
\end{table}

\subsection{Expected Cost is a Nonlinear function of the Cost Feature}\label{nonlin}

In this section, we investigate the performance of the online actor critic algorithm when the expected cost is a nonlinear function of the cost feature used in the critic step. In such scenarios, the linear actor critic algorithm finds the ``best" policy in two steps: first it projects the true cost function into the linear space spanned by the cost feature, then it finds the policy that minimizes the regularized cost function under the projection. In contrast, the true optimal policy is the policy that minimizes the regularized cost function without the projection. In this simulation, we are interested to see how the extra step of projection affects the estimation and inference of the optimal policy parameter. 

Recall that the cost feature is $f(S_t,A_t)=[1,S_{t,1},S_{t,2},S_{t,3},A_t,A_tS_{t,1},A_tS_{t,2},A_tS_{t,3}]$. In particular consider the case where the interaction term between $A_t$ and $S_{t,1}$ is a linear combination of a linear cost function and a nonlinear one:
\begin{align*}
C_{t} &= (1-\alpha)[10 - .4 S_{t,1}-.4 S_{t,2} -A_{t} \times (0.2 + 0.2 S_{t,1} + 0.2 S_{t,2})   + 0.4 S_{t,3} + \xi_{t,0} ] \\
 &   + \alpha[10 - .4 S_{t,1}^2-.4 S_{t,2} -A_{t} \times (0.2 + 0.2 S_{t,1}^2 + 0.2 S_{t,2})   + 0.4 S_{t,3} + \xi_{t,0}]\\
 &= 10 - .4 [(1-\alpha)S_{t,1}+\alpha S_{t,1}]-.4 S_{t,2} -A_{t} \times (0.2 + 0.2 [(1-\alpha)S_{t,1}+\alpha S_{t,1}] + 0.2 S_{t,2})  \\
 &+ 0.4 S_{t,3} + \xi_{t,0}
\end{align*}
The tuning parameter $\alpha\in [0,1]$ controls the amount of nonlinearity: when $\alpha=0$, the expected cost is the linear cost function used in the previous sections. Nonlinearity increasingly  dominates the interaction between $S_{t,1}$ and $A_t$ as $\alpha$ increases. The online actor critic algorithm, unaware of the possible nonlinearity in the cost function, uses the same cost feature and the same policy feature as in the previous sections. Recall the policy is parameterized as $\pi_{\theta}(S,1)= \frac{ e^{\theta0+\sum_{i=1}^3\theta_iS_i}}{1+e^{\theta0+\sum_{i=1}^3\theta_iS_i}}$.  Table \ref{nonlin_opt} in the supplementary material section \ref{sup_nonlin} provides the optimal $\theta$ values.  Table \ref{nonlin_bm200} and Table \ref{nonlin_bm500} show the bias and MSE of the linear actor critic algorithm at different levels of nonlinearity at sample size 200 and 500. The bias for estimating $\theta_i^*$, $i=1,2,3$ remains stable whereas the MSE inflates as the $\alpha$ increases. Both the bias and MSE for estimating $\theta_0^*$ increase as the cost function moves away from a linear structure. Table \ref{nonlin_c200} and table \ref{nonlin_c500} show the coverage rates of the confidence interval for $\theta^*$. The confidence interval coverages for $\theta_i^*$, $i=0.1,2$ deteriorate as the level of nonlinearity increases. However, the confidence level for $\theta_3^*=0$, the coefficient for $S_{t,3}$ which is not a useful tailoring variable, remains decent as the level of nonlinearity increases.


\begin{table}[H]
  \begin{tabular}{l | *{4}{S[table-auto-round,
                 table-format=-1.3]}| *{4}{S[table-auto-round,
                 table-format=-1.3]}}
    \toprule
    \multirow{2}{*}{{$\alpha$}} &
      \multicolumn{4}{c}{Bias} &
      \multicolumn{4}{c}{MSE} \\ \cline{2-9}
      & {$\theta_0$} & {$\theta_1$} & {$\theta_2$} & {$\theta_3$} & {$\theta_0$} & {$\theta_1$} & {$\theta_2$} & {$\theta_3$} \\
      \midrule
    0&-0.10027&-0.073682&-0.10293&-0.006647&0.062844&0.052352&0.051107&0.057183\\\hline
	0.2&-0.13737&-0.011584&-0.10898&-0.013299&0.064139&0.064654&0.049546&0.053185\\\hline
	0.4&-0.17875&0.012282&-0.10912&-0.013507&0.07568&0.098525&0.047819&0.047615\\\hline
	0.6&-0.21053&0.020088&-0.099335&-0.016966&0.083276&0.14226&0.043431&0.042186\\\hline
    \bottomrule
  \end{tabular}
  \caption{Nonlinear Cost: bias and MSE in estimating the optimal policy parameter at sample size 200. Bias=$\mathbb{E}(\hat{\theta}_T)-\theta^*$.}
  \label{nonlin_bm200}
\end{table}

\begin{table}[H]
  \begin{tabular}{l | *{4}{S[table-auto-round,
                 table-format=-1.3]}| *{4}{S[table-auto-round,
                 table-format=-1.3]}}
    \toprule
    \multirow{2}{*}{{$\alpha$}} &
      \multicolumn{4}{c}{Bias} &
      \multicolumn{4}{c}{MSE} \\ \cline{2-9}
      & {$\theta_0$} & {$\theta_1$} & {$\theta_2$} & {$\theta_3$} & {$\theta_0$} & {$\theta_1$} & {$\theta_2$} & {$\theta_3$} \\
      \midrule
0&-0.038169&-0.035733&-0.044717&-0.002232&0.024531&0.021786&0.02197&0.026732\\\hline
0.2&-0.080181&0.029508&-0.067326&-0.007644&0.026449&0.032378&0.023492&0.023355\\\hline
0.4&-0.10881&0.064703&-0.06932&-0.009877&0.030598&0.064228&0.023013&0.020424\\\hline
0.6&-0.13645&0.057955&-0.067828&-0.009448&0.037806&0.11235&0.021244&0.0184\\\hline
    \bottomrule
  \end{tabular}
  \caption{Nonlinear Cost: bias and MSE in estimating the optimal policy parameter at sample size 500. Bias=$\mathbb{E}(\hat{\theta}_T)-\theta^*$.}
  \label{nonlin_bm500}
\end{table}

\begin{table}[H]
\centering
\begin{tabular} {c| cccc}
\toprule
$\alpha$ & $\theta_0$ & $\theta_1$ & $\theta_2$ & $\theta_3$ \\ \hline
0&0.944&0.947&0.954&0.939\\\hline
0.2&0.926*&0.879*&0.942&0.935*\\\hline
0.4&0.892*&0.738*&0.922*&0.942\\\hline
0.6&0.835*&0.588*&0.914*&0.942\\\hline
\end{tabular}
\caption{Nonlinear Cost: coverage rates of percentile-t bootstrap confidence intervals for the optimal policy parameter at sample size 200. $\lambda$ is estimated online. Coverage rates significantly lower than $0.95$ are marked with asterisks (*).}
\label{nonlin_c200}
\end{table}

\begin{table}[H]
\centering
\begin{tabular} {c| cccc}
\toprule
$\alpha$ & $\theta_0$ & $\theta_1$ & $\theta_2$ & $\theta_3$ \\ \hline
0&0.971&0.961&0.966&0.958\\\hline
0.2&0.931*&0.875*&0.936&0.956\\\hline
0.4&0.885*&0.655*&0.924*&0.958\\\hline
0.6&0.837*&0.471*&0.915*&0.961\\\hline
\end{tabular}
\caption{Nonlinear Cost: coverage rates of percentile-t bootstrap confidence intervals for the optimal policy parameter at sample size 500. $\lambda$ is estimated online. Coverage rates significantly lower than $0.95$ are marked with asterisks (*).}
\label{nonlin_c500}
\end{table}

\section{Conclusion}\label{discuss}


In this article, we present a general framework to define optimal policies for use in JITAIs that encourages intervention variety. We also gave an online actor-critic algorithm to learn the optimal policy. Although the theoretical properties of the algorithm assume i.i.d.\ contexts, the numerical experiments show robustness of the algorithm to violations of this assumption. In particular, experiments show that performance of the algorithm, in term of bias, MSE and confidence interval coverage, is not affected by auto-correlation among contexts. Experiments also demonstrate some robustness of the algorithm when distribution of the context depends on previous actions. Furthermore, we conjecture that, when actions influence the distribution of context at later decision points, the contextual bandit algorithm converges to the myopic equilibrium policy. Our numerical experiments back up this conjecture. Theoretical proof of the conjecture, however, is an open question and requires future work. 

There are a few areas for which the actor-critic algorithm could be improved and extended. First, the linear expected reward assumption might be a bit strong in some scenarios, especially when a low dimension reward feature is used. When the assumption is deemed untenable, more sophisticated components should be added to the reward (cost) feature. To this end, both the actor-critic algorithm and the asymptotic theory should be extended to encompass the scenario where the dimension of the reward (cost) feature grows with the sample size. If one intends to use linear reward (cost) model with a fixed dimension of reward feature, we highly recommend frequent validation of the linear model using model diagnostic tools. Linear regression diagnostic tools can be used as the first line of defense. However, more sophisticated model checking methods for online learning need to be developed to make sure the reward (cost) model is adequate.  
Second, there is room for improvement in optimization in the actor step. Optimizing the estimated regularized average reward function is in general a non-convex optimization problem and could be time-consuming. In the proposed algorithm, optimization at decision point $t+1$ does not use the estimated policy parameters at previous decision points. In other words, the optimization is not incremental and may waste computing resources when the sample size gets large. Careful design of online optimization methods that leverages previous estimates will likely significantly improve the computational efficiency of the actor-critic algorithm and help in its practical adoption in mobile health applications. Third, the algorithm presented in this article learns a user's the optimal policy based \emph{solely} on his/her history. However, in order to speed up the learning it is attractive idea, especially in the beginning of the learning period, to pool data across multiple users. Methods and theories for learning based on multiple users need to be developed (e.g., see the work of~\cite{tomkins2021intelligentpooling}). 


\bibliographystyle{plainnat}

\bibliography{ref}

\newpage
\appendix
\pagenumbering{arabic}
\renewcommand{\thepage}{A\arabic{page}}
\section*{Supplementary Material}

\section{Proof of Lemma 1}\label{sup_lemma1}


\begin{proof}
Without the loss of generality, we assume that $0< s_1 < s_2 < ... < s_K$. Otherwise, if some $s_i$'s are negative, we can transform all the contexts to be positive by adding to $s_i$'s a constant greater than $\min_{1\leq i\leq K}s_i$. Denote this constant by $M$ and the corresponding policy parameter by $\tilde{\theta}$. There is a one-to-one correspondence between the two policy classes:

\begin{align*}
\tilde{\theta}_0 & = \theta_0-M\theta_1 \\
\tilde{\theta}_1 & =  \theta_1
\end{align*}
Therefore if the lemma holds when all contexts are positive the same conclusion hold in the general setting. We use $p(\theta)$ to denote the probability the probability of choosing action $A=1$ for policy $\pi_{\theta}$ at the K different values of context:
 \begin{align*}
 ( \frac{e^{\theta_0+\theta_1s_1}}{1+e^{\theta_0+\theta_1s_1}}, \frac{e^{\theta_0+\theta_1s_2}}{1+e^{\theta_0+\theta_1s_2}}, ..., \frac{e^{\theta_0+\theta_1s_K}}{1+e^{\theta_0+\theta_1s_K}})
 \end{align*}
 Notice that each entry in $p(\theta)$ is number between 0 and 1 with equality if the policy is deterministic at certain context. A key step towards proving deterministic optimal policy is to show the following closed convex hull equivalency:
\begin{align*}
conv( \{ p(\theta): \theta \in \mathbb{R}^2 \} )= conv ( \{ (\nu_1, ...,\nu_K), \nu_i \in\{0,1\}, \nu_1\leq ... \leq \nu_K \mbox{ or } \nu_1 \geq ...\geq \nu_K \} )
\end{align*}
We examine the limiting points of $p(\theta)$ when $\theta_0$ and $\theta_1$ tends to infinity. We consider the case where $\theta_0 \neq0$ and let $\theta_1=p\theta_0$ where $p$ is a fixed value. It holds that

\begin{displaymath}
\frac{e^{\theta_0+\theta_1s}}{1+e^{\theta_0+\theta_1s}} = \frac{e^{\theta_0(1+ps)}}{1+e^{\theta_0(1+ps)}} \rightarrow \left \{
\begin{array}{lr}
0: if \theta_0 \rightarrow -\infty, p>-1/s \\
0: if \theta_0 \rightarrow \infty, p<-1/s \\
1: if \theta_0 \rightarrow -\infty, p<-1/s \\
1: if \theta_0 \rightarrow \infty, p>-1/s 
\end{array}
\right.
\end{displaymath}

\noindent It follows that when $\theta_0 \rightarrow -\infty$ and $p$ scans through the $K+1$ intervals on $\mathbb{R}$: $(-\infty, -1/s_1]$, $(-1/s_1, -1/s_2]$, . ... $( -1/s_K, \infty)$, $p(\theta)$ approaches the following $K+1$ limiting points:

\begin{eqnarray*}
(1,1,..., 1) \\
(0,1,..., 1) \\
...\\
(0,0,..., 1) \\
(0,0,..., 0) 
\end{eqnarray*}
when $\theta_0 \rightarrow \infty$ and $p$ scans through the $K+1$ intervals, $p(\theta)$ approaches the following $K+1$ limiting points 

\begin{eqnarray*}
(0,0,..., 0) \\
(1,0,..., 0) \\
...\\
(1,1,..., 0) \\
(1,1,..., 1)
\end{eqnarray*}
There are in total $2K$ limiting points:  $\{ (\nu_1, ...,\nu_K), \nu_i \in\{0,1\}, \nu_1\leq ... \leq \nu_K \mbox{ or } \nu_1 \geq ...\geq \nu_K \}$. Each limiting point is a $K$ dimensional vector with 0-1 entries in an either increasing or decreasing order. Now we show that any $p(\theta), \theta\in \mathbb{R}^2$ is a convex combination of the limiting points. Let $p(\theta)=[p_1(\theta), p_2(\theta), ..., p_K(\theta)]$. In fact, 

\begin{itemize}
\item If $\theta_1=0$, $p(\theta) = (1-p_1(\theta))(0,0,..., 0) + p_1(\theta)(1,1,..., 1)$
\item If $\theta_1>0$, we have $0<p_1(\theta)<p_2 (\theta)<... <p_K(\theta) <1$ and
\begin{align*}
p(\theta)=p_1(\theta)(1,1,..., 1)+(p_2(\theta)-p_1(\theta))(0,1,..., 1)+...\\+(p_K(\theta)-p_{K-1}(\theta))(0,0,..., 1)+(1-p_K(\theta))*(0,0,..., 0)
\end{align*}
\item If $\theta_1<0$, we have $1>p_1(\theta)>p_2(\theta) >... >p_K (\theta)>0$ and
\begin{align*}
p(\theta)=(1-p_1(\theta))*(0,0,..., 0)+(p_1(\theta)-p_2(\theta))(1,0,..., 0)+ ... \\+ (p_K(\theta)-p_{K-1}(\theta))(1,1,..., 0) + p_K(\theta)(1,1,..., 1)
\end{align*}
\end{itemize}


Returning to optimizing the average reward, we denote $\alpha_i=P(S=s_i) (\mathbb{E}(R|S=s_i,A=1)-\mathbb{E}(R|S=s_i,A=0))$. 

\begin{align}
\max_{\theta} V^*(\theta) & =  \max_{\theta}  \sum_{i=1}^K \alpha_i p_i(\theta) \\ 
& = \max_{ (p_1,...,p_K)\in \{ p(\theta): \theta \in \mathbb{R}^2 \} } \sum_{i=1}^K \alpha_i p_i  \label{23}\\
& = \max_{ (p_1,...,p_K)\in conv( \{ p(\theta): \theta \in \mathbb{R}^2 \} ) } \sum_{i=1}^K \alpha_i p_i \label{24} \\
& = \max_{ (p_1,...,p_K)\in conv ( \{ (\nu_1, ...,\nu_K), \nu_i \in\{0,1\}, \nu_1\leq ... \leq \nu_K \mbox{ or } \nu_1 \geq ...\geq \nu_K \} )} \sum_{i=1}^K \alpha_i p_i  \label{25}
\end{align}
. Equation from \eqref{23} to \eqref{24} is followed by the fact that the objective function is linear (and thus convex) in $p_i$'s. Equivalency from \eqref{24} to \eqref{25} is a direct product of the closed convex hull equivalency. Theories in linear programming theory suggests that one of the maximal points is attained at the vertices of the convex hull of the feasible set. Therefore we have proved that one of the policy that maximizes $V^*(\theta)$ is deterministic. 
\end{proof}

\section{One-to-one Correspondence between Constrained and Unconstrained Optimization}\label{sup_1to1}
The constrained optimization finds the policy that maximizes the average reward subject to the quadratic constraint, i.e., 
\begin{align}
\max_{\theta}  V^*(\theta), \mbox{  s. t. } \theta^T \mathbb{E}[g(S)^Tg(S)] \theta \leq (\log(\frac{p_0}{1-p_0}))^2 \alpha
\label{constr_optim}
\end{align}
The unconstrained optimization finds the policy that maximizes the regularized average reward:
\begin{align}
\theta^* =\argmax_{\theta} J^*_{\lambda}(\theta)
\label{unc_optim}
\end{align}
A natural question to ask, when transforming the constrained optimization problem \eqref{constr_optim} to an unconstrained one \eqref{unc_optim}, does a Lagrangian multiplier exist for each level of stringency of the quadratic constraint? While the correspondence between the constrained optimization and the unconstrained one may not seem so obvious due to the lack of convexity in $V^*(\theta)$, we established the following Lemma \ref{lemma_unique} given Assumption \ref{singleton} and Assumption \ref{assumption_policy_feature}. Assumption \ref{singleton} assumes the uniqueness of the global maximum for all positive $\lambda$. 
\begin{assumption}
For every $0<\lambda<\infty$, the global maximum of the regularized average reward is a singleton. 
\begin{align*}
J^*_{\lambda}(\theta)=\sum_{s\in \mathcal{S}}d(s)\sum_{a\in\mathcal{A}}E(R|S=s,A=a) \pi_{\theta}(s,a)-\lambda \theta^T \mathbb{E}[g(S)g(S)^T] \theta.
\end{align*}
\label{singleton}
\end{assumption}


\begin{lemma}
If the maximizer of the average reward function $V^*(\theta)$ is deterministic, i.e. $P(\pi_{\theta}(A=1|S)=1)>0$ or $P(\pi_{\theta}(A=0|S)=1)>0$, under Assumption \ref{assumption_policy_feature} and \ref{singleton}, for every $K= (\log(\frac{p_0}{1-p_0}))^2 \alpha> 0$ there  exist a $\lambda>0$ such that the solution of the constrained optimization problem \ref{constr_optim} is the solution of the unconstrained optimization problem \ref{unc_optim}. 
\label{lemma_unique}
\end{lemma}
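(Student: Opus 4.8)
The plan is to make the classical Lagrangian-multiplier bridge rigorous by tracking the unconstrained maximizer as the penalty weight $\lambda$ varies, and then hitting an arbitrary constraint level by an intermediate-value argument. Write $\Sigma_g=\mathbb{E}[g(S)g(S)^T]$ and $Q(\theta)=\theta^T\Sigma_g\theta$, and for each $\lambda>0$ let $\theta^*_\lambda=\argmax_\theta J^*_\lambda(\theta)$, which is a singleton by Assumption \ref{singleton}. Since $V^*$ is bounded (bounded rewards, Assumption \ref{a_bound}) while Assumption \ref{a_pd} makes $\Sigma_g$ positive definite and hence $Q$ coercive ($Q(\theta)\to\infty$ as $|\theta|_2\to\infty$), the objective $J^*_\lambda=V^*-\lambda Q$ tends to $-\infty$ as $|\theta|_2\to\infty$ for every $\lambda>0$, so $\theta^*_\lambda$ is finite; comparing against $\theta=0$ (where $Q(0)=0$) gives the a priori bound $\lambda\,Q(\theta^*_\lambda)\le V^*(\theta^*_\lambda)-V^*(0)$. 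The first key step is a Lagrangian sufficiency observation that, crucially, needs no concavity of $V^*$: set $K(\lambda):=Q(\theta^*_\lambda)$, and for any $\theta$ feasible at level $K(\lambda)$, i.e. $Q(\theta)\le K(\lambda)$, optimality of $\theta^*_\lambda$ for $J^*_\lambda$ gives $V^*(\theta)-\lambda Q(\theta)\le V^*(\theta^*_\lambda)-\lambda Q(\theta^*_\lambda)$; since $\lambda\big(Q(\theta)-Q(\theta^*_\lambda)\big)\le 0$, this forces $V^*(\theta)\le V^*(\theta^*_\lambda)$. Thus the unconstrained maximizer $\theta^*_\lambda$ also solves the constrained problem \eqref{constr_optim} at the level $K=K(\lambda)$.

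It therefore suffices to show that, as $\lambda$ ranges over $(0,\infty)$, the attained level $K(\lambda)$ ranges over all of $(0,\infty)$, so that the prescribed $K=(\log(p_0/(1-p_0)))^2\alpha$ is met. To this end I would first establish that $\lambda\mapsto\theta^*_\lambda$ is continuous, via a maximum-theorem argument: fix $\lambda_0>0$ and $\lambda_n\to\lambda_0$; the a priori bound confines $\{\theta^*_{\lambda_n}\}$ to a compact set, the map $(\lambda,\theta)\mapsto J^*_\lambda(\theta)$ is jointly continuous, so every limit point of $\theta^*_{\lambda_n}$ maximizes $J^*_{\lambda_0}$ and hence equals $\theta^*_{\lambda_0}$ by the uniqueness in Assumption \ref{singleton}; consequently $\theta^*_{\lambda_n}\to\theta^*_{\lambda_0}$ and $K(\lambda)=Q(\theta^*_\lambda)$ is continuous in $\lambda$. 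The endpoint $\lambda\to\infty$ is easy: the bound $K(\lambda)\le (V^*(\theta^*_\lambda)-V^*(0))/\lambda$ forces $K(\lambda)\to 0$.

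The endpoint $\lambda\to 0^+$ is the main obstacle, and it is exactly where the hypothesis that the maximizer of $V^*$ is deterministic enters. I claim $K(\lambda)\to\infty$. Suppose instead that $K(\lambda_n)$ stayed bounded along some $\lambda_n\to 0^+$; then coercivity of $Q$ would keep $\theta^*_{\lambda_n}$ in a compact set, and passing to a convergent subsequence $\theta^*_{\lambda_n}\to\bar\theta$ with $\bar\theta$ finite and letting $n\to\infty$ in $V^*(\theta^*_{\lambda_n})-\lambda_n Q(\theta^*_{\lambda_n})\ge V^*(\theta)-\lambda_n Q(\theta)$ would yield $V^*(\bar\theta)\ge V^*(\theta)$ for all $\theta$, i.e. a finite global maximizer of $V^*$. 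But a finite $\bar\theta$ gives $\pi_{\bar\theta}(s,1)\in(0,1)$ for every $s$, a strictly stochastic optimal policy, contradicting the standing hypothesis that the optimum of $V^*$ is attained only at a deterministic policy (cf. Lemma \ref{lemma_dtm}); hence $K(\lambda)\to\infty$. With $K(\cdot)$ continuous on $(0,\infty)$ and onto $(0,\infty)$, the intermediate value theorem supplies, for the prescribed constraint level, a $\lambda>0$ with $K(\lambda)$ equal to it, and for that $\lambda$ the Lagrangian sufficiency step of the first paragraph identifies $\theta^*_\lambda$ as a solution of \eqref{constr_optim}, which establishes the desired correspondence between \eqref{constr_optim} and \eqref{unc_optim}. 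The only genuinely delicate point is the $\lambda\to0^+$ limit, since it is where the deterministic-optimum hypothesis must be converted into a statement about $\theta^*_\lambda$ escaping to infinity; the remaining pieces are standard coercivity and continuity arguments.
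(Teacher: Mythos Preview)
Your argument is correct and follows the same skeleton as the paper's proof: Lagrangian sufficiency (the paper cites Proposition 3.3.4 in Bertsekas, you prove it directly), continuity of $\lambda\mapsto\theta^*_\lambda$ (the paper invokes Fiacco--Ishizuka, you give a self-contained Berge/maximum-theorem argument), and endpoint behavior of $K(\lambda)=Q(\theta^*_\lambda)$ as $\lambda\to 0^+$ and $\lambda\to\infty$. The only structural difference is that the paper additionally proves $K(\lambda)$ is monotone in $\lambda$ (via the two-line swapping inequality $J^*_{\lambda_2}(\theta_2)\ge J^*_{\lambda_2}(\theta_1)$ and $J^*_{\lambda_1}(\theta_1)\ge J^*_{\lambda_1}(\theta_2)$) and then combines monotonicity with continuity, whereas you bypass monotonicity and appeal to the intermediate value theorem directly; both routes yield surjectivity onto $(0,\infty)$. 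Your treatment of the $\lambda\to 0^+$ endpoint is in fact more explicit than the paper's, which simply asserts that the penalized maximizer approaches the unpenalized one and hence $Q(\theta^*_\lambda)\to\infty$; you supply the missing compactness/subsequence contradiction that a bounded $K(\lambda_n)$ would produce a finite maximizer of $V^*$, which cannot satisfy the deterministic-policy hypothesis since $\pi_{\bar\theta}(s,1)\in(0,1)$ for finite $\bar\theta$.
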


\begin{proof}
Let $\theta^*_{\lambda}$ be one of the global maxima of the Lagrangian function: $\theta^*_{\lambda}= \argmax_{\theta} J^*_{\lambda}(\theta)$. Let $\beta_{\lambda}=\theta^{*T}_{\lambda} \mathbb{E}[g(S)^Tg(S)] \theta^*_{\lambda}$. By Proposition 3.3.4 in \cite{bertsekas1999nonlinear}, $\theta^*_{\lambda}$ is a global maximum of constrained problem:
\begin{align*}
&\max_{\theta} V^*(\theta) \\
& \mbox{ s.t.    }\theta^T \mathbb{E}[g(S)^Tg(S)] \theta \leq \beta_{\lambda}
\end{align*}
In addition, the stringency of the quadratic constraint increases monotonically with the value of the Lagrangian coefficient $\lambda$. Let $0<\lambda_1<\lambda_2$ and with some abuse of notation, let $\theta_1$ and $\theta_2$ be (one of) the global maximals of Lagrangian function $J^*_{\lambda_1}(\theta)$ and $J^*_{\lambda_2}(\theta)$. It follows that 

\begin{align*}
& -V^*(\theta_2)+\lambda_2 \theta_2^T \mathbb{E}[g(S)^Tg(S)] \theta_2 \\
\leq & -V^*(\theta_1)+\lambda_2 \theta_1^T \mathbb{E}[g(S)^Tg(S)] \theta_1 \\
= &  -V^*(\theta_1)+\lambda_1 \theta_1^T \mathbb{E}[g(S)^Tg(S)] \theta_1 + (\lambda_2-\lambda_1)  \theta_1^T \mathbb{E}[g(S)^Tg(S)] \theta_1 \\
\leq & -V^*(\theta_2)+\lambda_1 \theta_2^T \mathbb{E}[g(S)^Tg(S)] \theta_2 + (\lambda_2-\lambda_1)  \theta_1^T \mathbb{E}[g(S)^Tg(S)] \theta_1
\end{align*}
It follows that
\begin{align*}
\theta_1^T \mathbb{E}[g(S)^Tg(S)] \theta_1 \geq  \theta_2^T \mathbb{E}[g(S)^Tg(S)] \theta_2 .
\end{align*}
As $\lambda$ approaches 0, the maximal of the regularized average reward approaches the maximal of the average reward function, for which $\mathbb{E}(\theta^Tg(S))^2 \rightarrow \infty$. As $\lambda$ increases towards $\infty$, maximal of the regularized average reward approaches the random policy with $\theta=0$. It's only left to show that $\theta^{*T}_{\lambda}\mathbb{E}[g(S)^Tg(S)] \theta^{*}_{\lambda}$ is a continuous function of $\lambda$. Under Assumption \ref{singleton}, we can verify that conditions in Theorem 2.2 in \cite{fiacco1990sensitivity} holds. This theorem implies that the solution set of the unconstrained optimization \ref{unc_optim} is continuous in $\lambda$, sufficient to conclude the continuity of $\theta^{*T}_{\lambda}\mathbb{E}[g(S)^Tg(S)] \theta^{*}_{\lambda}$.
\end{proof}

\section{Proof of Theorem~\ref{thm:J_convergence}}
\begin{lemma}[Boundedness of $\theta^*$ and $\hat{\theta}_t$ for large $t$]
	For fixed regularization parameter $\lambda$, $\left\|\theta^*\right\|_2^2\leq\frac{1}{\lambda\lambda_p}$, $\left\|\hat{\theta}_t\right\|_2^2\leq \frac{4}{\lambda\lambda_p}$ with prob at least $1-p\left(2e\right)^{-\frac{t\lambda_p}{2}}$.
	\label{lemma:bdd}
\end{lemma}
\begin{proof}
	By definition, $J(\theta^*,\mu^*) \geq J(0,\mu^*)$, we have
	\begin{align*}
	0 &\leq \lambda \theta^{*T} \mathbb{E}\left[g(S)g(S)^T\right] \theta^* \leq \int_{s\in \mathcal{S}}d(s)\sum_{a\in \mathcal{A}} f(s,a)^T\mu^* \left(\pi_{\theta^*}\left(s,a\right)-1/2\right)ds \leq 1\\
	\Rightarrow & \lambda||\theta^*||_2^2 \lambda_p \leq \lambda \theta^{*T} \mathbb{E}\left[g(S)g(S)^T\right] \theta^* \leq 1\\
	\Rightarrow & \left\|\theta^*\right\|_2^2 \leq \frac{1}{\lambda\lambda_p}.
	\end{align*}
	For the boundedness of $\left\|\hat{\theta}_t\right\|_2^2$, we use the matrix Chernoff inequality. For $\forall 0<\delta<1$, we have
	\begin{align*}
	P\left(\lambda_{min}\left(\frac{1}{t}\sum_{\tau=1}^tg\left(S_\tau\right)g\left(S_\tau\right)^T\right) \geq \left(1-\delta\right)\lambda_p\right) \geq 1-p\left[\frac{e^{-\delta}}{\left(1-\delta\right)^{1-\delta}}\right]^{t\lambda_p}.
	\end{align*}
	Take $\delta=\frac{1}{2}$, it becomes
	\begin{align*}
	P\left(\lambda_{min}\left(\frac{1}{t}\sum_{\tau=1}^tg\left(S_\tau\right)g\left(S_\tau\right)^T\right) \geq \frac{1}{2}\lambda_p\right) \geq 1-p\left(2e\right)^{-\frac{t\lambda_p}{2}}.
	\end{align*}
	By definition, $\hat{J}_t\left(\hat{\theta}_t,\hat{\mu}_t\right)\geq \hat{J}_t\left(0,\hat{\mu}_t\right)$, so
	\begin{align*}
	0 &\leq \lambda \hat{\theta}_t^T \left(\frac{1}{t}\sum_{\tau=1}^tg\left(S_\tau\right)g\left(S_\tau\right)^T\right) \hat{\theta}_t \leq \frac{1}{t}\sum_{\tau=1}^t\sum_{a\in \mathcal{A}} r_{\hat{\mu}_t}(S_\tau,a)\left(\pi_\theta(S_\tau,a)-\frac{1}{2}\right)\leq 2
	\end{align*}
	Thus,
	\begin{align*}
	P\left(\left\|\hat{\theta}_t\right\|_2^2 \leq \frac{4}{\lambda\lambda_p}\right) \geq 1-p\left(2e\right)^{-\frac{t\lambda_p}{2}}
	\end{align*}
\end{proof}

\begin{proof}[Proof of Theorem~\ref{thm:J_convergence}]
	\begin{align}
	0\leq J(\theta^*,\mu^*)-J(\hat{\theta}_t,\mu^*) & \leq
	\left|J(\theta^*,\mu^*) - \tilde{J}_t(\theta^*,\mu^*)\right| \label{equ:hoeff}\\
	&+ \left|\tilde{J}_t(\theta^*,\mu^*)-\hat{J}_t(\theta^*,\mu^*)\right|\label{equ:J_approx_fix}\\
	&+ \left|\hat{J}_t(\theta^*,\mu^*)-\hat{J}_t(\theta^*,\hat{\mu}_t)\right| \label{equ:J_mu_fix}\\
	&+ \hat{J}_t(\theta^*,\hat{\mu}_t) - \hat{J}_t(\hat{\theta}_t,\hat{\mu}_t) \label{equ:easy}\\
	&+ \left|\hat{J}_t(\hat{\theta}_t,\hat{\mu}_t)-\hat{J}_t(\hat{\theta}_t,\mu^*)\right|\label{equ:J_mu_random}\\
	&+ \left|\hat{J}_t(\hat{\theta}_t,\mu^*) - \tilde{J}_t(\hat{\theta}_t,\mu^*)\right| \label{equ:J_approx_random}\\
	&+ \left|\tilde{J}_t(\hat{\theta}_t,\mu^*) - J(\hat{\theta}_t,\mu^*)\right| \label{equ:hoeff_random}
	\end{align}
	
	Next, we show each part of above decomposition converges to zero with high probability.\\
	
	\noindent
	\textbf{Equation~\ref{equ:hoeff}:} By Lemma~\ref{lemma:bdd}, we have
	\begin{align*}
	\left|X_\tau\right|\triangleq\left|\sum_{a\in \mathcal{A}}f(S_\tau,a)^T\mu^*\pi_{\theta^*}(S_\tau,a) - \lambda \theta^{*T}g(S_\tau)g(S_\tau)^T\theta^*\right| \leq 2+\lambda\left\|\theta^*\right\|_2^2 \leq 2+\frac{1}{\lambda_p}.
	\end{align*}
	Then by Hoeffding's inequality, using $-2-\frac{1}{\lambda_p}\leq X_\tau\leq 2+\frac{1}{\lambda_p}$, we have
	\begin{align*}
	P\left(\left|J(\theta^*,\mu^*) - \tilde{J}_t(\theta^*,\mu^*)\right| > \epsilon\right) \leq 2\exp\left\{-\frac{t\epsilon^2}{2\left(2+\frac{1}{\lambda_p}\right)^2}\right\}.
	\end{align*}
	
	\noindent
	\textbf{Equation~\ref{equ:J_approx_fix} and Equation~\ref{equ:J_approx_random}:} By definition of functions $\hat{J}_t$ and $\tilde{J}_t$, we have
	\begin{align*}
	\left|\tilde{J}_t(\theta^*,\mu^*)-\hat{J}_t(\theta^*,\mu^*)\right| = 0\\
	\left|\hat{J}_t(\hat{\theta}_t,\mu^*) - \tilde{J}_t(\hat{\theta}_t,\mu^*)\right| = 0.
	\end{align*}
	Because $|f(S,a)^T\mu^*|\leq 2$ always holds for any $(S,a)$.
	\\
	\\
	\noindent
	\textbf{Equation~\ref{equ:easy}:} By $\hat{\theta}_t\triangleq \argmax_{\theta}\hat{J}_t(\theta,\hat{\mu}_t)$, we have
	\begin{align*}
	\hat{J}_t(\theta^*,\hat{\mu}_t) - \hat{J}_t(\hat{\theta}_t,\hat{\mu}_t)\leq 0.
	\end{align*}
	
	\noindent
	\textbf{Equation~\ref{equ:hoeff_random}:}
	Define set $S_\lambda = \{\theta: ||\theta||_2 \leq \sqrt{\frac{4}{\lambda\lambda_p}}:=M_\lambda \}$. Then there exists an $\epsilon_0$-net $\mathcal{N_\lambda}$ of $S_\lambda$. It is well known that the covering number can be bounded by $\left|N_\lambda\right| \leq (\frac{3M_\lambda}{\epsilon_0})^p$. Thus, by union bound we have,
	\begin{align*}
	P\left(\sup_{\theta \in \mathcal{N_\lambda}} \left|\tilde{J}_t(\theta,\mu^*) - J(\theta,\mu^*)\right|>\epsilon\right) \leq 2\left(\frac{3M_\lambda}{\epsilon_0}\right)^p \exp\left\{-\frac{t\epsilon^2}{2\left(2+\frac{1}{\lambda_p}\right)^2}\right\}
	\end{align*}
	Consider $\forall \theta \in S_\lambda$, one can show $J\left(\theta,\mu\right)$ and $\tilde{J}_t\left(\theta,\mu\right)$ are $\left(2+2\lambda M_\lambda^2\right)$-Lipschitz functions in $\theta$, thus we have below decompositions:
	\begin{align*}
	\left|\tilde{J}_t(\theta,\mu^*) - J(\theta,\mu^*)\right| \leq & \left|\tilde{J}_t(\theta,\mu^*) - \tilde{J}_t(\hat{\theta},\mu^*)\right|+\left|\tilde{J}_t(\hat{\theta},\mu^*)-J(\hat{\theta},\mu^*)\right|+\left|J(\hat{\theta},\mu^*)-J(\theta,\mu^*)\right|\\
	\leq & \left(4+4\lambda M_\lambda^2\right) \epsilon_0+\sup_{\theta \in \mathcal{N_\lambda}} \left|\tilde{J}_t(\theta,\mu^*) - J(\theta,\mu^*)\right|.
	\end{align*}
	where $\hat{\theta}$ is the closest point to $\theta$ in the $\epsilon_0$-net $\mathcal{N_\lambda}$. Set $\epsilon_0 = \frac{\epsilon}{\left(8+8\lambda M_\lambda^2\right)}$, we can bound equation~\ref{equ:hoeff_random} by
	\begin{align*}
	& P\left(\left|\tilde{J}_t(\hat{\theta}_t,\mu^*) - J(\hat{\theta}_t,\mu^*)\right| \geq \epsilon\right)\\
	\leq & P\left(\left\{\left|\tilde{J}_t(\hat{\theta}_t,\mu^*) - J(\hat{\theta}_t,\mu^*)\right| \geq \epsilon \right\} \cap \left\{\hat{\theta}_t \in S_\lambda \right\} \right) + P\left(\hat{\theta}_t \notin S_\lambda\right)\\
	\leq & P\left(\sup_{\theta \in S_\lambda} \left|\tilde{J}_t(\theta,\mu^*) - J(\theta,\mu^*)\right| \geq \epsilon\right) + p\left(2e\right)^{-\frac{t\lambda_p}{2}} \\
	\leq & P\left(\left(4+4\lambda M_\lambda^2\right) \epsilon_0+\sup_{\theta \in \mathcal{N_\lambda}} \left|\tilde{J}_t(\theta,\mu^*) - J(\theta,\mu^*)\right|>\epsilon\right) + p\left(2e\right)^{-\frac{t\lambda_p}{2}}\\
	\leq & P\left(\sup_{\theta \in \mathcal{N_\lambda}} \left|\tilde{J}_t(\theta,\mu^*) - J(\theta,\mu^*)\right|>\epsilon/2\right) + p\left(2e\right)^{-\frac{t\lambda_p}{2}}\\
	\leq & 2\left(\frac{\left(24+24\lambda M_\lambda^2\right)M_\lambda}{\epsilon}\right)^p \exp\left\{-\frac{t\epsilon^2}{8\left(2+\frac{1}{\lambda_p}\right)^2}\right\}+ p\left(2e\right)^{-\frac{t\lambda_p}{2}}.
	\end{align*}
	
	\noindent
	\textbf{Equation~\ref{equ:J_mu_fix}:} By eigenvalue decomposition, we have $\mathbb{E}\left[f(S,a_i)f(S,a_i)^T\right] = U_i\Sigma_i U_i^T$, where the dimensions are $U_i$: $k$ by $r_i$, $\Sigma_i: r_i$ by $r_i$ and $rank(\Sigma_i)=r_i$.
	Thus $P\left(f(S_\tau,a_i) \in col(U_i)\right) = 1$. By this property, we thus have 
	\begin{align}
	P\left(U_iU_i^Tf(S_\tau,a_i) = f(S_\tau,a_i)\right)=1 \ \ (i=0,1).\label{equ:proj}
	\end{align}

	By Lemma~\ref{lemma:bdd}, we show
	\begin{align*}
	P\left(\tilde{p}_0\leq \pi_{\hat{\theta}_{t}}\left(S,a_1\right) = \frac{\exp\left(g(S)^T\hat{\theta}_t\right)}{1+\exp\left(g(S)^T\hat{\theta}_t\right)} \leq 1-\tilde{p}_0\right) \geq 1-p\left(2e\right)^{-\frac{t\lambda_p}{2}},
	\end{align*}
	where $\tilde{p}_0 := \frac{1}{1+\exp\left(\sqrt{\frac{4}{\lambda\lambda_p}})\right)}$.
	
	Equation~\ref{equ:J_mu_fix} can be written as:
	\begin{align*}
	\left|\hat{J}_t(\theta^*,\mu^*)-\hat{J}_t(\theta^*,\hat{\mu}_t)\right| &= \left|\frac{1}{t}\sum_{\tau=1}^t\sum_{a\in \mathcal{A}} \left(r_{\mu^*}(S_\tau,a)-r_{\hat{\mu}_t}(S_\tau,a)\right)\pi_\theta(S_\tau,a)\right|\\
	&= \left|\frac{1}{t}\sum_{\tau=1}^t\sum_{a\in \mathcal{A}} \left(f(S_\tau,a)^T\mu^*-r_{\hat{\mu}_t}(S_\tau,a)\right)\pi_\theta(S_\tau,a)\right|.
	\end{align*}
	
	Since inequality $\left|r_{\hat{\mu}_t}(S_\tau,a_0)-f(S_\tau,a)^T\mu^*\right| \leq \left|f(S_\tau,a)^T\hat{\mu}_t-f(S_\tau,a)^T\mu^*\right|$ always holds, we only need to bound the larger term $\left|f(S_\tau,a)^T\hat{\mu}_t-f(S_\tau,a)^T\mu\right|$.
	
	\begin{align*}
	&P\left(\left|f(S_\tau,a_0)^T(\mu^*-\hat{\mu}_t)\right|>\epsilon\right)\\
	=& P\left(\left|f(S_\tau,a_0)^TU_0U_0^T(\mu^*-\hat{\mu}_t)\right|>\epsilon\right)\\
	\leq  &P\left(\left\|U_0U_0^T(\mu^*-\hat{\mu}_t)\right\|_2>\epsilon\right) \\
	= &P\left(\left\|U_0U_0^T\left(\frac{1}{t}\zeta I_k+\frac{1}{t}\sum_{i=1}^t f(S_i,A_i)f(S_i,A_i)^T\right)^{-1}\left(\frac{1}{t}\sum_{i=1}^t f(S_i,A_i)\epsilon_i-\frac{\zeta}{t}\mu^*\right)\right\|>\epsilon\right) \\
	\leq &P\left(\lambda_{max}\left(U_0U_0^T\left(\frac{1}{t}\zeta I_k+\frac{1}{t}\sum_{i=1}^t f(S_i,A_i)f(S_i,A_i)^T\right)^{-1}\right)\left\|\frac{1}{t}\sum_{i=1}^t f(S_i,A_i)\epsilon_i-\frac{\zeta}{t}\mu^*\right\|>\epsilon\right)\\
	\leq &P(\lambda_{max}\left(U_0U_0^T\left(\frac{1}{t}\zeta I_k+\frac{1}{t}\sum_{i=1}^t f(S_i,A_i)f(S_i,A_i)^T\mathbb{1}_{\{A_i=a_0\}}\right)^{-1}\right)\\
	\times&\left\|\frac{1}{t}\sum_{i=1}^t f(S_i,A_i)\epsilon_i-\frac{\zeta}{t}\mu^*\right\|>\epsilon)
	\end{align*}
	
	By eigenvalue decomposition, we write $\frac{1}{t}\sum_{i=1}^t f(S_i,A_i)f(S_i,A_i)^T\mathbb{1}_{\{A_i=a_0\}}:=U_0\Sigma_{(t)}U_0^T$, let $V_0$ denote the orthonormal matrix where its column vectors span the complementary subspace of $U_0$'s column space. Thus we have $U_0^TV_0 = 0$ and $U_0U_0^T+V_0V_0^T=I_k$. So above term can be written as
	
	\begin{align*}
	=&P\left(\lambda_{max}\left(U_0U_0^T\left(\frac{1}{t}\zeta\left(U_0U_0^T+V_0V_0^T\right)+U_0\Sigma_{(t)}U_0^T\right)^{-1}\right)\left\|\frac{1}{t}\sum_{i=1}^t f(S_i,A_i)\epsilon_i-\frac{\zeta}{t}\mu^*\right\|>\epsilon\right)\\
	= & P\left(\lambda_{max}\left(U_0U_0^T\left(\frac{1}{t}
	\zeta V_0V_0^T+U_0\left(\Sigma_{(t)}+\frac{1}{t}\zeta I\right)U_0^T\right)^{-1}\right)\left\|\frac{1}{t}\sum_{i=1}^t f(S_i,A_i)\epsilon_i-\frac{\zeta}{t}\mu^*\right\|>\epsilon\right)\\
	= & P\left(\lambda_{max}\left(U_0U_0^T\left(\frac{1}{\zeta}
	t V_0V_0^T+U_0\left(\Sigma_{(t)}+\frac{1}{t}\zeta I\right)^{-1}U_0^T\right)\right)\left\|\frac{1}{t}\sum_{i=1}^t f(S_i,A_i)\epsilon_i-\frac{\zeta}{t}\mu^*\right\|>\epsilon\right)\\
	=& P\left(\lambda_{max}\left(U_0\left(\Sigma_{(t)}+\frac{1}{t}\zeta I\right)^{-1}U_0^T\right)\left\|\frac{1}{t}\sum_{i=1}^t f(S_i,A_i)\epsilon_i-\frac{\zeta}{t}\mu^*\right\|>\epsilon\right)\\
	\leq& P\left(\lambda_{min}^{-1}\left(\Sigma_{(t)}\right)\left\|\frac{1}{t}\sum_{i=1}^t f(S_i,A_i)\epsilon_i-\frac{\zeta}{t}\mu^*\right\|>\epsilon\right)\\
	=& P\left(\lambda_{min}^{-1}\left(\Sigma_{(t)}\right) \left\|\frac{1}{t}\sum_{i=1}^t f(S_i,A_i)\epsilon_i-\frac{\zeta}{t}\mu^*\right\|>\epsilon \cap \text{rank}(\Sigma_{(t)})=r_0\right)\\
	+& P\left(\lambda_{min}^{-1}\left(\Sigma_{(t)}\right) \left\|\frac{1}{t}\sum_{i=1}^t f(S_i,A_i)\epsilon_i-\frac{\zeta}{t}\mu^*\right\|>\epsilon \cap \text{rank}(\Sigma_{(t)})\neq r_0\right)\\
	\leq & P\left(\lambda_{(r_0)}^{-1}\left(\Sigma_{(t)}\right)\left\|\frac{1}{t}\sum_{i=1}^t f(S_i,A_i)\epsilon_i-\frac{\zeta}{t}\mu^*\right\|>\epsilon\right)+P\left(\text{rank}(\Sigma_{(t)})\neq r_0\right)
	\end{align*}
	
	In order to upper bound $P\left(\left|f(S_\tau,a_0)^T(\mu^*-\hat{\mu}_t)\right|>\epsilon\right)$, we only need to upper bound the two probabilities in above term.
	
	\textbf{Part 1: lower bound $\lambda_{(r_0)}\left(\frac{1}{t}\sum_{i=1}^t f(S_i,A_i)f(S_i,A_i)^T \mathbb{1}_{\{A_i=a_0\}}\right)$}
	
	\begin{align*}
	&\lambda_{(r_0)}\left(\frac{1}{t}\sum_{i=1}^t f(S_i,A_i)f(S_i,A_i)^T \mathbb{1}_{\{A_i=a_0\}}\right)\\
	&\lambda_{(r_0)}\left(\frac{1}{t}\sum_{i=1}^t f(S_i,a_0)f(S_i,a_0)^T \mathbb{1}_{\{A_i=a_0\}}\right)\\
	\geq & \lambda_{(r_0)}\left(\frac{1}{t}\sum_{i=t/2}^t f(S_i,a_0)f(S_i,a_0)^T \mathbb{1}_{\{A_i=a_0\}}\right) (\text{By Weyl's \ inequality})\\
	\geq & \lambda_{(r_0)}\left(\frac{1}{t}\sum_{i=t/2}^t f(S_i,a_0)f(S_i,a_0)^T \pi_{\hat{\theta}_{i-1}}\left(S_i,a_0\right)\right) - \lambda_{(r_0)}\left(\frac{1}{t}\sum_{i=t/2}^t f(S_i,a_0)f(S_i,a_0)^T \tilde{p}_0\right)\\
	+&\lambda_{(r_0)}\left(\frac{1}{t}\sum_{i=t/2}^t f(S_i,a_0)f(S_i,a_0)^T \tilde{p}_0\right) - \left\|\frac{1}{t}\sum_{i=t/2}^{t}f\left(S_i,a_0\right)f\left(S_i,a_0\right)^T\left(\mathbb{1}_{\{A_i=a_0\}} - \pi_{\hat{\theta}_{i-1}}\left(S_i,a_0\right)\right)\right\|_{op}
	\end{align*}
	
	Thus we have
	\begin{align*}
	&P\left(\lambda_{(r_0)}\left(\frac{1}{t}\sum_{i=1}^t f(S_i,A_i)f(S_i,A_i)^T \mathbb{1}_{\{A_i=a_0\}}\right)\geq \frac{\lambda_{r0}\tilde{p}_0}{8} \right)\\
	\geq& 1-P\left(\lambda_{(r_0)}\left(\frac{1}{t}\sum_{i=t/2}^t f(S_i,a_0)f(S_i,a_0)^T \pi_{\hat{\theta}_{i-1}}\left(S_i,a_0\right)\right) - \lambda_{(r_0)}\left(\frac{1}{t}\sum_{i=t/2}^t f(S_i,a_0)f(S_i,a_0)^T \tilde{p}_0\right)<0\right)\\
	-& P\left(\lambda_{(r_0)}\left(\frac{1}{t}\sum_{i=t/2}^t f(S_i,a_0)f(S_i,a_0)^T \tilde{p}_0\right)<\frac{\lambda_{r0}\tilde{p}_0}{4}\right)\\
	-& P\left(\left\|\frac{1}{t}\sum_{i=t/2}^{t}f\left(S_i,a_0\right)f\left(S_i,a_0\right)^T\left(\mathbb{1}_{\{A_i=a_0\}} - \pi_{\hat{\theta}_{i-1}}\left(S_i,a_0\right)\right)\right\|_{op}>\frac{\lambda_{r0}\tilde{p}_0}{8}\right)
	\end{align*}
	In the last inequality above, we have
	\begin{align*}
	&P\left(\lambda_{(r_0)}\left(\frac{1}{t}\sum_{i=t/2}^t f(S_i,a_0)f(S_i,a_0)^T \pi_{\hat{\theta}_{i-1}}\left(S_i,a_0\right)\right) - \lambda_{(r_0)}\left(\frac{1}{t}\sum_{i=t/2}^t f(S_i,a_0)f(S_i,a_0)^T \tilde{p}_0\right)\geq 0 \right)\\
	\geq &P\left(\pi_{\hat{\theta}_{i-1}}\left(S_i,a_0\right) \geq \tilde{p}_0, \forall i = t/2,\ldots,t\right) \geq 1-\frac{tp}{2}\left(2e\right)^{-\frac{t\lambda_{p}}{4}}.
	\end{align*}
	By Chernoff's inequality, we have
	\begin{align*}
	&P\left(\lambda_{(r_0)}\left(\frac{1}{t}\sum_{i=t/2}^t f(S_i,a_0)f(S_i,a_0)^T \tilde{p}_0\right) \geq \frac{\lambda_{r0}\tilde{p}_0}{4}\right) \\
	=& P\left(\lambda_{(r_0)}\left(\frac{2}{t}\sum_{i=t/2}^t f(S_i,a_0)f(S_i,a_0)^T \right) \geq \frac{\lambda_{r0}}{2}\right) \geq 1 - k(2e)^{-\frac{t\lambda_{r0}}{4}}.
	\end{align*}
	By matrix Azuma inequality on martingale we have, 
	\begin{align*}
	P\left(\left\|\frac{1}{t}\sum_{i=t/2}^{t}f\left(S_i,a_0\right)f\left(S_i,a_0\right)^T\left(\mathbb{1}_{\{A_i=a_0\}} - \pi_{\hat{\theta}_{i-1}}\left(S_i,a_0\right)\right)\right\|_{op}\geq \epsilon\right) \leq k\exp\left(-\frac{t\epsilon^2}{8}\right)
	\end{align*}
	Combine everything together, we finish \textbf{part 1} of the proof for equation~\ref{equ:J_mu_fix} by
	\begin{align*}
	&P\left(\lambda_{(r_0)}\left(\frac{1}{t}\sum_{i=t/2}^t f(S_i,A_i)f(S_i,A_i)^T \mathbb{1}_{\{A_i=a_0\}}\right) \geq \frac{\lambda_{r0}\tilde{p}_0}{8}\right)\\
	\geq& 1-\frac{tp}{2}\left(2e\right)^{-\frac{t\lambda_{p}}{4}}-k(2e)^{-\frac{t\lambda_{r0}}{4}}-k\exp\left(-\frac{t\lambda_{r0}^2\tilde{p}_0^2}{512}\right)
	\end{align*}
	\noindent
	\textbf{Part 2:} upper bound $P\left(\text{rank}(\Sigma_{(t)})\neq r_0\right)$:
	
	First we consider $\sum_{i=t/2}^{t}\mathbb{1}_{\{A_i=a_0\}}$:
	\begin{align*}
	&P\left(\sum_{i=t/2}^{t}\mathbb{1}_{\{A_i=a_0\}}>t\tilde{p}_0/4\right)\\
	=& P\left(\sum_{i=t/2}^{t}\left(\mathbb{1}_{\{A_i=a_0\}}-\pi_{\hat{\theta}_{i-1}}\left(S_i,a_0\right)\right)+\sum_{i=t/2}^{t}\left(\pi_{\hat{\theta}_{i-1}}\left(S_i,a_0\right)-\tilde{p}_0\right)+t\tilde{p}_0/2>t\tilde{p}_0/4\right)\\
	\geq& 1-P\left(\sum_{i=t/2}^{t}\left(\mathbb{1}_{\{A_i=a_0\}}-\pi_{\hat{\theta}_{i-1}}\left(S_i,a_0\right)\right)\leq -t\tilde{p}_0/4\right)-P\left(\sum_{i=t/2}^{t}\left(\pi_{\hat{\theta}_{i-1}}\left(S_i,a_0\right)-\tilde{p}_0\right)<0\right)\\
	\geq& 1-\exp\left(-\frac{t\tilde{p}_0^2}{16}\right)-\frac{t}{2}p(2e)^{-\frac{t\lambda_{p}}{4}}
	\end{align*}
	Last inequality follows from the Azuma's inequality and Lemma~\ref{lemma:bdd}. Thus we have
	\begin{align*}
	& P\left(\text{rank}(\Sigma_{(t)})=r_0\right)\\
	&= P\left(\text{rank}(\Sigma_{(t)})\geq r_0\right)\\
	&= P\left(\lambda_{(r_0)}\left(\Sigma_{(t)}\right)>0\right)\\
	&\geq P\left(\left\|\frac{1}{\sum_{i=1}^{t}\mathbb{1}_{\{A_i=a_0\}}}\sum_{i:A_i=a_0}^{t}f(S_i,a_0)f(S_i,a_0)^T-\mathbb{E}\left[f(S,a_0)f(S,a_0)^T\right]\right\|_{op}\leq\frac{\lambda_{(r_0)}}{2}\right)\\
	&\geq  P\left(\text{event in previous line}\mid \sum_{i=t/2}^{t}\mathbb{1}_{\{A_i=a_0\}}> t\tilde{p}_0/4\right)P\left(\sum_{i=t/2}^{t}\mathbb{1}_{\{A_i=a_0\}}> t\tilde{p}_0/4\right)\\
	&\geq \left(1-k\exp\left(-\frac{t\tilde{p}_0\lambda_{r_0}^2}{256}\right)\right)P\left(\sum_{i=t/2}^{t}\mathbb{1}_{\{A_i=a_0\}}>t\tilde{p}_0/4\right) \text{(Matrix Hoeffding)}\\
	&\geq 1-k\exp\left(-\frac{t\tilde{p}_0\lambda_{r_0}^2}{256}\right)-\exp\left(-\frac{t\tilde{p}_0^2}{16}\right)-\frac{t}{2}p(2e)^{-\frac{t\lambda_{p}}{4}}.
	\end{align*}
	\noindent
	Thus, combining \textbf{part 1} and \textbf{part 2}, we have
	\begin{align*}
	&P\left(\left\|U_0U_0^T\left(\mu^*-\hat{\mu_t}\right)\right\| >\epsilon\right)\\
	\leq& P\left(\lambda_{(r_0)}^{-1}\left(\frac{1}{t}\sum_{i=1}^t f(S_i,A_i)f(S_i,A_i)^T\mathbb{1}_{\{A_i=a_0\}}\right)>\frac{8}{\tilde{p}_0\lambda_{r_0}}\right)\\
	+& P\left(\left\|\frac{1}{t}\sum_{i=1}^t f(S_i,A_i)\epsilon_i-\frac{\zeta}{t}\mu^*\right\|>\frac{\epsilon \tilde{p}_0\lambda_{r_0}}{8}\right)+P\left(rank(\Sigma_{(t)})\neq r_0\right)\\
	=& P\left(\lambda_{(r_0)}\left(\frac{1}{t}\sum_{i=1}^t f(S_i,A_i)f(S_i,A_i)^T\mathbb{1}_{\{A_i=a_0\}}\right) \leq \frac{\tilde{p}_0\lambda_{r_0}}{8}\right)\\
	+& P\left(\left\|\frac{1}{t}\sum_{i=1}^t f(S_i,A_i)\epsilon_i\right\|>\frac{\epsilon \tilde{p}_0\lambda_{r_0}}{8}-\left\|\frac{\zeta}{t}\mu^*\right\|\right)+P\left(rank(\Sigma_{(t)})\neq r_0\right)\\
	\leq& \frac{tp}{2}\left(2e\right)^{-\frac{t\lambda_{p}}{4}}+k(2e)^{-\frac{t\lambda_{r0}}{4}}+k\exp\left(-\frac{t\lambda_{r0}^2\tilde{p}_0^2}{512}\right)+
	\left(2+\frac{C_1\sigma^2}{t\epsilon^2\tilde{p}_0^2\lambda_{r_0}^2}\right)\exp\left(-\frac{\sqrt{t}\epsilon \tilde{p}_0 \lambda_{r_0}}{C_2\sigma}\right)
	\\
	+&k\exp\left(-\frac{t\tilde{p}_0\lambda_{r_0}^2}{256}\right)+\exp\left(-\frac{t\tilde{p}_0^2}{16}\right)+\frac{t}{2}p(2e)^{-\frac{t\lambda_{p}}{4}}
	\end{align*}
	while $t \geq \frac{16\zeta}{\epsilon \tilde{p}_0\lambda_{r_0}}$. The last inequality is obtained from previous results and martingale valued inequality applied on $\{f(S_i,A_i)\epsilon_i\}_{i=1}^t$ and $C_1,C_2$ are constants.
	
	Similarly, by eigenvalue decomposition, we have $\mathbb{E}\left[f(S,a_1)f(S,a_1)^T\right] = U_1\Sigma_1 U_1^T$, $rank(\Sigma_1)=r_1$ and $\lambda_{r_1}:=\lambda_{(r_1)}(\Sigma_1)$, we have
	\begin{align*}
	&P\left(\left\|U_1U_1^T\left(\mu^*-\hat{\mu_t}\right)\right\| >\epsilon\right)\\
	\leq& \frac{tp}{2}\left(2e\right)^{-\frac{t\lambda_{p}}{4}}+k(2e)^{-\frac{t\lambda_{r1}}{4}}+k\exp\left(-\frac{t\lambda_{r1}^2\tilde{p}_0^2}{512}\right)+\left(2+\frac{C_1\sigma^2}{t\epsilon^2\tilde{p}_0^2\lambda_{r_1}^2}\right)\exp\left(-\frac{\sqrt{t}\epsilon \tilde{p}_0 \lambda_{r_1}}{C_2\sigma}\right)\\
	+&k\exp\left(-\frac{t\tilde{p}_0\lambda_{r_1}^2}{256}\right)+\exp\left(-\frac{t\tilde{p}_0^2}{16}\right)+\frac{t}{2}p(2e)^{-\frac{t\lambda_{p}}{4}}
	\end{align*}
	\noindent
	Thus, for Equation~\ref{equ:J_mu_fix} can be bounded by
	\begin{align*}
	&P\left(\left|\hat{J}_t(\theta^*,\mu^*)-\hat{J}_t(\theta^*,\hat{\mu}_t)\right|>\epsilon\right)\\
	\leq& P\left(\sum_{a\in \mathcal{A}}\frac{1}{t}\sum_{\tau=1}^{t}\left|f(S_\tau,a)^T(\mu^*-\hat{\mu}_t)\right|>\epsilon\right)\\
	\leq& P\left(\left\|U_0U_0^T(\mu^*-\hat{\mu}_t)\right\|_2>\epsilon/2\right)+P\left(\left\|U_1U_1^T(\mu^*-\hat{\mu}_t)\right\|_2>\epsilon/2\right)\\
	\leq& tp\left(2e\right)^{-\frac{t\lambda_{p}}{4}}+2k\left(2e\right)^{-\frac{t\min\{\lambda_{r0},\lambda_{r1}\}}{4}}+2k\exp\left(-\frac{t\min\{\lambda_{r0}^2,\lambda_{r1}^2\}\tilde{p}_0^2}{512}\right)\\
	+&2\left(2+\frac{C_1\sigma^2}{t\epsilon^2\tilde{p}_0^2\min\{\lambda_{r_0}^2,\lambda_{r_1}^2\}}\right)\exp\left(-\frac{\sqrt{t}\epsilon \tilde{p}_0 \min\{\lambda_{r_0},\lambda_{r_1}\}}{C_2\sigma}\right)\\
	+&2k\exp\left(-\frac{t\tilde{p}_0\min\{\lambda_{r_0}^2,\lambda_{r_1}^2\}}{256}\right)+2\exp\left(-\frac{t\tilde{p}_0^2}{16}\right)+tp(2e)^{-\frac{t\lambda_{p}}{4}},
	\end{align*}
	while $t\geq \frac{16\zeta}{\epsilon \tilde{p}_0 \min\{\lambda_{r_0},\lambda_{r_1}\}}$.
	\\
	\\
	\noindent
	\textbf{Equation~\ref{equ:J_mu_random}:} This term can be bounded same as Equation~\ref{equ:J_mu_fix}.
	\\
	\\
	\noindent
	\textbf{Combine Equations~\ref{equ:hoeff},\ref{equ:J_approx_fix},\ref{equ:J_mu_fix},\ref{equ:easy},\ref{equ:J_mu_random},\ref{equ:J_approx_random},\ref{equ:hoeff_random}:}

	\begin{align*}
	&P\left(J(\theta^*,\mu^*) - J(\hat{\theta}_t,\mu^*) > 5\epsilon\right)\\
	\leq & P\left(\left|J(\theta^*,\mu^*) - \tilde{J}_t(\theta^*,\mu^*)\right| > \epsilon\right)\\
	+& P\left(\left|\hat{J}_t(\theta^*,\mu^*)-\hat{J}_t(\theta^*,\hat{\mu}_t)\right|+\left|\hat{J}_t(\hat{\theta}_t,\mu^*) - \hat{J}(\hat{\theta}_t,\mu^*)\right|> 2\epsilon\right)\\
	+& P\left(\hat{J}_t(\theta^*,\hat{\mu}_t) - \hat{J}_t(\hat{\theta}_t,\hat{\mu}_t)> \epsilon\right)\\
	+& P\left(\left|\tilde{J}_t(\hat{\theta}_t,\mu^*) - J(\hat{\theta}_t,\mu^*)\right|> \epsilon\right)\\
	\leq& 2\exp\left\{-\frac{t\epsilon^2}{2\left(2+\frac{1}{\lambda_p}\right)^2}\right\} + 4k\left(2e\right)^{-\frac{t\min\{\lambda_{r0},\lambda_{r1}\}}{4}}+4k\exp\left(-\frac{t\min\{\lambda_{r0}^2,\lambda_{r1}^2\}\tilde{p}_0^2}{512}\right)\\
	+& 2tp\left(2e\right)^{-\frac{t\lambda_{p}}{4}}+4\left(2+\frac{C_1\sigma^2}{t\epsilon^2\tilde{p}_0^2\min\{\lambda_{r_0}^2,\lambda_{r_1}^2\}}\right)\exp\left(-\frac{\sqrt{t}\epsilon \tilde{p}_0 \min\{\lambda_{r_0},\lambda_{r_1}\}}{C_2\sigma}\right)\\
	&+4k\exp\left(-\frac{t\tilde{p}_0\min\{\lambda_{r_0}^2,\lambda_{r_1}^2\}}{256}\right)+4\exp\left(-\frac{t\tilde{p}_0^2}{16}\right)+2tp(2e)^{-\frac{t\lambda_{p}}{4}}\\
	+&2\left(\frac{\left(24+\frac{96}{\lambda_p}\right)\sqrt{\frac{4}{\lambda\lambda_p}}}{\epsilon}\right)^p \exp\left\{-\frac{t\epsilon^2}{8\left(2+\frac{1}{\lambda_p}\right)^2}\right\}+p(2e)^{-\frac{t\lambda_p}{2}}
	\end{align*}
	\noindent
	\textbf{Finishing the proof:} Plug in $\delta$ to the right side in above and one can directly obtain:
	\begin{align*}
	t &=O(\max\left\{\frac{p}{\epsilon^2\lambda_p^2},\frac{\sigma^2}{\epsilon^2\tilde{p}_0^2\gamma^2},\frac{1}{\gamma^2\tilde{p}_0^2},\frac{\zeta}{\epsilon\tilde{p}_0\gamma}\right\}
	\\
	&\times \left(\log\left(\frac{\max\{p,k\}}{\delta}\right)+\log^2\left(\frac{\sigma^2}{\delta\epsilon^2\tilde{p}_0^2\gamma^2}\right)
	+ \log\left(\frac{1}{\epsilon\delta\sqrt{\lambda\lambda_p}\lambda_p}\right)\right))
	\end{align*}
\end{proof}

\section{Proof of Corollary~\ref{cor:J_convergence_regret}}
\begin{proof}
	For high probability parameter $\delta/2 > 0$, we define $T_1 = CT^{2/3}$ and $\epsilon = C'T^{-1/3}$, for some $C>0$ and $C'>0$ such that the condition in Theorem~\ref{thm:J_convergence} holds. 
	That is being said, $P\left(\left|J\left(\theta^*,\mu^*\right)-J\left(\hat{\theta}_t,\mu^*\right)\right|\leq 5\epsilon\right)\geq 1-\delta/2$ holds for $t \geq T_1$.
	Define the good event $E$ as:
	\begin{align*}
	E\triangleq\left\{\left|J\left(\theta^*,\mu^*\right)-J\left(\hat{\theta}_t,\mu^*\right)\right|\leq 5\epsilon \text{ and } \left\|\hat{\theta}_t\right\|_2^2\leq \frac{4}{\lambda\lambda_p},\forall T_1\leq t \leq T\right\}.
	\end{align*}
	By Lemma~\ref{lemma:bdd}, we know that under above parameters setting, $\left\|\hat{\theta}_t\right\|_2^2\leq \frac{4}{\lambda\lambda_p}$ also holds for $t\geq T_1$ for large enough $T$.
	Then $P(E^c)\leq 2P\left(\left|J\left(\theta^*,\mu^*\right)-J\left(\hat{\theta}_t,\mu^*\right)\right| > 5\epsilon\right) = \delta$ and thus $P(E)\geq 1-\delta$.
	Under event $E$ we have,
	\begin{align*}
	\mathrm{Reg}_J(T) &= \sum_{t=1}^{T_1} \left(J\left(\theta^*,\mu^*\right)-J\left(\hat{\theta}_t,\mu^*\right)\right) + \sum_{t=T_1+1}^{T}\left(J\left(\theta^*,\mu^*\right)-J\left(\hat{\theta}_t,\mu^*\right)\right)\\
	& \leq \left(T_1+\frac{1}{\lambda_p}T_1\right) +  \left(T_1+\frac{4}{\lambda_p}T_1\right) + (T - T_1)\times 5\epsilon = \widetilde{O}(T^{2/3}).
	\end{align*}
\end{proof}

\section{Proof of Theorem~\ref{theory_critic}}
\subsection{Proof of the consistency of the critic}

\begin{proof}
	
	The $\mathcal{L}_2$ distance between $\hat{\mu}_t$ and $\mu^*$ can be written as
	\begin{align*}
	|\hat{\mu}_t - \mu^*|^2  &  = \frac{C(t)}{t} (\frac{B(t)}{t})^{-1} (\frac{B(t)}{t})^{-1} \frac{C(t)}{t} +o_p(1)
	\end{align*}
	where 
	\begin{align*}
	C(t) &=\sum_{\tau=1}^t f(S_{\tau},A_{\tau}) \epsilon_{\tau} \\
	B(t) &=\zeta I_{k\times k}+\sum_{\tau=1}^t f(S_{\tau},A_{\tau})f(S_{\tau},A_{\tau})^T
	\end{align*}
	The two steps in proving $|\hat{\mu}_t - \mu^*|_2^2\rightarrow 0$ in probability are
	\begin{enumerate}
		\item We show that the matrix $\frac{B(t)}{t}$ has minimal eigenvalue bounded away from 0 with probability going to 1, and
		\item We also show that $\frac{C(t)}{t}$ converges to a zero vector in probability. 
	\end{enumerate}
	To prove the first step, we construct a matrix-valued martingale difference sequence $\{X_i\}_{i=1}^{\infty}$:  
	\begin{align*}
	X_i & =  {f(S_i, A_i) f(S_i, A_i)^T} - \mathbb{E}({f(S_i, A_i) f(S_i, A_i)^T} |\mathcal{F}_i) \\
	& =  {f(S_i, A_i) f(S_i, A_i)^T} - \int_s d(s)\sum_{a}f(s,a)f(s,a)^T \pi_{\hat{\theta}_{i-1}}(s,a) ds\\
	& =  {f(S_i, A_i) f(S_i, A_i)^T} - K(\hat{\theta}_{i-1})
	\end{align*}
	where $K(\theta)=\mathbb{E}_{\theta}[f(S,A)f(S,A)^T]=\sum_s d(s)\sum_{a}f(s,a)f(s,a)^T \pi_{\theta}(A=a|S=s)$. In our matrix-valued martingale definition, the filtration $\mathcal{F}_i=\sigma \{ \hat{\theta}_j,j\leq i-1 \}$ is the sigma algebra expand by the estimated optimal policy before decision point $i$.
	\noindent By Assumption \ref{assumption_boundedness}, the sequence of random matrices $\{X_i\}$ are uniformly bounded. Applying the matrix Azuma inequality in \cite{tropp2012user}, it follows that  
	
	\begin{align*}
	\lambda_{max}(\frac{B(t)}{t} - \frac{\sum_{i=1}^t K(\hat{\theta}_{i-1}) }{t} )  & = \lambda_{max}(\frac{1}{t}\sum_{i=1}^t X_i + \frac{\zeta I_{k\times k}}{t}) \\
	& \leq  \lambda_{max}(\frac{1}{t}\sum_{i=1}^t X_i) + \lambda_{max}(\frac{\zeta I_{k\times k}}{t}) \\
	& \rightarrow 0 \mbox{ in probability} 
	\end{align*}
	We use operators $\lambda_{min}$ and $\lambda_{max}$ to denote the smallest and the largest eigenvalue of a matrix. Using a similar argument, $\lambda_{min}(\frac{B(t)}{t} - \frac{\sum_{i=1}^t K(\hat{\theta}_{i-1}) }{t} ) \rightarrow 0 \mbox{ in probability}$. Now we can show that the minimal eigenvalue of $\frac{B(t)}{t}$ is bounded away from 0 with probability going to 1:
	\begin{align*}
	\lambda_{min}(\frac{B(t)}{t} ) & = \lambda_{min}(\frac{B(t)}{t} -  \frac{\sum_{i=1}^t K(\hat{\theta}_{i-1} )}{t} +  \frac{\sum_{i=1}^t K(\hat{\theta}_{i-1}) }{t} ) \\
	& \geq \lambda_{min}(\frac{B(t)}{t} -  \frac{\sum_{i=1}^t K(\hat{\theta}_{i-1}) }{t}) + \lambda_{min}( \frac{\sum_{i=1}^t K(\hat{\theta}_{i-1} )}{t} ) 
	\end{align*}
	We've showed that the first term converges to 0 in probability. By the consistency of $\hat{\theta}_t$, $K(\hat{\theta}_t)$ converges to $K(\theta^*)$ in probability element-wise. Since eigenvalues are continuous functions of its element values (to see this, notice that eigenvalues of a matrix are roots of its characteristic polynomial (\cite{zedek1965continuity}) we have $\lambda_{min}( \frac{\sum_{i=1}^t K(\hat{\theta}_{i-1} )}{t} ) \rightarrow \lambda_{min}(K(\theta^*))$ in probability. Hence we have shown that the minimal eigenvalue of $\frac{B(t)}{t}$ is bounded with probability going to 1. 
	
	To prove that $\frac{C(t)}{t}$ converges to $0_k$ vector in probability, we construct vector-valued martingale difference sequence $Y_i={f(S_i, A_i)} \epsilon_i$ using the same filtration $\mathcal{F}_i=\sigma \{ \hat{\theta}_j,j\leq i-1 \}$ . This sequence has bounded variance under Assumption \ref{assumption_boundedness}. Applying the vector-valued Azuma inequality, 
	\begin{align*}
	\frac{C(t)}{t} = \frac{1}{t}\sum_{\tau=1}^t Y_{\tau} \rightarrow 0_k
	\end{align*}
	
\end{proof}

\subsection{Proof of the asymptotic normality of the critic}
\begin{proof}
	Based on the formula of $\hat{\mu}_t$,  
	\begin{align*}
	\hat{\mu}_t-\mu^* & =  (\zeta I_d + \sum_{i=1}^t f(S_i,A_i) f(S_i,A_i)^T)^{-1} (\sum_{i=1}^t f(S_i,A_i)\epsilon_i - \mu^*) \\
	& =   (\frac{\zeta I_d + \sum_{i=1}^t f(S_i,A_i) f(S_i,A_i)^T}{t})^{-1} \sqrt{t} \frac{\sum_{i=1}^t f(S_i,A_i)\epsilon_i }{t} +o_p(1) \\
	\end{align*} 
	Based on the consistency of $\theta_t$, we have that $\frac{\zeta I_d + \sum_{i=1}^t f(S_i,A_i) f(S_i,A_i)^T}{t}$ converges in probability to $\mathbb{E}_{\theta^*}(f(S,A)f(S,A)^T$. Now it is the key to analyze the asymptotic distribution of the martingale difference sequence $\{f(S_i,A_i)\epsilon_i \}_{i=1}^t$. With respect to filtration $\mathcal{F}_{t,j}= \sigma( \{S_i,A_i,\epsilon_i\}_{i=1}^j)$. Define $M^* = [\mathbb{E}_{\theta^*}(f(S,A)f(S,A)^T)]^{-1/2}$ and a martingale difference sequence $\{\xi_{t,i}=\frac{M^* f(s_i,a_i)\epsilon_i}{\sqrt{t}} \}_{i=1}^t$ which is adapted to the filtration $\mathcal{F}_{t,j}$ and satisfies $\mathbb{E}(\xi_{t,i} |\mathcal{F}_{t, i-1}) = 0$, To apply vector Lindberg-Levy central limit theorem for martingale difference sequences (\cite{billingsley1961lindeberg}), we check the two conditions in this theorem:
	
	\begin{enumerate}
		\item The conditional variance assumption. 
		\begin{align*}
		V_t & = \sum_{i=1}^t \mathbb{E}(\xi_{t,i}^2 | \mathcal{F}_{t,i-1} ) \\
		&= \frac{1}{t} \sum_{i=1}^t M^* \mathbb{E}_{\theta_{i-1}}(f(s,a)f(s,a)^T) M^* 
		\end{align*}
		converges in probability to $I_d \sigma^2$ by consistency of $\theta_t$.
		
		\item The Lindeberg condition. For any given $\delta>0$, 
		\begin{align*}
		& \sum_{i=1}^t \mathbb{E}(\xi_{t,i}^2 \mathbb{I}( \|\xi_{t,i}\|_2 >\delta) |\mathcal{F}_{t, i-1}) \\
		= & \frac{1}{t} \sum_{i=1}^t \mathbb{E}( M^* f(S_i,A_i)f(S_i,A_i)^T \epsilon_i^2 M^* \mathbb{I}( \|M^*f(S_i,A_i) \epsilon_i\|_1 >\sqrt{t} \delta) |\mathcal{F}_{t, i-1}) \\
		\leq & \frac{1}{t} \sum_{i=1}^t \mathbb{E}( M^* f(S_i,A_i)f(S_i,A_i)^T \epsilon_i^2 M^* \mathbb{I}( \|M^*f(S_i,A_i)\|_2 \epsilon_i^2 >\sqrt{t} \delta) |\mathcal{F}_{t, i-1}) 
		\end{align*}
		By Assumption \ref{assumption_boundedness}, $f(S,A)$ are bounded almost surely, therefore the above expression goes to $0$ as $t\rightarrow 0$. 
		
	\end{enumerate}
	The Lindberg-Levy martingale central limit theorem concludes that 
	\begin{align*}
	\sum_{i=1}^{t} \xi_{t,i} \rightarrow N(0_d,I_d\sigma^2) \mbox{   in distribution}
	\end{align*}
	Therefore 
	\begin{align}
	\sqrt{t}(\hat{\mu}_t-\mu^*) \rightarrow N(0_d, [\mathbb{E}_{\theta^*}(f(S,A)f(S,A)^T)]^{-1}\sigma^2)
	\end{align}
	%
	%
\end{proof}

\section{Proof of Theorem~\ref{theory_actor}}
\subsection{Proof of the consistency of the actor}

Based on Theorem~\ref{thm:J_convergence} and Assumption~\ref{assumption:uniq}, for sufficiently small $\delta >0$, there exist $\epsilon>0$ such that 
\begin{align*}
\{\theta: |J(\theta, \mu^*)-J(\theta^*, \mu^*)| \leq \epsilon\} \cap \{\theta: |\theta-\theta^*| > \delta\} = \emptyset
\end{align*}
\noindent Therefore 
\begin{align*}
& P(|J(\hat{\theta}_t, \mu^*)-J(\theta^*, \mu^*)| \leq \epsilon) \\
= & P(\{ |J(\hat{\theta}_t, \mu^*)-J(\theta^*, \mu^*)| \leq \epsilon \} \cap \{ |\hat{\theta}_t - \theta^*| > \delta \}) \\
+ & P(\{ |J(\hat{\theta}_t, \mu^*)-J(\theta^*, \mu^*)| \leq \epsilon \} \cap \{ |\hat{\theta}_t - \theta^*| \leq \delta \}) \\
= & P(\{ |J(\hat{\theta}_t, \mu^*)-J(\theta^*, \mu^*)| \leq \epsilon \} \cap \{ |\hat{\theta}_t - \theta^*| \leq \delta \}) \\
\leq & P( |\hat{\theta}_t - \theta^*| \leq \delta) \leq 1
\end{align*}
Since $P(|J(\hat{\theta}_t, \mu^*)-J(\theta^*, \mu^*)| \leq \epsilon) \rightarrow 1$ in as $t \rightarrow \infty$, $P( |\hat{\theta}_t - \theta^*| \leq \delta) \rightarrow 1$. In other words, $P( |\hat{\theta}_t - \theta^*| > \delta) \rightarrow 0$, convergence in probability follows. 

\subsection{Proof of the asymptotic normality of the actor}
Again, our strategy is to derive the asymptotic normality of $\tilde{\theta}_t$ and then use the fact that $\hat{\theta}_t$ must have  the same asymptotic distribution. 
\begin{proof}
	We first prove that 
	\begin{align}
	\mathbb{G}_t j_{\theta}(\hat{\mu}_t, \hat{\theta}_t, S) - \mathbb{G}_t j_{\theta}(\mu^*, \theta^*, S) = o_p(1)
	\label{t41}
	\end{align}
	, where $\mathbb{G}_t=\sqrt{t} (\mathbb{P}_t-P)$, the empirical process induced by the ``marginal" stochastic process $\{S_i\}_{i=1}^t$ formed by the history of contexts. The ``full" stochastic process involves the sequence of triples $\{S_i, A_i, \epsilon_i\}_{i=1}^t$, the complete history of contexts, actions and reward errors. We consider the class of functions $\mathcal{F}=\{ j_{\theta}(\mu, \theta, s): \|\theta-\theta^*\|_2\leq \delta, \|\mu-\mu^*\|_2 \leq \delta \}$, where $j_{\theta}(\mu,\theta,s)$ is the partial derivative with respect to $\theta$ of function:
	
	\begin{align*}
	j(\mu,\theta,s)= \sum_a f(s,a)^T\mu \pi_{\theta}(s,a)-\lambda\theta^Tg(s)g(s)^T\theta
	\end{align*}
	The boundedness assumption on reward feature, policy feature and reward ensures that the parametrized class of functions $j_{\theta}(\mu, \theta, s)$ is P-Donsker in a neighborhood of $(\mu^*,\theta^*)$. In other words $\mathcal{F}$ is P-Donsker, where P is the distribution of the marginal stochastic process formed by contexts. We complete the first part of the proof by modiftying Lemma 19.24 in \cite{van2000asymptotic}. It may seem that the dependence of $\hat{\mu}_t$ and $\tilde{\theta}_t$ on the full stochastic process could introduce complexity but a closer inspection shows that the proof goes through.  The random function $j_{\theta}(\hat{\mu}_t, \tilde{\theta}_t, S)$ belongs to the P-Donsker class defined above and satisfies that 
	
	\begin{align*}
	\sum_s d(s) (j_{\theta}(\hat{\mu}_t, \tilde{\theta}_t, s)-j_{\theta}(\mu^*,\theta^*,s))^2 \rightarrow 0
	\end{align*}
	
	\noindent in probability. This is a result of the consistency of both $\hat{\mu}_t$ and $\tilde{\theta}_t$, as well as applying the continuous mapping theorem. By Theorem 18.10(v) in \cite{van2000asymptotic}, $(\mathbb{G}_t, j_{\theta}(\hat{\mu}_t, \tilde{\theta}_t, s)) \rightarrow (\mathbb{G}_p, j_{\theta}(\mu^*, \theta^*, s))$ in distribution, where $\mathbb{G}_p$ is the P-Brownian bridge. The key here is that Theorem 18.10 only relies on the convergence of two stochastic processes, regardlessly of whether the stochastic processes consist of i.i.d. observations and whether or not the two processes are dependent. By Lemma 18.15 in \cite{van2000asymptotic}, almost all sample paths of $\mathbb{G}_p$ are continuous on $\mathcal{F}$. Define a mapping $h: l(\mathcal{F})^{\infty}\times \mathcal{F} \rightarrow \mathbb{R}$ by $h(z,f)=z(f)-z(j_{\theta}(\mu^*,\theta^*,s))$, which is continuous at almost every point of $(\mathbb{G}_p, j_{\theta}(\mu^*,\theta^*,s))$. By the continuous mapping theorem, we have 
	
	\begin{align*}
	\mathbb{G}_t(j_{\theta}(\hat{\mu}_t, \tilde{\theta}_t, s)-j_{\theta}(\mu^*,\theta^*,s)) = h(\mathbb{G}_t, j_{\theta}(\hat{\mu}_t, \tilde{\theta}_t, s)) \rightarrow h(\mathbb{G}_p, j_{\theta}(\mu^*,\theta^*,s))=0
	\end{align*}
	\noindent in distribution and thus in probability, therefore \eqref{t41} holds. 
	The second part of the proof begins by noticing that $\tilde{\theta}_t$ satisfies the estimating equation $\mathbb{P}_t j_{\theta}(\hat{\mu}_t, \tilde{\theta}_t, s)=0$, so we have
	\begin{align*}
	\mathbb{G}_t j_{\theta}(\hat{\mu}_t,\tilde{\theta}_t,s) & = \sqrt{t} (Pj_{\theta}(\mu^*,\theta^*,s)-Pj_{\theta}(\hat{\mu}_t,\tilde{\theta}_t,s)) \\
	& = \sqrt{t} (J_{\theta}(\mu^*,\theta^*)-J_{\theta}(\hat{\mu}_t,\tilde{\theta}_t) )\\
	& = \sqrt{t} J_{\theta\theta}^*(\theta^*-\tilde{\theta}_t) + \sqrt{t} J_{\theta\mu}^*(\mu^*-\hat{\mu}_t) + \sqrt{t} o_p(\|\tilde{\theta}_t-\theta^*\|) +o_p(1)
	\end{align*}
	Together with \eqref{t41} the above implies
	\begin{align*}
	\sqrt{t}(\theta^*-\tilde{\theta}_t) & = (J_{\theta\theta}^*)^{-1}J_{\theta\mu}^*\sqrt{t} (\hat{\mu}_t-\mu^*) + \sqrt{t} o_p(\|\tilde{\theta}_t-\theta^*\|) + (J_{\theta\theta} ^*)^{-1}\mathbb{G}_t j_{\theta}(\mu^*,\theta^*,s)+o_p(1) \\ \label{t42}
	& = O_p(1) + \sqrt{t} o_p(\|\tilde{\theta}_t-\theta^*\|)
	\end{align*}
	where $J_{\theta\theta}^*$ and $J_{\theta\mu}^*$ are $J_{\theta\theta}$ and $J_{\theta\mu}$ evaluated at $(\theta^*,\mu^*)$. The $\sqrt{t}$ consistency of $\tilde{\theta}_t$ follows through. Now \eqref{t42} has become
	\begin{align}
	\sqrt{t}(\theta^*-\tilde{\theta}_t) & = (J_{\theta\theta}^*)^{-1}J_{\theta\mu}^*\sqrt{t} (\hat{\mu}_t-\mu^*) + (J_{\theta\theta}^*)^{-1}\mathbb{G}_t j_{\theta}(\mu^*,\theta^*,S)+o_p(1) 
	\end{align}
	Since both the two non-vanishing terms on the righthand side are asymptotically normal with zero mean, $\sqrt{t}(\theta^*-\tilde{\theta}_t)$ is asymptotically normal. The only task left is to derive the asymptotic variance. Plugging in the formula for $\hat{\mu}_t$, we have 
	\begin{align*}
	\sqrt{t}(\theta^*-\tilde{\theta}_t) & = (J_{\theta\theta}^*)^{-1} \frac{ \sum_{i=1}^t J_{\theta\mu}^* B^* f(S_i,A_i)\epsilon_i + j_{\theta}(\mu^*,\theta^*,S_i)}{t}+o_p(1)  \\
	& = (J_{\theta\theta}^*)^{-1} \sum_{i=1}^t \zeta_{t,i} +o_p(1)
	\end{align*}
	where $B^*=(M^*)^2=[\mathbb{E}_{\theta^*}f(S,A)f(S,A)^T]^{-1}$. $\{ \zeta_i= \frac{J_{\theta\mu}^* B^* f(S_i,A_i)\epsilon_i + j_{\theta}(\mu^*,\theta^*,S_i)}{t}\}_{i=1}^t$ is a martingale difference sequence with asymptotic variance 
	\begin{align*}
	&\sum_{i=1}^t \mathbb{E}(\zeta_{t,i}^2 | \mathcal{F}_{t,i}) \\
	& = \frac{1}{t}\sum_{i=1}^t \mathbb{E} ( \epsilon_i^2 g_{\theta\mu}^* B^* f(S_i,A_i) f(S_i,A_i)^T B^* g_{\mu\theta}^* \\ 
	&+ j_{\theta}(\mu^*,\theta^*,S_i)j_{\theta}(\mu^*,\theta^*,S_i)^T- 2J_{\theta\mu}B^* f(S_i,A_i)j_{\theta}(\mu^*,\theta^*,S_i)^T \epsilon_i |  \mathcal{F}_{t,i}) \\
	&=  \frac{1}{t}\sum_{i=1}^t \sigma^2 J_{\theta\mu}^* B^* \mathbb{E}_{\theta_{i-1}}(f(S,A)f(S,A)^T) B^* J_{\mu\theta}^*+ \sum_s d(s) j_{\theta}(\mu^*,\theta^*,s)j_{\theta}(\mu^*,\theta^*,s)^T \\
	\end{align*}
	which converges in probability to $V^* = \sigma^2 J_{\theta\mu}^* B^* J_{\mu\theta}^*+ \sum_s d(s) j_{\theta}(\mu^*,\theta^*,s)j_{\theta}(\mu^*,\theta^*,s)^T$. Therefore the asymptotic variance of $\sqrt{t}(\theta^*-\tilde\theta_t)$ is $(J_{\theta\theta}^*)^{-1}V^* (J_{\theta\theta}^*)^{-1}$. 	
\end{proof}

\section{Proof of Corollary~\ref{cor:V_convergence_regret}}

\begin{proof}
	\begin{align*}
	\text{Reg}_V(T)& = 2\tilde{T} + \sum_{t=\tilde{T}+1}^T \int_{s\in \mathcal{S}} d(s) \sum_a r(s,a) [\pi_{\theta^*}(s,a)-\pi_{\hat{\theta}_{t-1}}(s,a)]ds \\
	&= 2\tilde{T} +\sum_{t=\tilde{T}+1}^T \int_{s\in \mathcal{S}} d(s) \sum_a r(s,a) \triangledown\pi_{\hat{\theta}_{t,s,a}}(s,a)^T(\theta^*-\hat{\theta}_{t-1}) ds \\
	& \leq 2\tilde{T} +2 \sum_{t=\tilde{T}+1}^T \int_{s\in \mathcal{S}} \frac{d(s)}{\sqrt{t}} \left\|\sqrt{t} (\theta^*-\hat{\theta}_{t-1})\right\|_2 ds \\
	& = 2\tilde{T} +2 \sum_{t=\tilde{T}+1}^T \frac{1 }{\sqrt{t}} \left\| \sqrt{t} (\theta^*-\hat{\theta}_{t-1})\right\|_2.
	\end{align*}
	where $\hat{\theta}_{t,s,a}$ is a random variable that lies on the line segment joining $\theta^*$ and $\hat{\theta}_{t-1}$. 
	
	Let $\sigma^2_{max}$ be the largest diagonal element of the asymptotic covariance matrix defined in Theorem \ref{theory_actor}. 
	For a random variable $X$ following the multivariate Gaussian distribution in Theorem \ref{theory_actor}., with high probability, we have $\left\|X\right\|_2\leq C(p \sigma_{max} +  \log T)$ for large $T$ and some constant $C$, where $p$ is the dimension of $\theta^*$. 
	Therefore we have $ \left\| \sqrt{t} (\theta^*-\hat{\theta}_{t-1})\right\|_2 \rightarrow \left\|X\right\|_2 \leq C(p \sigma_{max} + \log T)$. 
	We can find a $\tilde{T}$ such that when $t> \tilde{T}$, $ \left\| \sqrt{t} (\theta^*-\hat{\theta}_{t-1})\right\|_2 \leq 2C(p \sigma_{max} + \log T)$ and as a result $\max_{t > \tilde{T}} \left\| \sqrt{t} (\theta^*-\hat{\theta}_{t-1})\right\|_2$ is bounded by $C'(p \sigma_{max} + \log T)$ for some constant $C'$. 
	Plugging this bound into the bound for $\text{Reg}_V(T)$, we arrive at the result in the corollary for $T\gg \tilde{T}$.
	
\end{proof}

\section{Small Sample Variance estimation and Bootstrap Confidence intervals}\label{small}

In this section, we discuss issues, challenges and solutions in creating confidence intervals for the optimal policy parameter $\theta^*$ when the sample size, the total number of decision points, is small. We use a simple example to illustrate that the traditional plug-in variance estimator is plagued with underestimation issue, the direct consequence of which is the deflated confidence levels of the Wald-type confidence intervals for $\theta^*$. We propose to use bootstrap confidence intervals when the sample size is finite. We use simulation to evaluate the bootstrap confidence intervals.

\subsection{Plug-in Variance Estimation and Wald Confidence intervals}

One of the most straightforward ways to estimate the asymptotic variance of ${\theta}_t$ is through the plug-in variance estimation, the formula of which is provided in Theorem \ref{theory_actor}. Once an estimated variance $\hat{V}_i$ is obtained for $\sqrt{t}(\hat{\theta}_i-\theta^*_i)$, a $(1-2\alpha)\%$ Wald type confidence interval for $\theta^*_i$ has the form: $[ \hat{\theta}_i - z_{\alpha}\frac{\hat{V}_i}{\sqrt{t}}, \hat{\theta}_i + z_{\alpha}\frac{\hat{V}_i}{\sqrt{t}} ]$. Here $\theta_i$ is the i-th component in $\theta$ and $z_{\alpha}$ is the upper $100\alpha$ percentile of a standard normal distribution. The plug-in variance estimator and the associated Wald confidence intervals work well in many statistics problems. We shall see that, however, the plug-in variance estimator of the estimated optimal policy parameters suffers from underestimation issue in small to moderate sample sizes. In particular this estimator is very sensitive to the plugged-in value of the estimated reward parameter and policy parameter: a small deviation from the true parameters can result in an inflated or deflated variance estimation. Deflated variance estimation produces anti-conservative confidence intervals, a grossly undesirable property for confidence intervals. The following simple example illustrates the problem. 

\begin{example}
The context is binary with probability distribution $\mathbb{P}(S=1)=\mathbb{P}(S=-1)=0.5$. The reward is generated according to the following linear model: given context $S \in\{-1,1\}$ and action $A \in \{0,1\}$,
\begin{align*}
R = \mu_0^* +\mu_1^*S + \mu_2^*A +\mu_3^*SA +\epsilon
\end{align*}
\noindent where $\epsilon $ follows a normal distribution with mean zero and standard deviation 9. The true reward parameter is $\mu^*=[1,1,1,1]$. Both $\mu^*$ and the standard deviation of $\epsilon$ are chosen to approximate the realistic signal noise ratio in mobile health applications. We consider the policy class $ \pi_{\theta}(A=1|S=s)= \frac{e^{\theta_0+\theta_1s}}{1+e^{\theta_0+\theta_1s}}$. 

The differences between the plug-in estimated variance and its population counterpart are that (1) the former uses the empirical distribution of context to replace the unknown population distribution and (2) the unknown reward parameter and optimal policy parameter are replaced by their estimates. We emphasize that it is the second difference that leads to the underestimated variance in small sample size. To see this, we ignore the difference between the empirical distribution and the population distribution of contexts, which is very small for sample size $T\geq 50$ under a Bernoulli context distribution with equal probability. Now the plug-in variance estimator is a function of the estimated reward parameter $\hat{\mu}_t$ and the estimated policy parameter $\hat{\theta}_t$. Notice that $\hat{\theta}_t=[\hat{\theta}_{t,0},\hat{\theta}_{t,1}]$ is a function of $\hat{\mu}_t=[\hat{\mu}_{t,0},\hat{\mu}_{t,1},\hat{\mu}_{t,2},\hat{\mu}_{t,3}]$ and the empirical distribution of context. If we replace the empirical distribution in calculating $\hat{\theta}_t$ by its population counterpart, $\hat{\theta}_t$ is simply a function of $\hat{\mu}_t$. In the rest part of the example, we drop the subscript $t$ in the estimated reward parameter and denote the estimate of $\mu_2$ and $\mu_3$ by $\hat{\mu}_2$ and $\hat{\mu}_3$, respectively. Likewise, $\hat{\theta}_{t,i}$ is replaced by $\hat{\theta}_i$ for $i=0,1$. 


Figure \ref{fig_var} is the surface plot showing how the plug-in variance estimation changes as function of the estimated reward parameter. The surface plot of the plug-in variance estimation has a mountain-like pattern with two ridges along the two diagonals $\hat{\mu}_{2}+\hat{\mu}_{3}=0$ and $\hat{\mu}_{2}-\hat{\mu}_{3}=0$. The height of the ridge increases as both $\hat{\mu}_2$ and $\hat{\mu}_3$ approaches the origin. The peak of mountain is at the origin where $\hat{\mu}_{2}=\hat{\mu}_{3}=0$. The true reward parameter $(\mu^*_2,\mu^*_3)=(1,1)$ is close to the origin and lies right on the one of the ridges. There are four ``valleys'' where the combinations of $\hat{\mu}_2$ and $\hat{\mu}_3$ gives a small plug-in variance. 
%
%
%
%

\begin{figure}[H] 
    \centering
    \includegraphics[width=4in]{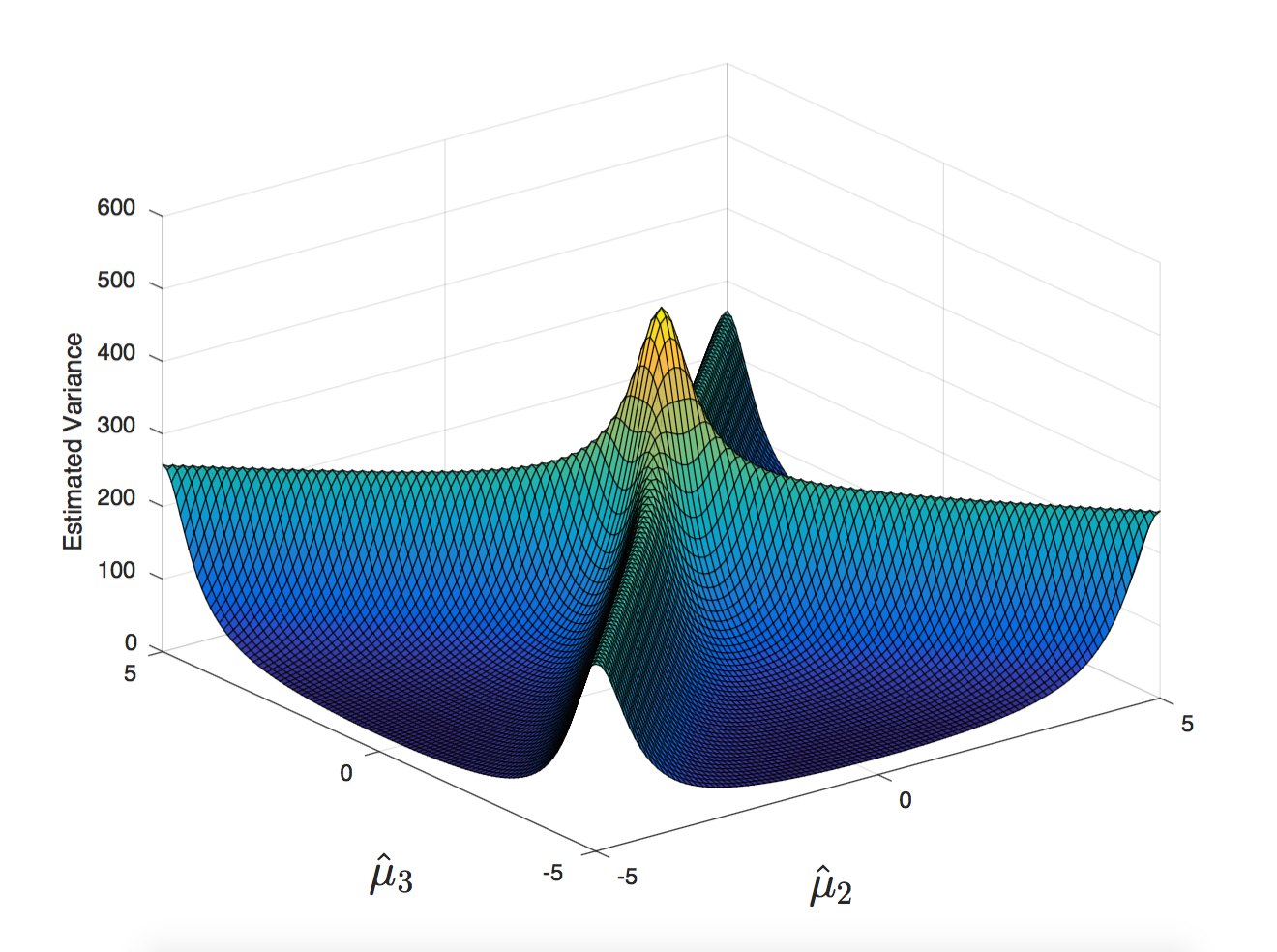} 
    \caption{Plug in variance estimation as a function of\\ $\hat{\mu}_{2}$ and $\hat{\mu}_{3}$, x axis represents $\hat{\mu}_{t,2}$, y axis represents $\hat{\mu}_{t,3}$ and z axis represents the plug-in asymptotic variance of $\hat{\theta}_0$ with $\lambda=0.1$} 
    \label{fig_var}
\end{figure} 

\noindent 



Due to large areas of valley the plug-in variance estimation is biased down, a direct consequence of which is the anti-conservatism of the Wald confidence intervals. We perform a simulation study using the toy generative model described above. The simulation consists of 1000 repetitions of running the online actor critic algorithm and recording the end-of-study statistics, including the plugin variance estimate, the Wald confidence intervals and the theoretical Wald confidence intervals based on the true asymptotic variance. The first two columns in table \ref{t_var} show the bias of plug-in variance at different sample sizes. At all three different sample sizes, the plug-in variance estimator underestimates the true asymptotic variance, which is $293.03$ for both policy parameters. Column 3 and column 4 show the coverage rate of the Wald-type confidence interval (CI) using the plug-in estimated variance. It is not surprising that the confidence intervals suffer from severe anti-conservatism, a consequence of the heavily biased variance estimation. Column 5 and 6 show the coverage rate of the Wald-type confidence interval based on the true asymptotic variance. Comparing the coverage rates, it is clear that the anti-conservatism is due to the underestimated variance. 

\begin{landscape}
\begin{table}[H]
  \begin{tabular}{l |cc|cc|cc}
    \toprule
    \multirow{2}{*}{sample size} &
      \multicolumn{2}{c}{bias in variance estimation} &
      \multicolumn{2}{c}{coverage of Wald CI (\%)} &
      \multicolumn{2}{c}{coverage of theoretical Wald CI (\%)} \\ \cline{2-7}
      & {$\theta_0$} & {$\theta_1$} & {$\theta_0$} & {$\theta_1$} & {$\theta_0$} & {$\theta_1$} \\
      \midrule
    100 & -181.56 & -181.56 & 75.5 & 74.9 & 100.0 & 100.0 \\ \hline
    250 & -131.71 & -131.71 & 77.9 & 77.3 & 98.5 & 98.1 \\\hline
    500 & -108.64 & -108.64 & 78.8 & 79.2 & 98.9 & 98.7 \\\hline
    \bottomrule
  \end{tabular}
  \caption{Underestimation of the plug-in variance estimator and the Wald confidence intervals. \\Theoretical Wald CI is created based on the true asymptotic variance.}
  \label{t_var}
\end{table}
\end{landscape}

To detail how the confidence interval coverage is connected with the estimated reward parameter $(\hat{\mu}_2,\hat{\mu}_3)$, figure \ref{scatter_100} and figure \ref{scatter_500} present two scatter plots of $\hat{\mu}_2, \hat{\mu}_3$ for the 1000 simulated datasets at sample size 100 and 500. Different colors are used to mark the datasets where the confidence intervals of both $\theta_0$ and $\theta_1$ cover the true parameter (blue), only one of them cover the truth (green), neither of them covers the truth (fading yellow). The true parameter are marked with a red asterisk. Indeed the yellow points and green points are in the ``valleys''. Some of the blue points are away from truth, but nevertheless they remain on the ridge, which produces a high variance estimate. Comparing the two scatter plots, as the sample size increases, the estimated reward parameter is less spread out. Nevertheless there are still significantly many pair of $\hat{\mu}_2, \hat{\mu}_3$ that fall in the ``valleys", leading to a underestimated variance and anti-conservative confidence intervals. 


\begin{landscape}
\begin{figure}[H] 
  \begin{minipage}[b]{0.5\linewidth}
    \centering
    \includegraphics[scale=.5]{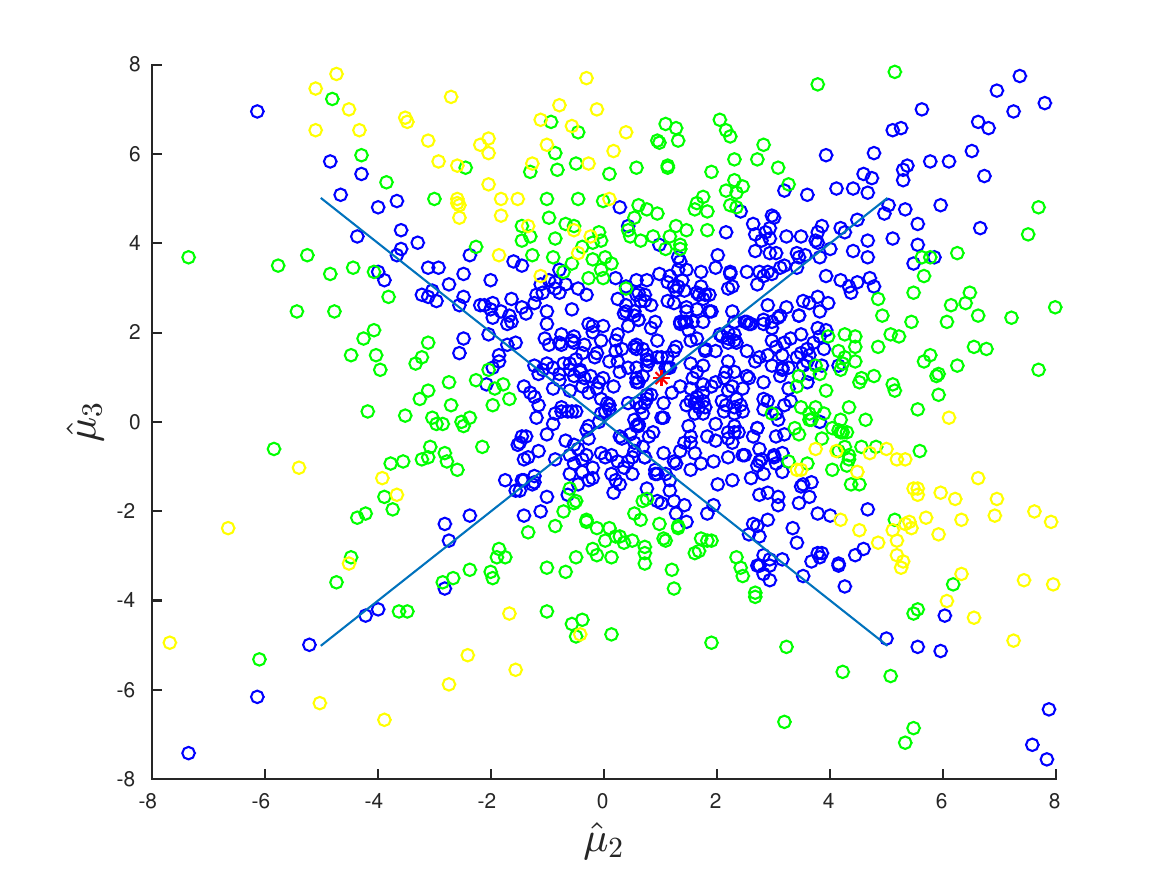} 
    \caption{Wald confidence interval coverage for 1000 simulated datasets as a function of $\hat{\mu}_3$ and $\hat{\mu}_2$ at sample size 100.} 
    \label{scatter_100}
    \vspace{4ex}
  \end{minipage}
  \begin{minipage}[b]{0.5\linewidth}
    \centering
    \includegraphics[scale=.5]{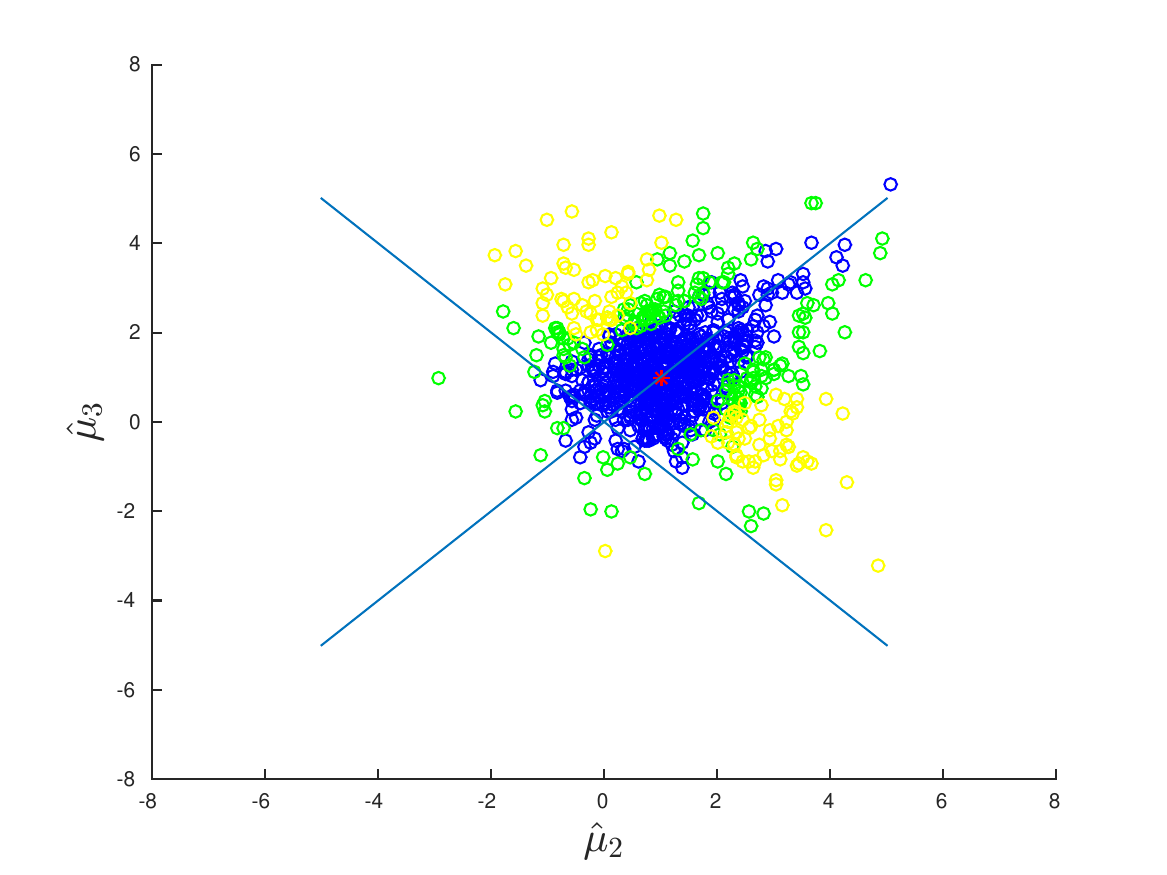} 
    \caption{Wald confidence interval coverage in 1000 simulated datasets as a function of $\hat{\mu}_3$ and $\hat{\mu}_2$ at sample size 500.} 
    \label{scatter_500}
    \vspace{4ex}
  \end{minipage} 
\end{figure} 
\end{landscape}

Figure \ref{tail_100} shows the histogram for the normalized distance $\frac{\hat{\sqrt{T}(\theta}_i - \theta_i^*)}{\hat{V}_i}$ for $i=0,1$ where $T=100$. This is the distance between the estimated and the true optimal policy parameter normalized by the estimated asymptotic variance. For the Wald confidence intervals to have descent coverage, histogram of the normalized distances need to approximate a standard normal distribution. However, as figure \ref{tail_100} suggests, the histograms have heavier tails compared to a standard normal due to the underestimated variance. The figure also suggests that the percentile-t bootstrap confidence intervals can be a good remedy. 

\begin{figure}[H]
\centering
    \includegraphics[scale=.5]{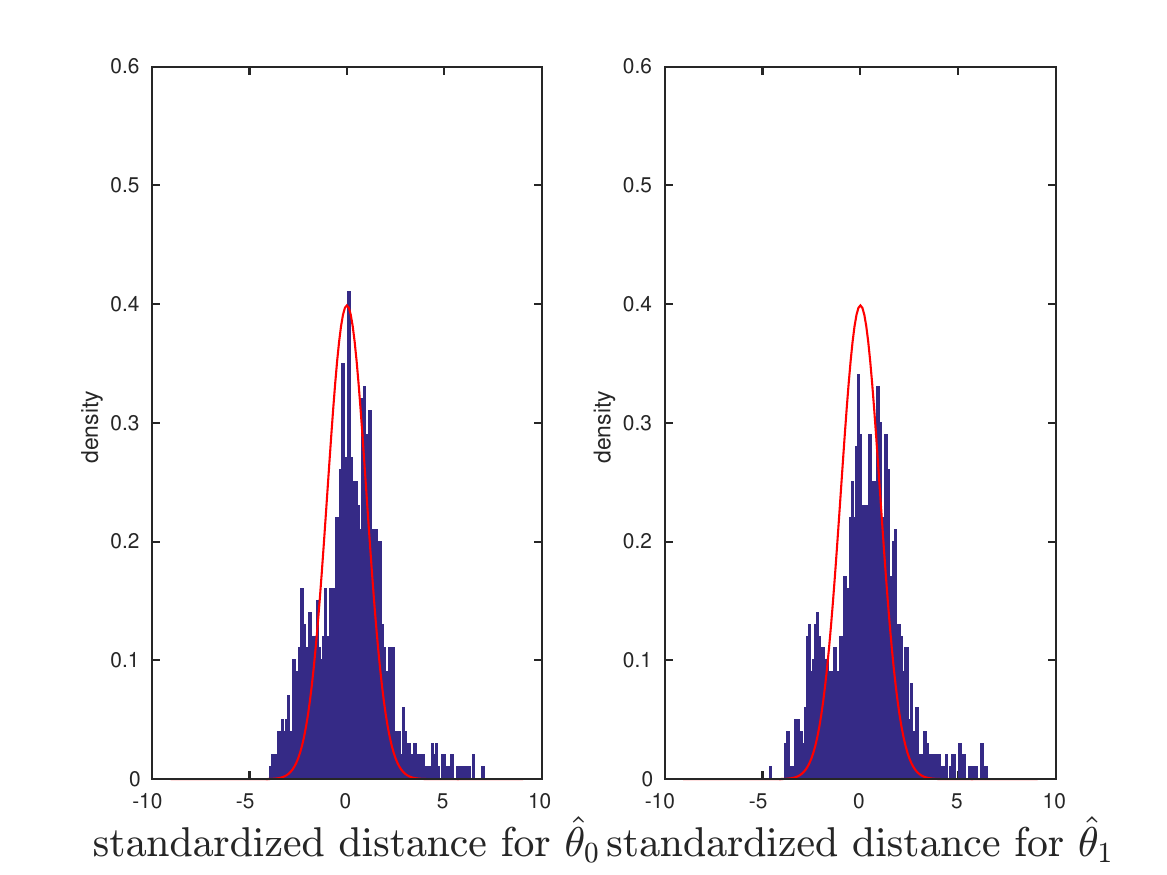} 
    \caption{Histograms of the normalized distance $\frac{\hat{\sqrt{T}(\theta}_i - \theta_i^*)}{\hat{V}_i}$ for $i=0,1$ at sample size 100} 
    \label{tail_100}
\end{figure}

\end{example}
%
%
%
%
%

\section{Burden Effect: Actor Critic Algorithm Uses $\lambda^*$}\label{sup_burden}

\begin{table}[H]
\centering
\begin{tabular} {c| ccccc}
\toprule
$\nu$ & $\lambda^*$ & $\theta_0^*$ & $\theta_1^*$ & $\theta_2^*$ & $\theta_3^*$ \\ \hline
0.0 & 0.06 & 0.3410 &   0.3269  &  0.3264   &    0 \\ \hline
0.2&0.05&0.0844&0.3844&0.4&-0.1609 \\ \hline
0.4&0.06&-0.1922&0.3547&0.3312&-0.2313 \\ \hline
0.6&0.08&-0.3312&0.2488&0.2234&-0.2687 \\ \hline
0.8&0.1&-0.3883&0.2078&0.2&-0.2687 \\ \hline
\bottomrule
\end{tabular}
\caption{Burden effect: the optimal policy and the oracle lambda.}
\label{burden_opt}
\end{table}

\begin{table}[H]
\centering
\begin{tabular} {c| *{4}{S[round-precision=3]}}
\toprule
{$\tau$} & {$\theta_0^*$} & {$\theta_1^*$} & {$\theta_2^*$} & {$\theta_3^*$} \\ \hline
0&-0.027352&-0.035565&-0.030344&0.003449\\\hline
0.2&0.22947&-0.092877&-0.10406&0.16421\\\hline
0.4&0.50586&-0.063199&-0.035223&0.23473\\\hline
0.6&0.64507&0.042695&0.072542&0.27198\\\hline
0.8&0.70229&0.083867&0.09608&0.2718\\\hline
\bottomrule
\end{tabular}
\caption{Burden effect: bias in estimating the optimal policy parameter at sample size 200. The algorithm uses $\lambda^*$ instead of learning $\lambda$ online. Bias=$\mathbb{E}(\hat{\theta}_t)-\theta^*$.}
\label{burden_b200_fix}
\end{table}

\begin{table}[H]
\centering
\begin{tabular} {c| *{4}{S[round-precision=3]}}
\toprule
{$\tau$} & {$\theta_0^*$} & {$\theta_1^*$} & {$\theta_2^*$} & {$\theta_3^*$} \\ \hline
0&0.057811&0.03716&0.036343&0.035898\\\hline
0.2&0.10961&0.044463&0.046192&0.062836\\\hline
0.4&0.31295&0.039819&0.03665&0.090984\\\hline
0.6&0.47309&0.037714&0.040625&0.10984\\\hline
0.8&0.55024&0.042799&0.04454&0.1097\\\hline
\bottomrule
\end{tabular}
\caption{Burden effect: MSE in estimating the optimal policy parameter at sample size 200. The algorithm uses $\lambda^*$ instead of learning $\lambda$ online.}
\label{burden_m200_fix}
\end{table}

\begin{table}[H]
\centering
\begin{tabular} {c| cccc}
\toprule
$\nu$ & $\theta_0$ & $\theta_1$ & $\theta_2$ & $\theta_3$ \\ \hline
0&0.963&0.963&0.955&0.942\\\hline
0.2&0.853*&0.946&0.937&0.862*\\\hline
0.4&0.565*&0.96&0.954&0.776*\\\hline
0.6&0.39*&0.937&0.916*&0.739*\\\hline
0.8&0.329*&0.908*&0.899*&0.739*\\\hline
\bottomrule
\end{tabular}
\caption{Burden effect: coverage rates of percentile-t bootstrap confidence intervals for the optimal policy parameter at sample size 200. The algorithm uses $\lambda^*$ instead of learning $\lambda$ online. Coverage rates significantly lower than $0.95$ are marked with asterisks (*).}
\label{burden_c200_fix}
\end{table}

\begin{table}[H]
\centering
\begin{tabular} {c| *{4}{S[round-precision=3]}}
\toprule
{$\tau$} & {$\theta_0^*$} & {$\theta_1^*$} & {$\theta_2^*$} & {$\theta_3^*$} \\ \hline
0.0 & -0.017692 & -0.013808 & -0.006068 & -0.008696 \\\hline
0.2&0.28829&-0.03135&-0.039795&0.14892\\\hline
0.4&0.51553&-0.041593&-0.010872&0.22259\\\hline
0.6&0.59124&0.005305&0.037288&0.26154\\\hline
0.8&0.60607&0.006205&0.020356&0.26263\\\hline
\bottomrule
\end{tabular}
\caption{Burden effect: bias in estimating the optimal policy parameter at sample size 500. The algorithm uses $\lambda^*$ instead of learning $\lambda$ online. Bias=$\mathbb{E}(\hat{\theta}_t)-\theta^*$. }
\label{burden_b500_fix}
\end{table}

\begin{table}[H]
\centering
\begin{tabular} {c| *{4}{S[round-precision=3]}}
\toprule
{$\tau$} & {$\theta_0^*$} & {$\theta_1^*$} & {$\theta_2^*$} & {$\theta_3^*$} \\ \hline
0.0 & 0.029022 & 0.016576 & 0.015445 & 0.016196\\ \hline
0.2&0.12073&0.022334&0.021348&0.042485\\\hline
0.4&0.29446&0.018117&0.015525&0.065667\\\hline
0.6&0.36697&0.011343&0.011526&0.078681\\\hline
0.8&0.37872&0.008136&0.007766&0.076209\\\hline
\bottomrule
\end{tabular}
\caption{Burden effect: MSE in estimating the optimal policy parameter at sample size 500. The algorithm uses $\lambda^*$ instead of learning $\lambda$ online.}
\label{burden_m500_fix}
\end{table}

\begin{table}[H]
\centering
\begin{tabular} {c| cccc}
\toprule
$\tau$ & $\theta_0$ & $\theta_1$ & $\theta_2$ & $\theta_3$ \\ \hline
0.0& 0.944 & 0.950  &0.952 & 0.933*\\\hline
0.2&0.689*&0.943&0.959&0.815*\\\hline
0.4&0.159*&0.944&0.954&0.6*\\\hline
0.6&0.006*&0.941&0.928*&0.295*\\\hline
0.8&0*&0.94&0.944&0.144*\\\hline
\bottomrule
\end{tabular}
\caption{Burden effect: coverage rates of percentile-t bootstrap confidence intervals for the optimal policy parameter at sample size 500. The algorithm uses $\lambda^*$ instead of learning $\lambda$ online. Coverage rates significantly lower than $0.95$ are marked with asterisks (*).}
\label{burden_c500_fix}
\end{table}


\begin{table}[H]
\centering
\begin{tabular} {c| *{4}{S[round-precision=3]}}
\toprule
{$\tau$} & {$\theta_0^*$} & {$\theta_1^*$} & {$\theta_2^*$} & {$\theta_3^*$} \\ \hline
0&0.392&0.3723&0.3713&-0.0006\\\hline
0.2&0.3921&0.3722&0.3713&-0.0006\\\hline
0.4&0.392&0.3723&0.3713&-0.0006\\\hline
0.6&0.392&0.3723&0.3713&-0.0006\\\hline
0.8&0.392&0.3723&0.3713&-0.0006\\\hline
\bottomrule
\end{tabular}
\caption{Burden effect: the myopic equilibrium policy.}
\label{burden_bandit_opt}
\end{table}

\section{Nonlinear Reward: The Optimal Policy}\label{sup_nonlin}

\begin{table}[H]
\centering
\begin{tabular} {c| *{4}{S[round-precision=3]}}
\toprule
{$\alpha$} & {$\theta_0^*$} & {$\theta_1^*$} & {$\theta_2^*$} & {$\theta_3^*$} \\ \hline
0&0.418035 & 0.395067 &  0.397071 & -0.001615\\\hline
0.2&0.496240 & 0.296973 & 0.385421 & 0.000480\\\hline
0.4&0.564503 & 0.202857 & 0.365239 & 0.001684\\\hline
0.6&0.811000 & 0.542000 & 0.888000 & 0.925000\\\hline
\bottomrule
\end{tabular}
\caption{Nonlinear reward: the optimal policy.}
\label{nonlin_opt}
\end{table}

\end{document}